%% file: arxiv.tex
\pdfoutput=1 
\documentclass{article} % For LaTeX2e
\usepackage{times}
\usepackage{iclr2027_conference}
\iclrfinalcopy

\usepackage[utf8]{inputenc}
\usepackage[T1]{fontenc}
\usepackage{url}
\usepackage{booktabs}
\usepackage{tabularx}
\usepackage{amsfonts}
\usepackage{nicefrac}
\usepackage{xcolor}

\usepackage{amsmath}
\usepackage{amssymb}
\usepackage{amsthm}
\usepackage{bm}
\usepackage{graphicx}
\usepackage{algorithm}
\usepackage{algorithmic}
\usepackage{multirow,placeins}

\usepackage{hyperref}

\input{math_commands.tex}

\newtheorem{theorem}{Theorem}
\newtheorem{lemma}{Lemma}
\newtheorem{definition}{Definition}
\newtheorem{proposition}{Proposition}
\newtheorem{assumption}{Assumption}
\theoremstyle{remark}

\title{On the Limits of Sparse Linear Model\\with Shuffled Differential Privacy}

\author{
Yan Zhang$^{1,2,6}$ \quad
Jiong Tuo$^{1,3,6}$ \quad
Meng Ding$^{4}$ \quad
Yuheng Ma$^{5}$ \quad
Di Wang$^{1,6}$\thanks{Corresponding author.} \\
$^{1}$Provable Responsible AI and Data Analytics  Lab \\
$^{2}$Nankai University \\
$^{3}$Xi'an Jiaotong University \\
$^{4}$University of Massachusetts Boston \\
$^{5}$East China Normal University \\
$^{6}$King Abdullah University of Science and Technology (KAUST) \\
}

\begin{document}
\maketitle
\lhead{Preprint}

\begin{abstract}
Sparse linear regression is a fundamental problem in high-dimensional statistics, and its private counterparts leave a large gap between the central and the local model of differential privacy. Shuffled differential privacy (SDP) sits between the two, offering accuracy closer to the central model without the trust requirement of a central curator. This paper studies sparse linear regression under both non-interactive and interactive SDP protocols. In the non-interactive setting, any non-interactive $(\epsilon,\delta)$-SDP protocol is shown to incur an $\ell_2$-norm estimation error of $\Omega( \frac{\sqrt{dk}}{n\epsilon} )$ with dimension $d$, sparsity $k$, and $n$ users, revealing an inherent dimensionality barrier. In addition, a non-interactive algorithm is proposed that achieves an $\ell_2$-norm upper bound of $\tilde{\mathcal{O}}( \sqrt{\frac{k}{n}} + \frac{d\sqrt{k}}{n\epsilon})$. To overcome this barrier, we turn to the interactive user-level setting, where each user holds $m$ samples. A two-round interactive algorithm is developed that first recovers the support through a heavy hitter step and then estimates the parameter, achieving an error of $\tilde{\mathcal{O}}( \sqrt{\frac{k}{nm}} + \frac{k}{n\sqrt{m}\epsilon} )$ with no polynomial dependence on $d$. Furthermore, by extending the score attack framework to user-level data, a new lower bound $\Omega( \sqrt{\frac{k\log(d/k)}{nm}} + \frac{k\log(d/k)}{n\sqrt{m}\epsilon} )$ is established for any user-level $(\epsilon,\delta)$-DP, showing that the proposed algorithm is near-optimal with only a gap of the factor $\log(d/k)$.
\end{abstract}

\section{Introduction}
Protecting data privacy has become a paramount concern in modern information and database systems, which often store highly sensitive personal records, and (sparse) linear models, a cornerstone of both statistics and machine learning, are routinely fitted to such data. Differential privacy (DP) has emerged as the gold standard for privacy-preserving data analysis~\citep{dwork2006calibrating, dwork2014algorithmic}, providing mathematically verifiable protection that remains resilient to attackers with arbitrary auxiliary information.

The central model of DP assumes the existence of a trusted curator who collects raw data from all individuals and then applies privacy-preserving mechanisms before releasing the results. However, in many distributed settings, users may not trust a central entity with their unencrypted sensitive information. To address this, local differential privacy (LDP) was introduced, shifting the privacy-preserving step to the user's side; each individual perturbs their own data locally before sending it to a server \citep{duchi2013local, duchi_privacy_2014, kasiviswanathan2011what,wang2023generalized,wang2020empirical}. Despite this stronger privacy guarantee, a critical issue with LDP is its limited utility, as the significant amount of noise required to protect each individual entry often leads to much lower statistical accuracy compared to the central model. Shuffled differential privacy (SDP) has recently emerged as a promising middle ground that simultaneously addresses these concerns \citep{cheu2019distributed, erlingsson2019amplification, balle2019privacy}. By introducing an intermediate shuffler that de-identifies and permutes the local messages before they reach the server, SDP can provide accuracy levels approaching those of the central model while maintaining a distributed architecture that does not require a trusted curator.

Private linear regression has been studied both as stochastic optimization, which minimizes the empirical or population risk, and as statistical estimation, which recovers the parameter $\theta^* \in \mathbb{R}^d$ of the linear model $y = \langle x, \theta^* \rangle + \zeta$ under the $\ell_2$-norm error \citep{kifer_private_2012, talwar_nearly_2015, cai2021cost}. We take the estimation view and focus on the high-dimensional sparse model, where $\theta^*$ is $k$-sparse with $k \ll d$ and $d$ may far exceed the sample size $n$~\citep{raskutti2011minimax, yang2014elementary, jain2014iterative}; here the main difficulty under privacy is to avoid an error that scales polynomially with $d$.

In recent years, there have been many studies of the DP sparse linear model under different privacy notations. Specifically, in the central DP model, the $l_2$-norm estimation error for $k$-sparse parameters scales logarithmically with the dimension $d$, achieving a nearly optimal rate of $\mathcal{O}(\sqrt{\frac{k \log d}{n}} + \frac{k \log d}{n\epsilon})$ \citep{cai2021cost}. In sharp contrast, the local DP model imposes a much heavier privacy cost, where the estimation error inevitably suffers from a polynomial dependency on $d$; as summarized in Table~\ref{tab:dp_regression_summary}, the improved analysis of sparse local-private regression shows that non-interactive and interactive LDP algorithms still obey strong dimension-dependent lower bounds \citep{zhu2023improved}. This creates a massive gap that makes LDP algorithms essentially ill-suited for modern high-dimensional scenarios where $d \gg n$. However, on the other side, for SDP, recent studies show that for many problems, such as histogram estimation~\cite{balcer2019separating}, their theoretical behaviors are more similar to the central DP, while for other problems, such as parity function learning~\cite{cheu2021limits}, the theoretical behaviors are more similar to local DP. Given the large gap on the limitations of sparse linear model between the two models, we have a natural question: does sparse linear regression in the shuffled model bridge the gap toward the efficiency of central DP, or does it still reflect the dimensionality barriers of local DP? 

\begin{table}[htbp]
\centering
\caption{Comparison summary of our work with related studies on $(\epsilon, \delta)$-DP linear regression under sub-Gaussian data assumptions. Here $n$, $m$, $k$, and $d$ denote the number of users, samples per user, sparsity, and dimension. Note that we can regard $m=1$ when the privacy unit is \textit{sample}. All bounds are measured by the $\ell_2$-norm estimation error. The $\tilde{\mathcal{O}}$ notation suppresses polylogarithmic factors. The full table can be seen in the Appendix \ref{app:detailed_comparison}.}
\label{tab:dp_regression_summary}
\small
\setlength{\tabcolsep}{2pt}
\setlength{\extrarowheight}{2pt}
\renewcommand{\arraystretch}{1.85}
\begin{tabular}{@{}lllllll@{}}
\hline
\textbf{Model} & \textbf{Type} & \textbf{Unit} & \textbf{Setting} & \textbf{Upper Bound} & \textbf{Lower Bound} & \textbf{Source} \\ \hline
\multirow{2}{*}{Central} & \multirow{2}{*}{-} & sample & general & $\tilde{\mathcal{O}}(\sqrt{\frac{d}{n}}+\frac{d}{n\epsilon})$ & $\Omega(\sqrt{\frac{d}{n}}+\frac{d}{n\epsilon})$ & \cite{cai2021cost} \\ \cline{3-7} 
 &  & sample & k-sparse & $\tilde{\mathcal{O}}(\sqrt{\frac{k}{n}}+\frac{k}{n\epsilon})$ & $\Omega(\sqrt{\frac{k\log d}{n}}+\frac{k\log d}{n\epsilon})$ & \cite{cai2021cost} \\ \hline
\multirow{3}{*}{Local} & Non-interactive & sample & k-sparse & $\tilde{\mathcal{O}}( \frac{d \sqrt{k}}{\sqrt{n}\epsilon})$ & $\Omega(\sqrt{\frac{dk \log d}{n\epsilon^2}})$ & \cite{zhu2023improved} \\ \cline{2-7} 
 & \multirow{2}{*}{Interactive} & sample & k-sparse & $\tilde{\mathcal{O}}(\frac{k\sqrt{d}}{\sqrt{n}\epsilon})$ & $\Omega(\sqrt{\frac{dk}{n\epsilon^2}})$ & \cite{zhu2023improved} \\ \cline{3-7} 
 &  & user-level & k-sparse & $\tilde{\mathcal{O}}( \sqrt{\frac{k}{nm}} + \frac{k}{\sqrt{nm} \epsilon} )$ & $\Omega( \frac{k}{\sqrt{n m}\epsilon} )$ & \cite{ma2024better} \\ \hline
\multirow{2}{*}{Shuffle} & Non-interactive & sample & k-sparse & $\tilde{\mathcal{O}}( \sqrt{\frac{k}{n}} + \frac{d \sqrt{k}}{n \epsilon} )$ & $\Omega( \frac{\sqrt{dk}}{n\epsilon} )$ & \multirow{2}{*}{\textbf{Our Work}} \\ \cline{2-6} 
 & Interactive & user-level & k-sparse & $\tilde{\mathcal{O}}( \sqrt{\frac{k}{n m}} + \frac{k}{n \sqrt{m} \epsilon} )$ & $\Omega( \sqrt{\frac{k \log(d/k)}{n m}} + \frac{k \log(d/k)}{n\sqrt{m}\epsilon} )$ &  \\ \hline
\end{tabular}
\end{table}

In this work, we provide a rigorous view of sparse linear regression under SDP. Our contributions are three-fold.
\begin{itemize}
    \item We first study the non-interactive protocol. By converting robust $(\epsilon,\delta)$-SDP to pan-privacy, we establish a lower bound of $\Omega( \frac{\sqrt{dk}}{n\epsilon} )$ on the estimation error measured by the $\ell_2$-norm difference, revealing an inherent dimensionality barrier, which is more similar to LDP. We also propose an $(\epsilon,\delta)$-SDP algorithm, which achieves an upper bound of $\tilde{\mathcal{O}}( \sqrt{\frac{k}{n}} + \frac{d \sqrt{k}}{n \epsilon} )$, characterizing the limits of non-interactive SDP.
    \item We then study how we can overcome the theoretical barrier with relaxed settings. Specifically, we consider the interactive and user-level SDP protocol, where each user has $m$ samples. We develop a two-round interactive algorithm, U2R (User-level Two-Round Regression), that achieves an upper bound of $\tilde{\mathcal{O}}( \sqrt{\frac{k}{n m}} + \frac{k}{n \sqrt{m} \epsilon} )$. Moreover, we develop a novel user-level score attack method and show a lower bound of $\Omega\Big(\sqrt{\frac{k\log(d/k)}{nm}} + \frac{k\log(d/k)}{n\sqrt{m}\epsilon} \Big)$ for user-level $(\epsilon,\delta)$-DP. The two rates differ only by a factor of $\log(d/k)$, indicating that our upper bound is near-optimal and that, with interaction and user-level data, the problem is more closely related to central DP.
    \item We evaluate U2R on synthetic and real data. Across the privacy budgets $\epsilon$, dimension $d$, and user counts $n$, it has the lowest coefficient error and MSE among the private methods.
\end{itemize}

\section{Related Work}
Private high-dimensional regression has been studied across three main privacy models. We provide a comprehensive review in Appendix \ref{app:detailed_comparison}.

In the central model, sparse estimation typically incurs only a logarithmic dependence on the dimension $d$ \citep{talwar_nearly_2015, cai2021cost}, and the score attack of \citet{cai2023score} gives the lower-bound technique that we extend to the user-level regime. The local model is more restrictive, with both non-interactive and interactive estimators exhibiting polynomial dependence on $d$ \citep{wang2020sparse, zhu2023improved}; user-level multi-sample regimes alleviate these costs \citep{ma2024better}. Positioned between the two, the shuffled model leverages privacy amplification \citep{cheu2019distributed, erlingsson2019amplification, balle2019privacy}, while multi-message protocols compute a sum with central-model accuracy \citep{balle2020private}, which is the summation step both our algorithms use. On the negative side, \citet{cheu2021limits} relate shuffled DP to pan-privacy, the route our non-interactive lower bound follows. Shuffled protocols have been studied for primitives such as heavy hitters \citep{bun2019heavy}, but their application to sparse linear regression remains largely open, which is the gap we fill.

%While LDP and ULDP offer decentralized trust, they still inherently sacrifice utility compared to the central model. The Shuffled Differential Privacy (SDP) model has recently emerged as a powerful intermediate framework, utilizing an anonymous shuffler to amplify privacy guarantees and narrow the utility gap between local and central DP \cite{cheu2021limits}. Despite rapid progress in standard parameter estimation under SDP, the fundamental limits of high-dimensional sparse linear regression in the shuffle model remain largely under-explored. In this paper, we bridge this critical gap by establishing the statistical limits of sparse linear models under SDP......

%\di{For this section, you can just use ICLR'24 and Yuheng's paper. Use GPT to rewrite it is fine. Whole section will be about 0.5-1-page. If there is no space, leave additional related work in Appendix}

\section{Preliminaries}
\label{sec:Preliminaries}

% \di{Notations, one subsection for (DP, SDP, non-interactive SDP), one subsection for problem setup (with necesary assumptions). Follow ICLR'24 is fine. } 

This section introduces the notations, the models, and the problem setting used
throughout the paper.

\noindent\textbf{Notations.}
For an integer $d$, let $[d] = \{1, \dots, d\}$ and let $\mathbb{R}^d$ be the $d$-dimensional
Euclidean space. For a vector $v \in \mathbb{R}^d$, $\|v\|_p$ is the $\ell_p$-norm, and
$\|v\|_0$ is the number of non-zero entries. The support of $\theta$ is
$\mathrm{supp}(\theta) = \{j \in [d] : \theta_j \neq 0\}$, and for a set $S \subseteq [d]$
we write $\theta_S \in \mathbb{R}^{|S|}$ for the restriction of $\theta$ to the coordinates
in $S$. Given a matrix $X \in \mathbb{R}^{n \times d}$, let $x_i^\top$ be its $i$-th row and
$x_{ij}$ (or $[X]_{ij}$) be its $(i,j)$-th entry, which is also the $j$-th element of $x_i$.
We use $\|X\|_2$ for the spectral norm and
$\|X\|_{\infty,\infty} = \max_{i,j} |x_{ij}|$ for the max norm, and $\lambda_{\min}(X)$ for
the minimal eigenvalue of $X$. For an event $A$, $\mathbb{I}[A]$ denotes the indicator, i.e.,
$\mathbb{I}[A] = 1$ if $A$ occurs and $\mathbb{I}[A] = 0$ otherwise. A zero-mean random
variable $X$ is sub-Gaussian if its sub-Gaussian norm
$\|X\|_{\psi_2} = \inf\{c > 0 : \mathbb{E}[\exp(X^2/c^2)] \le 2\}$ is bounded; a random vector
$x$ is sub-Gaussian if $\langle x, u\rangle$ is sub-Gaussian for every unit vector $u$. We use
$O(\cdot)$ and $\Omega(\cdot)$ in the standard sense, and $\widetilde{O}(\cdot)$ hides
logarithmic factors.

\subsection{Differential Privacy}
\label{subsec:dp}
\begin{definition}[Differential Privacy \citep{dwork2006calibrating}]
\label{def:dp}
A randomized algorithm $\mathcal{M}$ with domain $\mathcal{D}^n$ is
$(\epsilon, \delta)$-differentially private (DP) if for all measurable sets
$\mathcal{S} \subseteq \mathrm{Range}(\mathcal{M})$ and for any two neighboring datasets
$D, D' \in \mathcal{D}^n$ that differ in exactly one record, it holds that
\begin{equation}
    \mathbb{P}[\mathcal{M}(D) \in \mathcal{S}] \le e^\epsilon \cdot
    \mathbb{P}[\mathcal{M}(D') \in \mathcal{S}] + \delta .
\end{equation}
\end{definition}

Definition~\ref{def:dp} is stated with respect to a neighboring relation on $\mathcal{D}^n$.
In the sample-level regime a record is a single pair $(x_i, y_i)$, so $D$ and $D'$ differ in
one sample. In the user-level regime a record is the whole local dataset $U_i$ of user $i$,
so $D$ and $D'$ differ in all $m$ samples held by one user. All our user-level theorems use the latter relation.

The central model assumes the curator sees the raw data. The shuffled model removes this
assumption, and instead places a trusted shuffler between the users and the data collector.

\begin{definition}[Local Randomizer, Shuffler, and Analyzer]
\label{def:protocol}
A protocol in the shuffled model is a triple $\mathcal{P} = (\mathcal{R}, \mathcal{S},
\mathcal{A})$, where the local randomizer $\mathcal{R} : \mathcal{D} \to \mathcal{Y}^t$ maps
one record to $t$ messages, the shuffler $\mathcal{S} : \mathcal{Y}^{nt} \to \mathcal{Y}^{nt}$
applies a uniformly random permutation to the messages it receives, and the analyzer
$\mathcal{A} : \mathcal{Y}^{nt} \to \mathcal{O}$ maps the shuffled messages to the output of
the protocol. On a dataset $D = (z_1, \dots, z_n)$, the view of the data collector is
\begin{equation}
    \mathcal{V}_{\mathcal{P}}(D) = \mathcal{S}\big(\mathcal{R}(z_1), \dots,
    \mathcal{R}(z_n)\big),
\end{equation}
and the output of the protocol is
$\mathcal{M}_{\mathcal{P}}(D) = \mathcal{A}(\mathcal{V}_{\mathcal{P}}(D))$.
\end{definition}

In the user-level regime, the record of user $i$ is $U_i \in \mathcal{D}^m$, so the local
randomizer is $\mathcal{R} : \mathcal{D}^m \to \mathcal{Y}^t$ and the view is
$\mathcal{V}_{\mathcal{P}}(D) = \mathcal{S}\big(\mathcal{R}(U_1), \dots, \mathcal{R}(U_n)\big)$.

\begin{definition}[Shuffled Differential Privacy \citep{cheu2019distributed, balle2019privacy}]
\label{def:sdp}
A protocol $\mathcal{P} = (\mathcal{R}, \mathcal{S}, \mathcal{A})$ satisfies
$(\epsilon, \delta)$-shuffled DP (SDP) if the view $\mathcal{V}_{\mathcal{P}}$ satisfies
Definition~\ref{def:dp}, i.e., for any two neighboring datasets $D, D'$ and any measurable
event $\mathcal{E} \subseteq \mathcal{Y}^{nt}$,
\begin{equation}
    \mathbb{P}[\mathcal{V}_{\mathcal{P}}(D) \in \mathcal{E}] \le e^{\epsilon} \cdot
    \mathbb{P}[\mathcal{V}_{\mathcal{P}}(D') \in \mathcal{E}] + \delta .
\end{equation}
\end{definition}

Definition~\ref{def:sdp} implies that $\mathcal{M}_{\mathcal{P}}$ is
$(\epsilon,\delta)$-DP in the sense of Definition~\ref{def:dp}.

\begin{definition}[Non-interactive and Interactive protocol]
\label{def:interaction}
A protocol is non-interactive if every user applies $\mathcal{R}$ to their own record once and
sends the resulting messages in a single round, so that $\mathcal{R}$ does not depend on any
other user's messages. A protocol is interactive if it proceeds in $T > 1$ rounds, where the
randomizer used by a user in round $t$ may depend on the aggregated output released by the
analyzer in rounds $1, \dots, t-1$.
\end{definition}

\subsection{Problem Setup}
\label{subsec:setup}

We consider the setting of sparse linear regression. Let
$\mathcal{D} = \mathcal{X} \times \mathcal{Y} \subseteq \mathbb{R}^d \times \mathbb{R}$ be
the data universe and let there be $n$ users in the population. We distinguish two data
regimes. In the sample-level regime, each user $i \in [n]$ holds one pair
$(x_i, y_i) \in \mathcal{X} \times \mathcal{Y}$. In the user-level regime, each user $i$
holds $m$ i.i.d. samples $U_i = \{(x_{i,j}, y_{i,j})\}_{j=1}^{m}$, which we also write in
matrix form as $(X_i, y_i) \in \mathbb{R}^{m \times d} \times \mathbb{R}^{m}$; the total
number of samples is then $N = nm$.

%The two assumptions below are used throughout the paper. The conditions needed by one section only are stated in that section.

\begin{assumption}[Sparse Linear Model]
\label{asm:model}
The samples are i.i.d. realizations of the sparse linear regression model
$y = \langle \theta^*, x \rangle + \zeta$, where the distribution of $x$ has mean zero, the
noise $\zeta$ satisfies $\mathbb{E}[\zeta \mid x] = 0$ and is sub-Gaussian with variance proxy $\sigma^2$, and the underlying parameter lies in
$\Theta_k = \{\theta \in \mathbb{R}^d : \|\theta\|_0 \le k\}$ with $k \le d$ and
$\|\theta^*\|_2 = \mathcal{O}(1)$. In the user-level regime, the samples are i.i.d. both
within and across users.
\end{assumption}

\begin{assumption}[Covariance Structure]
\label{asm:cov}
Let $\Sigma = \mathbb{E}[x x^\top]$ be the population covariance matrix. There is a constant
$\kappa_x > 0$ such that $\|\Sigma^{-1/2} x\|_{\psi_2} \le \kappa_x$.
\end{assumption}

\section{Understanding the Limitations in Non-interactive SDP}
\label{sec:non SDP}

In this section, we characterize sparse linear regression under non-interactive shuffled DP. We
first establish a minimax lower bound on the $\ell_2$-norm estimation error, and then give an
algorithm whose bound is decent.

Throughout this section the parameter space is
$\widetilde{\Theta}_k = \Theta_k \cap \{\theta \in \mathbb{R}^d : \|\theta\|_1 \le 1\}$. The
$\ell_1$ normalization strengthens the $\ell_2$ bound of Assumption~\ref{asm:model} and is what
keeps the sensitivity of a single record under control once each user privatizes their own
statistics; it is not needed in Section~\ref{sec:user-level SDP}. We also use the following
conditions, which are used in this section only.

\begin{assumption}[Curvature and Boundedness]
\label{asm:curv_bounded}
There is a constant $\kappa_\infty > 0$ such that
$\|\Sigma w\|_\infty \ge \kappa_\infty \|w\|_\infty$ for any $w \neq 0$, and there are
constants $c_x, c_y = O(1)$ such that $\|x\|_\infty \le c_x$ and $|y| \le c_y$ almost surely.
\end{assumption}

The first condition is the $\ell_\infty$ curvature used by the soft-thresholding step, and the
second lets each record be clipped at a known scale before it is privatized.

\subsection{Lower Bound}
\label{subsec:non_lower}

We begin with the fundamental limits of the non-interactive setting. The standard route to a
minimax lower bound under local DP does not apply here. In the local model each message depends
on a single record, so the KL divergence tensorizes,
$D_{\mathrm{KL}}(P^n \| U^n) = n \cdot D_{\mathrm{KL}}(P \| U)$, and private Fano's inequality or
a direct mutual information bound can be used. In the shuffled model the messages of the
$n$ users are permuted before the analyzer sees them, and the mutual information between
the input and the view $\mathcal{V}_{\mathcal{P}}$ can be unbounded, so these arguments do not
carry over. To solve the problem, we introduce pan-privacy, which constrains the internal state
of an algorithm rather than the messages of individual users.

\begin{definition}[Pan-Privacy \citep{cheu2021limits}]
\label{def:pan_privacy}
An algorithm $M$ processes a stream $Z = (z_1, \dots, z_n)$ one record at a time: it
updates an internal state from $z_i$ and the previous state, and computes an output from the
final state. Write $M_I(Z_{\le t})$ for the internal state after the first $t$ records and
$M_O(M_I(Z))$ for the output. Then $M$ is $(\epsilon, \delta)$-pan-private if for every pair of
streams $Z, Z'$ differing in one record, every $t \in [n]$, and every measurable set $T$ of
state-output pairs,
\begin{equation}
    \mathbb{P}\big[\big(M_I(Z_{\le t}), M_O(M_I(Z))\big) \in T\big] \le
    e^{\epsilon} \cdot \mathbb{P}\big[\big(M_I(Z'_{\le t}), M_O(M_I(Z'))\big) \in T\big]
    + \delta .
\end{equation}
\end{definition}
Pan-privacy protects the record against an adversary that observes the output together with the
internal state at one point in the stream. \citet{cheu2021limits} showed that any algorithm
satisfying robust $(\epsilon, \delta)$-shuffled DP can be simulated by a
$(\epsilon, \delta)$-pan-private algorithm on a constant fraction of the users; here, robustness
means that the protocol remains $(\epsilon, \delta)$-shuffled DP as long as a constant fraction
of the users follow it, a condition met by all protocols considered in this work. Therefore, to obtain the utility lower bound of SDP algorithms, it is enough to bound the error of pan-private algorithms.

The lower bound is proved on the following family of hard instances, which specifies a
distribution over the sparse parameter $\theta^*$ and over the data $(x, y)$.

\begin{definition}[Hard Distribution]
\label{def:hard_distribution}
Let $\nu$ be the target $\ell_2$ error and let $\gamma = \nu / \sqrt{k}$. Define
$\theta_Z = \gamma Z$ with $Z \in \{+1, 0, -1\}^d$, where each coordinate $Z_j$ is drawn
independently according to
\begin{equation}
    \mathbb{P}\{Z_j = +1\} = \frac{k}{4d}, \quad
    \mathbb{P}\{Z_j = -1\} = \frac{k}{4d}, \quad
    \mathbb{P}\{Z_j = 0\} = 1 - \frac{k}{2d}.
\end{equation}
Given $\theta_Z$, a sample $(x, y) \in \{-1, 1\}^d \times \{-1, 1\}$ is generated by drawing $x$
uniformly from the hypercube, so that $\mathbb{P}(x) = 2^{-d}$, and setting
$y = \langle x, \theta_Z \rangle + \zeta$ with
\begin{equation}
    \mathbb{P}\{\zeta = 1 - \langle x, \theta_Z \rangle\}
    = \frac{1 + \langle x, \theta_Z \rangle}{2}, \quad
    \mathbb{P}\{\zeta = -1 - \langle x, \theta_Z \rangle\}
    = \frac{1 - \langle x, \theta_Z \rangle}{2}.
\end{equation}
\end{definition}

The expected sparsity of $\theta_Z$ is $k/2$, and Markov's inequality gives $\|\theta_Z\|_0 \le k$ with probability at least $1/2$; under $\nu \le 1/\sqrt{k}$ we also have
$\|\theta_Z\|_1 \le 1$. The construction gives $y \in \{-1, 1\}$ and $\mathbb{E}[\zeta \mid x] = 0$, and since $|\langle x, \theta_Z \rangle| \le 1$, the noise
satisfies $|\zeta| \le 2$ almost surely, so the instances lie in the model class of
Assumptions~\ref{asm:model}--\ref{asm:curv_bounded} with parameter in $\widetilde{\Theta}_k$.

In Appendix~\ref{app:non_lower} we bound the sum over the $d$ coordinates of the total variation distance between the two conditional distributions of the output given $Z_i = +1$ and $Z_i = -1$, which gives the following result.

\begin{theorem}
\label{thm:shuffled_dp_lower}
Let $\mathcal{M}_{\epsilon, \delta}^{\mathrm{SDP}}$ be the family of all
non-interactive protocols that satisfy robust $(\epsilon, \delta)$-shuffled DP with
$0 < \epsilon \le \mathcal{O}(1)$ and $\delta = o(1/n)$. Let $Z = \{(x_i, y_i)\}_{i=1}^n$ consist
of $n$ i.i.d. samples from the sparse linear model of Assumption~\ref{asm:model} with parameter
$\theta^*$. Then we have 
\begin{equation}
    \inf_{\mathcal{A} \in \mathcal{M}_{\epsilon, \delta}^{\mathrm{SDP}}}
    \sup_{\theta^* \in \widetilde{\Theta}_k}
    \mathbb{E}\big[\|\mathcal{A}(Z) - \theta^*\|_2\big] \ge \Omega( \frac{\sqrt{dk}}{n \epsilon}),
\end{equation}
where the expectation is over the data and the randomness of the protocol.
\end{theorem}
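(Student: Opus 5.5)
We will reduce to pan-privacy and then run an Assouad-type argument on the hard distribution of Definition~\ref{def:hard_distribution}. By the simulation result of \citet{cheu2021limits}, any robust non-interactive $(\epsilon,\delta)$-SDP protocol yields an $(\epsilon,\delta)$-pan-private algorithm $M$ that runs on a constant fraction $n' = \Theta(n)$ of the users with the same output distribution up to constants. So it suffices to prove the lower bound for pan-private $M$ on $n'$ i.i.d. samples. Let $\hat\theta$ be its output. On the event $\|\theta_Z\|_0\le k$, which has probability at least $1/2$, we have $\theta_Z\in\widetilde\Theta_k$ whenever $\nu\le 1/\sqrt{k}$. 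The error satisfies
\[
\mathbb{E}\|\hat\theta-\theta_Z\|_2^2 \ge \gamma^2\sum_{j}\mathbb{P}\big(\mathrm{sign}(\hat\theta_j)\ne Z_j,\ Z_j\neq 0\big).
\]
By the standard Assouad/Le Cam reduction, each term is at least $\frac{k}{4d}\,(1-\mathrm{TV}(P_{+j},P_{-j}))$, where $P_{\pm j}$ is the law of the output conditioned on $Z_j=\pm1$ with the other coordinates averaged over the prior. It therefore suffices to show
\[
\sum_{j=1}^d \mathrm{TV}(P_{+j},P_{-j}) \lesssim \frac{n\epsilon\,\gamma\, d}{\sqrt{d}}\cdot c,
\]
that is, an average per-coordinate TV of order $n\epsilon\gamma/\sqrt{d}\cdot \sqrt{k}/\sqrt{k}$. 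Once this holds, choosing $\gamma\asymp \sqrt{d}/(n\epsilon)$ makes the average TV at most $1/2$. This gives $\mathbb{E}\|\hat\theta-\theta\|_2^2\gtrsim k\gamma^2$, hence $\nu\asymp\sqrt{k}\gamma=\sqrt{dk}/(n\epsilon)$. The condition $\nu\le 1/\sqrt{k}$ is met in the nontrivial regime, and Jensen converts the squared bound to the $\ell_2$ bound.

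The core step is the TV bound for pan-private algorithms. We follow the technique of \citet{cheu2021limits} for learning parities and sums under pan-privacy. Consider the internal state at time $t\approx n'/2$. Pan-privacy means the state after any prefix is $(\epsilon,\delta)$-DP with respect to each record in that prefix. This allows a comparison with a stream in which the prefix records are replaced by uniform records $U$, with $y$ independent of $x$. The data distribution is $P_\theta(x,y)=2^{-d}\frac{1+y\langle x,\theta\rangle}{2}$, so it is a linear perturbation of the uniform $U$. The likelihood ratio for one sample is $1+y\langle x,\theta_Z\rangle$, and conditioning on $Z_j=\pm1$ changes it by $\pm\gamma y x_j$. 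A DP mechanism's output changes under such a perturbation only through $\epsilon$ times the correlation with the directions $y x_j$, as in the strong data processing argument for LDP (\citet{duchi2013local}). Summing over $j$, the orthogonality of the characters $y x_j$ on the hypercube gives a Bessel-type inequality: $\sum_j(\mathbb{E}[f\,y x_j])^2\le \mathrm{Var}_U(f)$. By Cauchy--Schwarz, $\sum_j \mathrm{TV}_j \le \sqrt{d}\,(\sum_j \mathrm{TV}_j^2)^{1/2}$, so we need $\sum_j\mathrm{TV}_j^2\lesssim (n\epsilon\gamma)^2$. The structure of pan-privacy supplies this. Write the state-output pair as a function whose dependence on each record is $e^\epsilon$-bounded. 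Then decompose the squared Hellinger or $\chi^2$ contribution record by record, via a hybrid argument over the $n'$ records. Each record contributes at most $O(\epsilon^2\gamma^2)$ summed over the directions $j$. The $\delta=o(1/n)$ term contributes $n\delta=o(1)$ and is absorbed.

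We expect the main obstacle to be this last step: obtaining an $n$-linear (rather than $\sqrt{n}$) total dependence via a hybrid that interleaves the pan-private state with the remaining stream. The output is not a local function of one record, so the per-record orthogonality argument has to be applied to the state conditioned on the other records. This is exactly where the pan-privacy state at a fixed time $t$ replaces the independence of local messages. We would first try to emulate the proof of \citet{cheu2021limits}'s parity/mean-estimation lower bound. That proof bounds the TV between the state under $P$ and under $U$ by $O(\epsilon)\cdot$(per-sample deviation), accumulates this over $n$ steps, and then uses the fact that the final output is post-processing of the state together with fresh samples.
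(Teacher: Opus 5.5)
Your outer structure is the same as the paper's: reduction to pan-privacy, Assouad over the hard prior, and orthonormality of $\chi_i(x,y)=yx_i$ plus Cauchy--Schwarz to produce the $\sqrt d$. The gap is the step that carries the theorem, namely $\sum_i d_{TV}\lesssim \gamma n'\sqrt d\,\epsilon$ for a pan-private algorithm. You leave it open, and the mechanism you sketch for it would fail.

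Decomposing squared Hellinger or $\chi^2$ record by record, with $O(\epsilon^2\gamma^2)$ per record, is the local-model tensorization argument. It needs the view to be a product of per-record messages, and a pan-private state is not. If it held, you would get $\sum_i \mathrm{TV}_i^2\lesssim n\epsilon^2\gamma^2$ rather than $(n\epsilon\gamma)^2$, which is the local rate $\sqrt{dk}/(\sqrt n\,\epsilon)$. That rate is false here: already for $d=1$, a pan-private counter with initial and final Laplace noise estimates the mean of $yx_1$ to accuracy $1/\sqrt n+1/(n\epsilon)$.

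The missing argument is a first-order hybrid over all $n'$ positions, as in the paper:
\begin{itemize}
\item In $H_j$ the first $j$ records come from $P_z$ and the rest from $P_{z^{i,b}}$. The prefix law therefore does not depend on $i$ or $b$, and the output is a post-processing of $(S_j,X_{>j})$ with $X_{>j}$ independent.
\item For a fixed prefix, $x\mapsto S_j$ is $(\epsilon,\delta)$-DP. You must replace it by a pure $(2\epsilon,0)$ kernel $q'_j$ at TV cost $\delta$ (Lemma~3.7 of Cheu and Ullman). This gives the pointwise bound $|h_{j,s}|\le e^{2\epsilon}-1$; approximate DP gives no $L_2(U)$ control, so Bessel cannot be applied without it.
\item Exact linearity, $P_{z^{i,b}}-P_z=b\gamma\chi_iU$, gives the identity $d_{TV}(q'_j\circ P_{z^{i,b}},q'_j\circ P_z)=\frac{\gamma}{2}\mathbb{E}_{s\sim\bar q'_j}|\hat h_{j,s}(i)|$ with no second-order term.
\item Bessel and Cauchy--Schwarz are applied to the single function $h_{j,s}$ for each fixed $(j,x_{<j},s)$, giving $\sum_i|\hat h_{j,s}(i)|\le\sqrt d\,(e^{2\epsilon}-1)$ before any averaging. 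The factor $n'$ then comes from the triangle inequality over the steps, not from a squared divergence.
\end{itemize}
This requires a background $z$ common to all coordinates. Your $P_{\pm j}$ average over $Z_{-j}$, which differs with $j$, so you also need the paper's change of measure: $(Z_{-i},0)=Z$ on $\{Z_i=0\}$, at cost $1/\mathbb{P}(Z_i=0)\le 2$.

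Two smaller steps also need repair. First, Jensen goes the wrong way: $\mathbb{E}\|\hat\theta-\theta\|_2\le\sqrt{\mathbb{E}\|\hat\theta-\theta\|_2^2}$, so a lower bound on the second moment says nothing about the first. The paper restricts to $G_i=\{\|Z_{-i}\|_0\le k-1\}$, which is independent of $Z_i$ and also ensures membership in $\widetilde\Theta_k$. The number $N$ of wrong-sign coordinates is then at most $k$, and $\|\hat\theta-\theta_Z\|_2\ge\gamma\sqrt N\ge\gamma N/\sqrt k$ is linear in $N$. Second, the Cheu--Ullman simulation does not preserve the output distribution up to constants. The pan-private algorithm on $P^{n'}$ matches the protocol on the mixture $\frac29P+\frac79U$. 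You need $\frac29Q_\theta+\frac79U=Q_{2\theta/9}$, which follows from linearity in $\theta$, and then a rescaling of the output by $9/2$ after projecting onto the unit $\ell_1$ ball to transfer the bound.
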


Central DP admits the rate
$\Omega\big(\sqrt{k \log d / n} + k \log d / (n\epsilon)\big)$, while non-interactive local DP
obeys $\Omega\big(\sqrt{dk \log d / (n\epsilon^2)}\big)$ \citep{cai2021cost, zhu2023improved}.
Theorem~\ref{thm:shuffled_dp_lower} places non-interactive SDP between the two. Written as
$\sqrt{dk}/(n\epsilon)$ against $\sqrt{dk \log d}/(\sqrt{n}\epsilon)$, the shuffled bound is
smaller than the local one by a factor $\sqrt{n \log d}$, which is the gain from shuffling. The
factor $\sqrt{d}$, however, is unchanged. This is the difference from the central model, where
the privacy noise is added once to an aggregate statistic and the dimension enters only through
the $\log d$ cost of locating the support; in the local and shuffled models each user randomizes
their own record, and shuffling rescales the resulting per-coordinate noise without removing it.
Hence the error of any non-interactive SDP protocol carries a polynomial dependence on $d$, whatever the local randomizer or analyzer, which is the primary bottleneck when $d \gg n$. We next give an algorithm that achieves a nontrivial rate.

\subsection{Upper Bound}
\label{subsec:non_upper}

The sparse estimation is actually an $\ell_0$-constrained least squares problem, and this is the fundamental challenge when proposing an efficient non-interactive SDP algorithm. To control local noise scales, we therefore relax the $\ell_0$ constraint
to an $\ell_1$-regularized problem, whose solution can be written in closed form under Assumption~\ref{asm:curv_bounded}. While this relaxation will link the error to the dimension $d$, the shuffler narrows this gap toward central DP. Then we can obtain the solution from the aggregated statistics by solving the normal equation and applying soft-thresholding.

Our non-interactive SDP algorithm is given as Algorithm~\ref{alg:shuffled_sparse_regression} in
Appendix~\ref{app:algorithms}. We assume the length $D = d(d+1)/2 + d = \Theta(d^2)$.

\begin{theorem}
\label{thm:non_interactive_upper}
Let $\theta^* \in \widetilde{\Theta}_k$ and Assumptions~\ref{asm:model}--\ref{asm:curv_bounded} hold,
with $0 < \epsilon \le \mathcal{O}(1)$ and the parameters of
Algorithm~\ref{alg:shuffled_sparse_regression} set as in Appendix~\ref{app:algorithms}. If
\begin{equation}
\label{eq:thm2_condition}
    n \ge \Omega\left( \max\left\{ \frac{d^2 \log n \sqrt{\log(d/\delta) \log d}}{\epsilon \kappa_\infty}, \;
    \frac{d^2 \log d}{\kappa_\infty^2}, \;
    \frac{D \log(D/\delta)}{\epsilon^2} \right\} \right) ,
\end{equation}
then Algorithm~\ref{alg:shuffled_sparse_regression} satisfies  $(\epsilon, \delta)$-shuffled DP, and with probability at least $1 - \mathcal{O}(d^{-2})$ its output satisfies
\begin{equation}
\|\hat{\theta} - \theta^*\|_2 \le \widetilde{\mathcal{O}}\left( \sqrt{\frac{k}{n}} + \frac{d \sqrt{k}}{n \epsilon} \right).\end{equation}
\end{theorem}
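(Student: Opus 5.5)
We have not seen Algorithm~\ref{alg:shuffled_sparse_regression}, but the description above pins down its form. Each user forms the vector of sufficient statistics $(\mathrm{vech}(x_ix_i^\top), x_iy_i)\in\mathbb{R}^D$, whose entries are bounded by $c_x^2$ and $c_xc_y$ under Assumption~\ref{asm:curv_bounded}. These vectors are aggregated with a multi-message shuffled summation protocol \citep{balle2020private}, run coordinatewise or as a vector sum with a split privacy budget. The analyzer then forms $\hat\Sigma$ and $\hat b$, solves the normal equation $\tilde\theta=\hat\Sigma^{-1}\hat b$ (or an equivalent closed form), and applies soft-thresholding at a level $\lambda_n$. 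The proof will have three parts: privacy, a noise bound in $\ell_\infty$, and an $\ell_\infty$-to-$\ell_2$ conversion that uses sparsity.

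\textbf{Privacy.} Changing one record changes the $D$-dimensional statistic by $O(1)$ in each coordinate. Under the shuffled summation protocol, each coordinate sum then has the accuracy of a central Laplace or discrete-Gaussian mechanism. Using advanced composition over $D$ coordinates, or equivalently an $\ell_2$-sensitivity of $O(\sqrt{D})$, the aggregate noise per coordinate is $\xi_j$ with standard deviation $O(\sqrt{D\log(1/\delta)}/\epsilon)$. This noise is on the sum, so on the averaged statistic it has magnitude $O(d\sqrt{\log(1/\delta)}/(n\epsilon))$. The condition $n\ge D\log(D/\delta)/\epsilon^2$ is exactly what the summation protocol needs to attain this, and it is where that term of \eqref{eq:thm2_condition} enters. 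Post-processing then gives $(\epsilon,\delta)$-SDP for the output.

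\textbf{Utility.} First I will bound the deviations. Hoeffding's inequality on the bounded entries and a Gaussian max bound on the noise give, with probability $1-O(d^{-2})$, $\|\hat\Sigma-\Sigma\|_{\infty,\infty}\lesssim\sqrt{\log d/n}+d\sqrt{\log(d/\delta)\log d}/(n\epsilon)$, and the same bound for $\|\hat b-\Sigma\theta^*\|_\infty$. Next comes perturbation of the inverse. I will write $\tilde\theta-\theta^*=\hat\Sigma^{-1}\bigl((\hat b-\Sigma\theta^*)-(\hat\Sigma-\Sigma)\theta^*\bigr)$. Since $\|\theta^*\|_1\le1$, we have $\|(\hat\Sigma-\Sigma)\theta^*\|_\infty\le\|\hat\Sigma-\Sigma\|_{\infty,\infty}$. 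The $\ell_\infty$ curvature condition of Assumption~\ref{asm:curv_bounded} gives $\|\Sigma^{-1}\|_{\infty\to\infty}\le1/\kappa_\infty$, and I transfer this to $\hat\Sigma$ via $\|\Sigma^{-1}(\hat\Sigma-\Sigma)\|_{\infty\to\infty}\le d\|\hat\Sigma-\Sigma\|_{\infty,\infty}/\kappa_\infty\le1/2$. That inequality is precisely what the first two sample-size conditions in \eqref{eq:thm2_condition} ensure, since $d\cdot\sqrt{\log d/n}\lesssim\kappa_\infty$ and $d\cdot d\sqrt{\cdot}/(n\epsilon)\lesssim\kappa_\infty$. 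A Neumann series then yields $\|\tilde\theta-\theta^*\|_\infty\le\frac{2}{\kappa_\infty}\bigl(\sqrt{\log d/n}+d\,\mathrm{polylog}/(n\epsilon)\bigr)=:\lambda_n/2$. If the algorithm instead adds a diagonal regularizer to guarantee invertibility, the same argument applies with an extra bias of order $\lambda_n$. Finally, with threshold $\lambda_n$, soft-thresholding kills every off-support coordinate, because there $|\tilde\theta_j|\le\lambda_n/2$. On the support it incurs error at most $2\lambda_n$ per coordinate. Hence $\|\hat\theta-\theta^*\|_2\le\sqrt{k}\cdot O(\lambda_n)=\tilde O(\sqrt{k/n}+d\sqrt{k}/(n\epsilon))$.

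\textbf{Main obstacle.} The hardest step is the inverse perturbation in $\ell_\infty$ operator norm. The naive bound $\|E\|_{\infty\to\infty}\le d\|E\|_{\infty,\infty}$ costs a factor of $d$. I expect this loss to be absorbed by the $d^2$ sample-size conditions rather than to appear in the rate. The delicate part is checking that the factor $d$ multiplies only the second-order term $\|\Sigma^{-1}E\|\cdot\|\tilde\theta-\theta^*\|$ and not the leading term, so that the final rate keeps only one power of $d$, which comes from the $\sqrt{D}$ privacy noise.
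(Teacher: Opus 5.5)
Your proposal is correct and follows essentially the same route as the paper's proof. The shared steps are:
\begin{itemize}
\item noise on the aggregated $D$-dimensional statistics, calibrated to their $\Theta(d)$ $\ell_2$-sensitivity;
\item max-norm bounds on $\hat H$ and $\hat g$;
\item the condition $d\,\|\hat H-\Sigma\|_{\infty,\infty}\le \kappa_\infty/2$, obtained from the first two terms of \eqref{eq:thm2_condition}, to transfer the $\ell_\infty$ curvature to $\hat H$;
\item $\|\theta^*\|_1\le 1$, so that the $(\hat H-\bar H)\theta^*$ term carries no factor $d$;
\item soft-thresholding to obtain $\mathcal{O}(\sqrt{k}\,\lambda_n)$.
\end{itemize}

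The only pieces you leave implicit are specific to the paper's discretized protocol. First, you do not account for the randomized rounding to the integer grid, whose error the paper bounds by Hoeffding at the same order as the binomial noise. Second, the third term of \eqref{eq:thm2_condition} actually serves to make the rounding contribution $\sqrt{D}$ in the sensitivity $\Delta_2^{(p)}=2pC_F+\sqrt{D}$ negligible against $pC_F$.
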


Privacy follows in two steps. The protocol that receives the rounded statistics directly and perturbs their sum with the binomial noise $\mathrm{Bin}(M, 1/2) - M/2$ is $(\epsilon, \delta/2)$-DP. By the security of the summation protocol, the aggregator's view of the real protocol is within total variation distance $e^{-\epsilon}\delta/8$ of this one, and Lemma~F.2 of \citet{balle2020private} turns this distance into the stated $(\epsilon, \delta)$-shuffled DP guarantee. For the utility, the rounding error is of the same order as the binomial noise; we bound both in the $\ell_\infty$-norm and then apply the soft-thresholding bound. The resulting privacy term $d\sqrt{k}/(n\epsilon)$ leaves a gap of $\sqrt{d}$ to the lower bound $\Omega(\sqrt{dk}/(n\epsilon))$ of Theorem~\ref{thm:shuffled_dp_lower}. The gap comes from the clipping radius: the second-moment statistic has $\|x_i x_i^\top\|_F = \|x_i\|_2^2 = \Theta(d)$, and to privatize it, this scale enters the noise directly. We leave closing this gap to future work; in either case, the polynomial dependence on $d$ is not removed, which is the main barrier of the non-interactive setting.

\section{Overcoming the Theoretical Barrier}
\label{sec:user-level SDP}
Section~\ref{sec:non SDP} shows that a polynomial dependence on $d$ is unavoidable in the
non-interactive setting. In this section we show that the barrier can be overcome when each user holds multiple samples and two rounds of interaction are allowed. Specifically, we give a two-round user-level SDP algorithm whose estimation
error is independent of the dimension $d$ once the local sample size $m$ is large
enough. In this section, we consider the following additional assumption. 

\begin{assumption}[Restricted Eigenvalue]
\label{asm:re}
There is a constant $\kappa > 0$ such that $v^\top \Sigma v \ge \kappa \|v\|_2^2$ for every $S \subseteq [d]$ with $|S| \le k$, and every $v$ with $\|v_{S^c}\|_1 \le 3 \|v_S\|_1$, and there is a constant $C$ such that $\max_{\ell \in [d]} \Sigma_{\ell\ell} \le 1$ and $\lambda_{\max}(\Sigma_{S,S}) \le C$ for every $|S| \le k$.
\end{assumption}
This is the restricted eigenvalue condition of \citet{bickel2009simultaneous}, the standard
condition under which the Lasso attains its fast rate, and the cone constant $3$ is the one
produced by the basic inequality when the regularization parameter dominates the score
\citep{vandegeer2009conditions, negahban2012unified}. It holds with high probability for
sub-Gaussian designs whose population covariance has a bounded minimal eigenvalue
\citep{raskutti2010restricted}, and conditions of this form are standard in the DP sparse
regression literature \citep{cai2021cost, ma2024better, tian2025differentially}. Taking $v$
supported on $S$ gives $\lambda_{\min}(\Sigma_S) \ge \kappa$.

\subsection{Upper Bound}
\label{subsec:user_upper}

Instead of relaxing the $\ell_0$ constraint as in Section~\ref{subsec:non_upper}, we reduce the problem to a low-dimensional regression by first estimating the support of $\theta^*$. In the user-level regime, each user turns their $m$ local samples into one local estimate of an active coordinate, and a heavy hitter step aggregates these estimates into the global support set \citep{bun2019heavy, ma2024better}.

\begin{algorithm}[htbp]
\caption{User-level Two-Round Shuffled DP Sparse Regression (U2R)}
\label{alg:pipeline}
\begin{algorithmic}[1]
    \STATE {\bfseries Input:} Local datasets $\{(X_i, y_i)\}_{i=1}^{n}$, sparsity $k$, privacy parameters
    $(\epsilon, \delta)$.
    \STATE {\bfseries Stage 1: Support Recovery} \quad (users $i \in \{1, \dots, n_1\}$)
    \FOR{each user $i \in \{1, \dots, n_1\}$}
        \STATE Solve the Lasso, get a local candidate set $\hat{S}_i$, and draw one index
        $v_i \sim \mathrm{Unif}(\hat{S}_i)$.
        \STATE Send the randomized response of $v_i$ on each coordinate $\ell \in [d]$ through
        $d$ parallel shufflers.
    \ENDFOR
    \STATE {\bfseries Aggregator:} decode the shuffled
    messages to get the counts $\{\hat{W}_\ell\}_{\ell \in [d]}$, recover support $\hat{S}$ via heavy hitter identification, and release $\hat{S}$, where $s = |\hat{S}| $.
    \STATE {\bfseries Stage 2: Constrained Estimation} \quad (the remaining $n_2 = n - n_1$ users)
    \FOR{each user $i \in \{n_1 + 1, \dots, n\}$}
        \STATE Compute the local least squares estimate $\tilde{\theta}_i \in \mathbb{R}^{s}$ on
        $\hat{S}$, clip it, and send it to the shuffler.
    \ENDFOR
    \STATE {\bfseries Aggregator:} decode the messages of the first half users into a coarse estimate and
    broadcast it;
    the second half messages clip around this estimate, and yield the refined estimate $\hat{\theta}_{\hat{S}} \in \mathbb{R}^{s}$.
    \STATE {\bfseries Output:} the $k$-sparse estimate $\hat{\theta}\in \mathbb{R}^{d}$ that equals $\hat{\theta}_{\hat{S}}$
    on $\hat{S}$ and zero elsewhere.
\end{algorithmic}
\end{algorithm}
We summarize the pipeline in Algorithm \ref{alg:pipeline}. In this algorithm, \textbf{Stage 1: Support Recovery}, \textbf{Stage 2: Constrained Estimation}, and their internal hyperparameters (e.g., clipping thresholds) can be found in Algorithm \ref{alg:heavyhitter} and Algorithm \ref{alg:sparse_estimation_1} in the Appendix \ref{app:algorithms}. Algorithm~\ref{alg:pipeline} is two-round because we count rounds by the messages that users send.

The heavy hitter step of Stage 1 (Algorithm \ref{alg:heavyhitter}) is what keeps the dimension out of the final rate, so we describe it
in more detail. The heavy hitter algorithms used for the LDP model~\citep{bun2019heavy, ma2024better}
estimate the frequency of every coordinate through a prefix tree: the users are split into $\lceil \log d \rceil$ groups, with the users of one group reporting a hashed bit of one level of the tree, and the server scans the levels in order. Two costs come with this. Every report has to be private on its own, so the randomization a user applies is calibrated to their own message, and each level of the
tree is estimated from a $1/\log d$ fraction of the users. In the shuffled model, neither cost is
needed: the shuffler removes the association between a message and its sender, so the aggregator sees
nothing beyond the counts, and each user can answer at a randomization level calibrated to the view
$\mathcal{V}_{\mathcal{P}}$ rather than to a single message. We therefore replace the prefix tree by
one round of randomized response over $[d]$. User $i$ reduces their $m$ local samples to a single
index $v_i$, sends the indicator of $v_i$ with every coordinate flipped independently with probability
$\gamma$, and the aggregator debiases the shuffled counts and keeps the coordinates that pass a
threshold. To ensure $\hat{S}$ is the true support, we give Proposition \ref{prop:support_consistency}.

\begin{proposition}
\label{prop:support_consistency}
Let Assumptions~\ref{asm:model}, \ref{asm:cov} and \ref{asm:re} hold, let $S^* = \operatorname{supp}(\theta^*)$ with $|S^*| = k$, and let the parameters of Algorithm~\ref{alg:heavyhitter} be set as in Appendix~\ref{app:algorithms}. Suppose there is $\rho > 0$ with
$\min_{\ell \in S^*} |\theta^*_\ell| \ge \rho$, and that
\begin{enumerate}
    \item[(i)] \textbf{Local sample size.}
    \begin{equation}
    \label{eq:m_condition}
        m \ge \Omega\left( \max\left\{ \frac{\sigma^2 k \log d}{\kappa^2 \rho^2}, \;
        \frac{k \log(ed/k)}{\kappa^2}, \; k + \log n \right\} \right) .
    \end{equation}
    \item[(ii)] \textbf{User number of Stage 1.}
    \begin{equation}
    \label{eq:n1_condition}
        n_1 \ge \Omega\left( \max\left\{ \frac{k \sqrt{\log(dn) \log(1/\delta)}}{\epsilon}, \; k \log(dn) \right\} \right) .
    \end{equation}
\end{enumerate}
Then with probability at least $1 - \mathcal{O}(n^{-2})$,
\begin{equation}
    \hat{S} = S^* ,
\end{equation}
so that $s = |\hat{S}| = k$.
\end{proposition}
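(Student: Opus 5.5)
The plan is to split the event $\hat S = S^*$ into a local statistical event and a global privacy/aggregation event, and to control each by a union bound. First comes the \emph{local step}. For each Stage 1 user, with $m$ samples, I will show that the local Lasso output $\hat\theta_i$ satisfies $\|\hat\theta_i - \theta^*\|_\infty \le \|\hat\theta_i-\theta^*\|_2 < \rho/2$ with probability at least $1 - \mathcal{O}((dn)^{-3})$. This uses the standard route. Under Assumption~\ref{asm:cov} and $m \gtrsim k\log(ed/k)/\kappa^2$, the sample design $X_i^\top X_i/m$ inherits the restricted eigenvalue condition of Assumption~\ref{asm:re} with constant $\kappa/2$ (the Raskutti--Wainwright--Yu / Rudelson--Zhou result). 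The score $\|X_i^\top \zeta_i/m\|_\infty$ is bounded by $\sigma\sqrt{\log(dn)/m}$ via sub-Gaussian tails and $\Sigma_{\ell\ell}\le 1$. Choosing $\lambda \asymp \sigma\sqrt{\log(dn)/m}$ then gives $\|\hat\theta_i-\theta^*\|_2 \lesssim \sigma\sqrt{k\log (dn)/m}/\kappa$. This is below $\rho/2$ by the first term of \eqref{eq:m_condition}; the $\log n$ in that condition and the term $k+\log n$ absorb the union over the $n_1\le n$ users. Taking $\hat S_i$ as the top-$k$ coordinates of $|\hat\theta_i|$ (or those above $\rho/2$), we get $\hat S_i = S^*$ for every $i\le n_1$ simultaneously, with probability $1-\mathcal{O}(n^{-2})$. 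Consequently each $v_i$ is uniform on $S^*$.

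Next comes the \emph{frequency step}. Conditional on the local event, the true counts $W_\ell = \sum_{i\le n_1}\mathbb{I}[v_i=\ell]$ satisfy $W_\ell = 0$ for $\ell\notin S^*$. For $\ell \in S^*$, $W_\ell \sim \mathrm{Bin}(n_1,1/k)$, so by Chernoff and a union over $k$ coordinates, $W_\ell \ge n_1/(2k)$ once $n_1 \gtrsim k\log(dn)$; this is the second term of \eqref{eq:n1_condition}. The debiased randomized-response count is $\hat W_\ell = (\tilde W_\ell - \gamma n_1)/(1-2\gamma)$. By Hoeffding, $|\hat W_\ell - W_\ell| \lesssim \sqrt{n_1\log(dn)}/(1-2\gamma)$ uniformly over $\ell\in[d]$ with probability $1-\mathcal{O}(n^{-2})$.

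Then I calibrate the privacy level. The flip probability $\gamma$ is set so that the shuffled view is $(\epsilon,\delta)$-DP. Changing one user alters two coordinates' indicators, and each coordinate's shuffled count is a sum of $n_1$ randomized responses, so by privacy amplification by shuffling (or directly, the binomial mechanism on each count with $d$-wise composition only over the two changed coordinates) one may take $1-2\gamma \asymp \epsilon\sqrt{n_1}/\sqrt{\log(1/\delta)}$ whenever this is at most $1$. Substituting gives a deviation of $\mathcal{O}(\sqrt{\log(dn)\log(1/\delta)}/\epsilon)$. I then set the threshold at $\tau = n_1/(4k)$: coordinates in $S^*$ have $\hat W_\ell \ge n_1/(2k) - \text{dev} > \tau$, while those outside have $\hat W_\ell \le \text{dev} < \tau$. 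Both inequalities hold exactly when $n_1 \gtrsim k\sqrt{\log(dn)\log(1/\delta)}/\epsilon$, the first term of \eqref{eq:n1_condition}. Hence $\hat S = S^*$.

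The main obstacle I expect is the privacy calibration in the frequency step. One must argue that the per-coordinate flipping across $d$ parallel shufflers costs only a constant factor rather than a $\sqrt d$ composition, because neighbouring inputs differ in at most two coordinates. One also has to check that the resulting $\gamma$ is valid (i.e.\ $\gamma<1/2$), handling the regime where $\epsilon\sqrt{n_1}$ is large by capping $1-2\gamma$ at a constant. The Lasso step is routine given the cited restricted eigenvalue transfer.
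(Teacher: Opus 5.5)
Your proposal has two genuine gaps, one in the local step and one in the frequency step.

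\textbf{Local step.} You make every Stage~1 user's Lasso correct at once by a union bound over the $n_1$ users. That needs a per-user failure probability of order $n^{-3}$, hence $\lambda \asymp \sigma\sqrt{\log(dn)/m}$ and $\sigma\sqrt{k\log(dn)/m}/\kappa < \rho/2$, i.e.\ $m \gtrsim \sigma^2 k\log(dn)/(\kappa^2\rho^2)$. Condition (i) does not supply this. Its beta-min term carries only $\log d$, and the term $k+\log n$ is not multiplied by $\sigma^2/(\kappa^2\rho^2)$, so it cannot absorb the $\log n$. For instance, take $\sigma,\kappa$ constant, $\rho \asymp 1/\sqrt{k}$ and $\log n \ge k^2\log d$: then (i) allows $m \asymp \log n$, while you need $m \gtrsim k^2\log n$. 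The algorithm's $\lambda_{\mathrm{loc}} = c_\lambda\sigma\sqrt{\log d/m}$ is also fixed by Table~\ref{tab:param_stage1}, and with it the score event fails with probability polynomial in $1/d$ only, which cannot be summed over $n \gg d$ users.

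The missing idea is that no union over users is needed. One user has $\hat S_i = S^*$ with probability at least $1-1/(32k)$: the score event and the RE transfer each fail with probability at most $1/(64k)$, which uses only $\log d$. Since the $v_i$ are i.i.d., this gives $\mathbb{P}[v_i=\ell] \ge 31/(32k)$ on $S^*$ and $\mathbb{P}[v_i=\ell] \le 1/(32k)$ off $S^*$. Chernoff bounds on $w_\ell \sim \mathrm{Bin}(n_1,q_\ell)$, with a union over the $d$ coordinates and $n_1 \gtrsim k\log(dn)$, then give $w_\ell \ge n_1/(2k)$ on $S^*$ and $w_\ell \le n_1/(8k)$ off it. This bound must replace your claim $W_\ell = 0$ off $S^*$.

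\textbf{Frequency step.} As written, this step does not reach condition (ii) in the regime the proposition covers. The table fixes $\gamma = C_\gamma\log(1/\delta)/(n_1\epsilon^2)$, which is at most $1/4$ (this holds for $\delta \ge (dn)^{-Ck^2\epsilon^2}$). So the debiasing factor is a constant, and your Hoeffding bound gives a deviation of order $\sqrt{n_1\log(dn)}$. Separating that from $n_1/(4k)$ would need $n_1 \gtrsim k^2\log(dn)$, which (ii) does not imply. Your calibration $1-2\gamma \asymp \epsilon\sqrt{n_1/\log(1/\delta)}$ applies only when $n_1 \lesssim \log(1/\delta)/\epsilon^2$, which combined with (ii) is the complementary, very-small-$\delta$ regime. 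Capping $1-2\gamma$ at a constant elsewhere loses exactly the factor you need: if $\gamma$ is held at a constant, the noise really is of order $\sqrt{n_1}$.

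What is needed is that $\gamma$ shrinks with $n_1$ and that the concentration inequality sees this. Given $v$, each bit has variance at most $\gamma$, so Bernstein gives $|C_\ell - \mathbb{E}[C_\ell\mid v]| \le \sqrt{6n_1\gamma\log(dn)} + 2\log(dn)$. Substituting $n_1\gamma = C_\gamma\log(1/\delta)/\epsilon^2$ turns this into exactly the two terms of (ii).

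Finally, privacy is not part of this statement, since $\gamma$ is given. The calibration you flag as the main obstacle is not needed here; the paper proves Stage~1 privacy separately, for use in Theorem~\ref{thm:interactive_user_level_sdp}.
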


Proposition~\ref{prop:support_consistency} gives more than the containment $S^* \subseteq \hat{S}$ that Stage 2 needs: although the local set $\hat{S}_i$ of a user may differ from $S^*$, the heavy hitter step returns $S^*$ exactly. In \eqref{eq:m_condition}, the first two terms make the local selection of one user in Stage 1 correct with probability at least $1 - 1/(32k)$, independently of $n$, and the third term is what Stage 2 needs for the local least squares estimate on $\hat{S}$. The first term is equivalently the beta-min condition $\rho \ge \Omega\big((\sigma/\kappa)\sqrt{k\log d/m}\big)$, which is standard for support recovery \citep{wainwright2009sharp, ma2024better} and allows $\rho$ to decrease as $m$ grows.

\begin{theorem}
\label{thm:interactive_user_level_sdp}
Let $\theta^* \in \Theta_k$, let the conditions of Proposition~\ref{prop:support_consistency} hold,
and let the parameters of Algorithm~\ref{alg:pipeline} be set as in
Appendix~\ref{app:algorithms}. If
\begin{equation}
\label{eq:thm3_condition}
    n \epsilon \ge \Omega\left( \frac{\sqrt{m \log(k/\delta)}}{\sigma} \right) ,
\end{equation}
then Algorithm~\ref{alg:pipeline} satisfies user-level $(\epsilon, \delta)$-shuffled DP, and with
probability at least $1 - \mathcal{O}(n^{-2})$ its output satisfies
\begin{equation}
    \|\hat{\theta} - \theta^*\|_2 \le \widetilde{\mathcal{O}}\left(
    \sqrt{\frac{k}{n m}} + \frac{k}{n \sqrt{m} \epsilon} \right). 
\end{equation}
\end{theorem}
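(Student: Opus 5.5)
The proof splits into privacy and utility, and both go stage by stage, conditioning on the event of Proposition~\ref{prop:support_consistency}.

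\textbf{Privacy.} Stage 1 and Stage 2 use disjoint sets of users, and each user sends messages in only one stage. Under the user-level neighboring relation, a change to one user therefore affects only the stage that user takes part in. It suffices to show that each stage is $(\epsilon,\delta)$-SDP with respect to its own users, conditioned on the public outputs released earlier. For Stage 1, user $i$ sends a randomized-response bit vector over $[d]$, and changing $U_i$ changes at most two coordinates of the indicator of $v_i$. With the flip probability $\gamma$ calibrated as in Appendix~\ref{app:algorithms}, the shuffled counts on each coordinate are a sum of $n_1$ bits. Privacy amplification by shuffling for binary randomized response, followed by composition over the two affected coordinates, gives the stated guarantee. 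For Stage 2, each clipped local estimate $\tilde\theta_i\in\mathbb{R}^s$ is aggregated with the secure multi-message summation protocol of \citet{balle2020private} plus discrete (binomial) noise. The same argument as in Theorem~\ref{thm:non_interactive_upper} (central DP of the noisy sum, then Lemma~F.2 of \citet{balle2020private}) gives SDP for both the coarse and the refined sums. The two sums use disjoint halves of the Stage-2 users, and the clipping center of the second half is a post-processing of the first.

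\textbf{Utility.} On the event $\hat S=S^*$, which holds with probability $1-\mathcal{O}(n^{-2})$ by Proposition~\ref{prop:support_consistency}, the problem reduces to $k$-dimensional regression.
\begin{enumerate}
\item The condition $m\ge k+\log n$ together with Assumptions~\ref{asm:cov} and~\ref{asm:re} gives $\lambda_{\min}(\hat\Sigma_{i,S^*})\ge\kappa/2$ for every user, simultaneously by a union bound. Hence $\tilde\theta_i-\theta^*_{S^*}=\hat\Sigma_{i,S^*}^{-1}X_{i,S^*}^\top\zeta_i/m$ is sub-Gaussian with norm $\mathcal{O}(\sigma\sqrt{k/m})$, and $\|\tilde\theta_i-\theta^*_{S^*}\|_2\le \mathcal{O}(\sigma\sqrt{(k+\log n)/m})$ with high probability.
\item The coarse estimate uses a clipping radius of order $\sqrt{k}$ around $0$, which is valid since $\|\theta^*\|_2=\mathcal{O}(1)$. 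It has error $\tilde{\mathcal{O}}(\sqrt{k/(nm)}+k/(n\epsilon))$, and the condition \eqref{eq:thm3_condition} makes this at most of order $\sigma\sqrt{k\log n/m}$.
\item The second half then clips around the coarse estimate with radius $\tau=\tilde{\mathcal{O}}(\sigma\sqrt{k/m})$. With high probability no clipping is active, so the average of $n/4$ local estimates has sampling error $\tilde{\mathcal{O}}(\sqrt{k/(nm)})$.
\item The privacy noise on an $s=k$ dimensional sum with $\ell_2$ sensitivity $\tau$ has per-coordinate scale $\tau\sqrt{\log(1/\delta)}/\epsilon$. The resulting $\ell_2$ error is $\sqrt{k}\,\tau/(n\epsilon)=\tilde{\mathcal{O}}(k/(n\sqrt m\epsilon))$. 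The rounding error of the discretization is of the same order.
\item Adding the sampling error and the noise term, and zero-padding outside $\hat S$, gives the claimed bound.
\end{enumerate}

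\textbf{Main obstacle.} The delicate step is the second-round localization. The sensitivity must scale like $\sqrt{k/m}$ rather than $\sqrt k$, which is where the $1/\sqrt m$ gain comes from. At the same time, clipping must introduce no bias, which requires the coarse estimate to land within $\tau/2$ of $\theta^*_{S^*}$. I would first verify that \eqref{eq:thm3_condition} is exactly the requirement that the first-round noise $k\sqrt{\log(k/\delta)}/(n\epsilon)$ be dominated by $\sigma\sqrt{k/m}$, up to the normalization in Appendix~\ref{app:algorithms}. If instead it only controls the per-coordinate noise, I would switch to coordinatewise ($\ell_\infty$) clipping with radius $\sigma\sqrt{\log(nk)/m}$ so the two match.
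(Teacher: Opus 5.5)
Your plan follows the paper's proof closely: disjoint groups with adaptive parallel composition, a privacy-blanket/binomial argument for the Stage~1 randomized-response counts, and binomial noise plus secure summation and Lemma~F.2 of \citet{balle2020private} for both halves of Stage~2. The utility chain also matches: a uniform local OLS bound, then the coarse estimate, then inactive localized clipping. However, Step~2 fails as written.

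You clip the coarse round at a radius of order $\sqrt{k}$, so its privacy error is of order $k\sqrt{\log n\log(k/\delta)}/(n\epsilon)$. You then claim that \eqref{eq:thm3_condition} puts this below $\sigma\sqrt{k\log n/m}$. That would need $n\epsilon\gtrsim\sqrt{k\,m\log(k/\delta)}/\sigma$, a factor $\sqrt{k}$ stronger than the hypothesis, and no polylogarithmic slack absorbs a polynomial factor. Consequently $\hat A\le\tau_\beta$, hence $\tau=O(\tau_\beta)$, is not established, and neither is the final $k/(n\sqrt m\,\epsilon)$ rate.

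Your fallback does not repair this. Coordinatewise clipping in Stage~2b at radius $\sigma\sqrt{\log(nk)/m}$ requires the coarse estimate to be that close in every coordinate. But with first-round $\ell_2$ sensitivity of order $\sqrt k$ the per-coordinate coarse noise is $\sqrt{k\log(k/\delta)}/(n\epsilon)$. You would get the same sensitivity from $\ell_\infty$ clipping at an $O(1)$ radius. The hypothesis only bounds this noise by $\sigma\sqrt{k/m}$.

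The missing step is to use the fact you yourself cite, $\|\theta^*\|_2\le B_\theta=O(1)$, to clip Stage~2a in $\ell_2$ at $R_0=B_\theta+\tau_\beta=\Theta(1)$, as the parameter table prescribes. On the high-probability event every Stage~2 user has $\|\tilde\theta_i-\theta^*_{S^*}\|_2\le\tau_\beta=C_\beta\sigma\sqrt{k\log n/m}$, so this clipping is inactive. The coarse privacy error then becomes $R_0\sqrt{k\log n\log(4k/\delta)}/(n_3\epsilon)$. Condition \eqref{eq:thm3_condition} is exactly the statement that this is at most $\tau_\beta/2$. Together with the sampling term $\sigma\sqrt{k\log n/(n_3m)}\le\tau_\beta/2$, it gives $\hat A\le\tau_\beta$ and $\tau\le 2\tau_\beta$. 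After that your Steps~3--5 go through unchanged.

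A smaller point: privacy must hold for every dataset and every released $\hat S$, not only on the event of Proposition~\ref{prop:support_consistency}. Your clipping-based sensitivity bounds already give this, so drop that conditioning from the privacy half.
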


The dimension $d$ is absent from both terms. By Proposition~\ref{prop:support_consistency} the second stage runs on the $k$ coordinates of $\hat{S}$, so the vector a user sends has $k$ entries rather than
$d$, and the dimension is eliminated. What remains of $d$ is in the conditions on $m$ and $n_1$ in Proposition~\ref{prop:support_consistency}. In the synthetic dimension sweep, U2R keeps a low mean relative coefficient error as $d$ increases at fixed $k$ and $m$, as shown in Table~\ref{tab:exp_dimension_10seeds}. User-level data alone does not suffice: Appendix~\ref{app:discussion_user_level_sdp} shows that under a non-interactive protocol the lower bound for user-level SDP still carries a polynomial dependence on $d$, so interaction is also essential.

\subsection{Lower Bound}
\label{subsec:user_lower}

We have already shown a promising result in Theorem~\ref{thm:interactive_user_level_sdp}. However, a natural question is whether our upper bound can be further improved. Now we show that our upper bound is already near-optimal.

\begin{theorem}
\label{thm:user_level_sparse_lower_bound}
Let $Z = \{U_i\}_{i=1}^n$ be a
dataset in which each user $i \in [n]$ holds $m$ i.i.d. samples
$U_i = \{(x_{i,j}, y_{i,j})\}_{j=1}^m$ from the exponential family
\begin{equation}
    f_\theta(y \mid x) = h(y, \sigma) \exp\left(
    \frac{y x^\top \theta - \psi(x^\top \theta)}{c(\sigma)} \right) .
\end{equation}
Let Assumptions~\ref{asm:cov} and \ref{asm:re} hold and let $\|\psi''\|_\infty \le c_2$. Let
$\mathcal{M}_{\epsilon, \delta}$ be the family of all estimators satisfying user-level
$(\epsilon, \delta)$-DP with $0 < \epsilon < 1$ and
$\delta = o\big( \log(d/k) / (n \sqrt{m \log(nm)}) \big)$. If $k \le \sqrt{d}$, $k \ge \Omega(\log^2(d/k))$ and
$n \sqrt{m} \, \epsilon \ge \Omega\big( k^{3/4} \log^{3/2}(d/k) \big)$, then 
\begin{equation}
    \inf_{\mathcal{A} \in \mathcal{M}_{\epsilon, \delta}} \sup_{\theta^* \in \Theta_k}
    \mathbb{E}\big[\|\mathcal{A}(Z) - \theta^*\|_2\big] \ge \Omega  \left(
    \sqrt{\frac{k \log(d/k)}{nm}} + \frac{k \log(d/k)}{n \sqrt{m} \epsilon} \right) ,
\end{equation}
where the expectation is over the data and the randomness of the estimator. 
\end{theorem}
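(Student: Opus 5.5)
The lower bound has two terms, and I would prove them separately. The statistical term $\sqrt{k\log(d/k)/(nm)}$ is the classical non-private minimax rate for $k$-sparse GLM estimation from $N=nm$ i.i.d. samples, since user-level DP only restricts the estimator further. For it I would run a Fano/Varshamov--Gilbert argument over the $k$-sparse packing $\{\gamma z : z\in\{0,1\}^d,\|z\|_0=k\}$ with pairwise Hamming separation $\Omega(k)$ and log-cardinality $\Omega(k\log(d/k))$. The KL divergence between $f_\theta$ and $f_{\theta'}$ over $N$ samples is bounded by $N c_2 (\theta-\theta')^\top\Sigma(\theta-\theta')/c(\sigma)$ via $\|\psi''\|_\infty\le c_2$ and $\lambda_{\max}(\Sigma_{S,S})\le C$ on $2k$-sparse differences. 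Choosing $\gamma\asymp\sqrt{\log(d/k)/N}$ then gives the first term. For the privacy term, I would extend the score attack of \citet{cai2023score} from samples to users.

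\textbf{User-level score attack.} For user $i$ define the attack
\[
\mathcal{A}_i=\big\langle \mathcal{A}(Z)-\theta,\ \textstyle\sum_{j=1}^m S_\theta(x_{i,j},y_{i,j})\big\rangle,
\qquad S_\theta(x,y)=\frac{(y-\psi'(x^\top\theta))x}{c(\sigma)},
\]
i.e.\ the inner product of the estimation error with the aggregated score of that user's whole dataset. I would prove three facts about it.

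\emph{Soundness.} If $U_i$ is replaced by an independent copy $U_i'$, giving the attack $\mathcal{A}_i'$, then $\mathbb{E}\mathcal{A}_i'=0$. Moreover, $\mathbb{E}|\mathcal{A}_i'|\lesssim\sqrt{m}\,\mathbb{E}\|\mathcal{A}(Z)-\theta\|_2$, because the user score $\sum_j S_\theta$ has covariance $m\,\mathrm{Cov}(S_\theta)\preceq m c_2\Sigma/c(\sigma)$. As in \citet{cai2023score}, I would restrict to the relevant sparse directions by projecting onto the support of $\mathcal{A}(Z)-\theta$, taking $\mathcal{A}$ to output $k$-sparse vectors, and use a truncation at level $T\asymp\sqrt{m\log(nm)}$ on the score.

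\emph{DP transfer.} User-level $(\epsilon,\delta)$-DP converts this into
\[
\mathbb{E}\mathcal{A}_i\le 2\epsilon\,\mathbb{E}|\mathcal{A}_i'|+2\delta T+\text{tail}.
\]
The hypothesis $\delta=o(\log(d/k)/(n\sqrt{m\log(nm)}))$ is exactly what makes $n\delta T$ negligible after summing over $i$.

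\emph{Completeness.} Summing over users,
\[
\sum_i\mathbb{E}\mathcal{A}_i=\sum_\ell \partial_{\theta_\ell}\mathbb{E}_\theta[\mathcal{A}(Z)_\ell],
\]
by the score identity. This holds because the full-data score is the sum of user scores, so this step is identical to the sample-level case.

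\textbf{Prior and lower bound on the divergence.} I would place the prior $\theta_\ell=\gamma\,\pi_\ell$ with $\pi$ drawn from a sparse prior. Concretely, $\pi_\ell$ is a truncated symmetric variable on the top-$k$ coordinates of a random sign/support pattern, following \citet{cai2023score}'s sparse construction with $\gamma\asymp\sqrt{k}\log(d/k)/(n\sqrt m\epsilon)\cdot k^{-1/2}$. Stein's lemma then lower bounds $\sum_\ell\mathbb{E}_\pi\partial_\ell\mathbb{E}_\theta\mathcal{A}_\ell$ by $\Omega(k\log(d/k))$ whenever the risk is below the claimed rate. Combining with soundness and the DP transfer gives
\[
k\log(d/k)\lesssim n\epsilon\sqrt m\,\mathbb{E}\|\mathcal{A}-\theta\|_2+o(k\log(d/k)),
\]
which yields the second term.

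The conditions $k\le\sqrt d$, $k\ge\Omega(\log^2(d/k))$, and $n\sqrt m\epsilon\ge k^{3/4}\log^{3/2}(d/k)$ are what I expect to need for three things: keeping $\theta\in\Theta_k$ with high probability, keeping $\|\theta\|_2=O(1)$, and controlling the boundary/truncation terms in the Stein step.

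\textbf{Main obstacle.} I expect the hardest part to be soundness at the user level: bounding $\mathbb{E}|\mathcal{A}_i'|$ with only $\sqrt m$ (not $m$) inflation, while also keeping the truncation tail controlled uniformly over sparse directions. Sub-Gaussianity from Assumption~\ref{asm:cov} gives tails of $\sum_j S_\theta$ at scale $\sqrt{m}$ along each fixed direction. The difficulty is the union over $k$-sparse supports, which costs the $\log$ factors absorbed in $T$. I would first try a direct Cauchy--Schwarz bound with Assumption~\ref{asm:re}, and fall back on restricting to the support of the output only if that fails.
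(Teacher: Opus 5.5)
Your overall architecture matches the paper's. It has a user-level score attack with soundness, a DP-transfer inequality with a truncation level, and completeness through the full-data score. It then applies Stein's lemma under the top-$k$ order-statistic prior of \citet{cai2023score} and treats the non-private term separately. Your Fano/Varshamov--Gilbert argument for that term, using $\lambda_{\max}\le 2C$ on $2k$-sparse differences, is a fine self-contained substitute for the paper's citation of \citet{raskutti2011minimax}. However, two steps do not go through as you set them up.

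\textbf{Restrict to $\mathrm{supp}(\theta)$, not to the support of the output.} Forcing $\mathcal{A}$ to be $k$-sparse leaves your attack equal to the full inner product. Completeness then gives $\sum_{\ell=1}^d \partial_\ell g_\ell(\theta)$ with $g(\theta)=\mathbb{E}_{Z\mid\theta}[\mathcal{A}(Z)]$. Under a sparse prior, conditionally on the support $T$, every coordinate $\ell\notin T$ is a point mass at $0$. Stein's lemma therefore says nothing about $\partial_\ell g_\ell$ for $\ell\notin T$; these $d-|T|$ terms have no sign, and your claimed $\Omega(k\log(d/k))$ lower bound for the whole sum is unjustified.

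The paper instead uses $A_i^T=\langle(\mathcal{A}(Z)-\theta)_T, S_\theta(U_i)_T\rangle$ with $T=\mathrm{supp}(\theta)$. This fixes three things:
\begin{itemize}
\item Completeness involves only $\sum_{j\in T}\partial_j g_j(\theta)$, viewing $f_\theta$ as a function of $\theta_T$, so Stein's lemma applies conditionally on each support.
\item Soundness follows directly from $\lambda_{\max}(\Sigma_{T,T})\le C$, with no union over sparse directions. What you flag as the main obstacle disappears.
\item The tail uses $\|(\mathcal{A}(Z)-\theta)_T\|_1\le 2k$ (outputs projected to the box $\|\cdot\|_\infty\le 1$) and a union bound over only $|T|\le k$ coordinates, giving $t_0\asymp k\sqrt{m\log(2k/\delta)}$.
\end{itemize}
Your truncation level omits this factor $k$, which is harmless because the target is $k\log(d/k)$. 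But your version would need a union over all $d$ coordinates, which introduces $\log(d/\delta)$; the stated condition on $\delta$ does not control that.

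\textbf{Prior scale and the case split.} The Stein step gives $\mathbb{E}_\pi[-\theta_T^\top\nabla\log p(\theta_T)]$ minus a remainder bounded by $\sqrt{R}\,(\mathbb{E}_\pi\|\nabla\log p\|_2^2)^{1/2}$, where $R$ is the squared risk. Under $\theta\mapsto\gamma\theta$ the first term, of order $k\log(d/k)$, is invariant. The prior Fisher information, $O(k^{3/2}\log^3(d/k))$ at unit scale, is multiplied by $\gamma^{-2}$.

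With your $\gamma\asymp\log(d/k)/(n\sqrt m\epsilon)$ and $R$ at the target rate, the remainder bound becomes $k^{7/4}\log^{3/2}(d/k)\gg k\log(d/k)$. So the step ``Stein gives $\Omega(k\log(d/k))$ whenever the risk is below the claimed rate'' fails. In general, the Stein bound at risk level $R$ needs $\gamma\gtrsim\sqrt{R}\,k^{-1/4}\log^{1/2}(d/k)$.

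The paper keeps the prior at unit scale, which is the box its tail bound and $\delta$ condition are calibrated to. It applies Lemma~\ref{lem:sparse_lower_bound} only when $R\le c_1\sqrt k/\log(d/k)$. It disposes of larger $R$ by noting that such $R$ already exceeds $k^2\log^2(d/k)/(n^2m\epsilon^2)$. That case split is exactly what the hypothesis $n\sqrt m\,\epsilon\ge\Omega(k^{3/4}\log^{3/2}(d/k))$ is for. It has nothing to do with keeping $\theta\in\Theta_k$, since sparsity is exact by construction. Nor is it about $\|\theta\|_2=O(1)$, which is not needed because the supremum is over $\Theta_k$. The condition $k\le\sqrt d$ enters through the boundary terms and the prior lemmas of \citet{cai2023score}.
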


Theorem \ref{thm:user_level_sparse_lower_bound} defines the fundamental efficiency limits for any differential private estimator in the user-level setting. It is stated for the exponential family, which contains
the sparse linear model of Assumption~\ref{asm:model} as the special case $\psi(u) = u^2 / 2$ and
$c(\sigma) = \sigma^2$, so the bound applies to the model class we estimate. By Definition~\ref{def:sdp}, the output of any $(\epsilon, \delta)$-shuffled DP protocol is $(\epsilon, \delta)$-DP in the sense
of Definition~\ref{def:dp}, so Theorem~\ref{thm:user_level_sparse_lower_bound} also lower bounds the error of any user-level SDP protocol.

Comparing it with Theorem~\ref{thm:interactive_user_level_sdp}, the two rates agree in $n$, $m$, $k$ and $\epsilon$ and only differ by the factor $\log(d/k)$: our algorithm is near-optimal, and sparse linear regression in the
shuffled model is closer to central DP once interaction and enough user-level data are
available.

\noindent\textbf{User-level Score Attack.}
We now outline the proof. Minimax lower bounds under privacy constraints are difficult to
characterize: Fano's inequality and mutual information bounds give no general recipe and are often loose, particularly in the user-level regime. To handle user-level data, we extend the score attack framework of \citet{cai2023score} and introduce the
user-level score attack, which is our main technical tool here.

\begin{definition}[User-level Score Attack]
\label{def:user_score_attack}
For a dataset $Z$ in which each user $i \in [n]$ holds $m$ i.i.d. samples
$U_i = \{z_{i,1}, \dots, z_{i,m}\}$, the user-level score attack for an estimator $\mathcal{A}(Z)$ is
\begin{equation}
    A_i := \left\langle \mathcal{A}(Z) - \theta, \; \sum_{j=1}^m S_\theta(z_{i,j}) \right\rangle ,
\end{equation}
where $S_\theta(z) = \nabla_\theta \log f_\theta(z)$ is the score statistic.
\end{definition}

For the exponential family of Theorem~\ref{thm:user_level_sparse_lower_bound}, with mean function
$\mu(x^\top \theta) = \psi'(x^\top \theta)$, the score statistic is
$S_\theta(y, x) = (y - \mu(x^\top \theta)) x / c(\sigma)$, so the user-level score attack takes the
form
\begin{equation}
    A_i = \frac{1}{c(\sigma)} \left\langle \mathcal{A}(Z) - \theta, \;
    \sum_{j=1}^m \big(y_{i,j} - \mu(x_{i,j}^\top \theta)\big) x_{i,j} \right\rangle .
\end{equation}

\begin{proposition}
\label{prop:user_level_score_attack_properties}
Let $Z = \{U_1, \dots, U_n\}$ be as in Theorem~\ref{thm:user_level_sparse_lower_bound}, let
$\mathcal{A}(Z)$ be an estimator of $\theta$, and let $Z_{(i)}'$ be the adjacent dataset in which the
$i$-th user's data $U_i$ is replaced by an i.i.d. copy $\widetilde{U}_i$. Write
$A_i = \langle \mathcal{A}(Z) - \theta, S_\theta(U_i) \rangle$ and
$A_i' = \langle \mathcal{A}(Z_{(i)}') - \theta, S_\theta(U_i) \rangle$. Then
\begin{enumerate}
    \item[(i)] \textbf{Soundness:} for each $i \in [n]$, $\mathbb{E}[A_i'] = 0$ and
    \begin{equation}
        \mathbb{E}|A_i'| \le \sqrt{\mathbb{E}\|\mathcal{A}(Z) - \theta\|_2^2} \cdot
        \sqrt{\frac{m}{c(\sigma)}
        \lambda_{\max}\big(\mathbb{E}[\psi''(x^\top \theta) x x^\top]\big)} ;
    \end{equation}
    \item[(ii)] \textbf{Completeness:} under standard regularity conditions,
    \begin{equation}
        \sum_{i=1}^n \mathbb{E}[A_i]
        = \sum_{j=1}^d \frac{\partial}{\partial \theta_j} \mathbb{E}[\mathcal{A}(Z)_j] .
    \end{equation}
\end{enumerate}
\end{proposition}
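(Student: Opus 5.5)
We prove the two parts separately, following the sample-level argument of \citet{cai2023score} with the single score $S_\theta(z_i)$ replaced by the user-level score $S_\theta(U_i)=\sum_{j=1}^m S_\theta(z_{i,j})$.

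\textbf{Soundness.} The adjacent dataset $Z_{(i)}'$ replaces $U_i$ by an independent copy $\widetilde U_i$, so $\mathcal{A}(Z_{(i)}')$ is independent of $U_i$. Conditioning on $\mathcal{A}(Z_{(i)}')$ and using $\mathbb{E}[S_\theta(z)]=0$ for each sample gives $\mathbb{E}[A_i']=0$. For the absolute moment we apply Cauchy--Schwarz:
\begin{equation*}
\mathbb{E}|A_i'| \le \sqrt{\mathbb{E}\big[(A_i')^2\big]} = \sqrt{\mathbb{E}\big[(\mathcal{A}(Z_{(i)}')-\theta)^\top \,\mathrm{Cov}(S_\theta(U_i))\,(\mathcal{A}(Z_{(i)}')-\theta)\big]} .
\end{equation*}
Independence again lets us replace $S_\theta(U_i)S_\theta(U_i)^\top$ by its expectation. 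The $m$ samples in $U_i$ are i.i.d. with mean-zero scores, so the cross terms vanish and $\mathrm{Cov}(S_\theta(U_i)) = m\,\mathcal{I}(\theta)$. For the exponential family, $\mathbb{E}[(y-\mu(x^\top\theta))^2\mid x] = c(\sigma)\psi''(x^\top\theta)$, hence
\begin{equation*}
\mathcal{I}(\theta) = \mathbb{E}[\psi''(x^\top\theta)\,xx^\top]/c(\sigma).
\end{equation*}
Bounding the quadratic form by $\lambda_{\max}$ and noting that $\mathcal{A}(Z_{(i)}')\overset{d}{=}\mathcal{A}(Z)$ yields the stated inequality. This part is routine.

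\textbf{Completeness.} Write $g(\theta)=\mathbb{E}_\theta[\mathcal{A}(Z)]$ as an integral against the joint density $\prod_{i,j} f_\theta(z_{i,j})$. The product structure gives
\begin{equation*}
\nabla_\theta \log \prod_{i,j} f_\theta(z_{i,j}) = \sum_{i=1}^n S_\theta(U_i).
\end{equation*}
Under the regularity conditions (dominated convergence, justified by the boundedness $\|\psi''\|_\infty\le c_2$ and integrability of $\mathcal{A}$, or by assuming $\mathcal{A}$ bounded after projection), differentiation passes under the integral:
\begin{equation*}
\frac{\partial}{\partial\theta_j}\mathbb{E}[\mathcal{A}(Z)_j] = \mathbb{E}\Big[\mathcal{A}(Z)_j \sum_i [S_\theta(U_i)]_j\Big].
\end{equation*}
Summing over $j$ gives $\sum_i \mathbb{E}\langle \mathcal{A}(Z), S_\theta(U_i)\rangle$. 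Since $\mathbb{E}[S_\theta(U_i)]=0$ and $\theta$ is deterministic, we may subtract $\theta$ inside the inner product, which gives exactly $\sum_i\mathbb{E}[A_i]$. If the estimator's internal randomness is independent of the data, the same argument applies after conditioning on it.

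The main obstacle is justifying the interchange of derivative and expectation uniformly in the relevant parameter region, especially when $\mathcal{A}$ is unbounded. We would handle this by restricting to estimators with values in a bounded set, which is without loss for the minimax risk via projection onto $\Theta_k\cap\{\|\theta\|_2\le C\}$, and then invoking dominated convergence with an envelope built from $\sup_\theta |y-\mu(x^\top\theta)|\,\|x\|_2$. The sub-Gaussian Assumption~\ref{asm:cov} gives this envelope finite moments.
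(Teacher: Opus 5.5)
Your proposal is correct and follows the paper's proof essentially step for step. For soundness, both use the independence of $\mathcal{A}(Z'_{(i)})$ from $U_i$, the user-level covariance $\frac{m}{c(\sigma)}\mathbb{E}[\psi''(x^\top\theta)xx^\top]$, a $\lambda_{\max}$ bound on the quadratic form, and $Z'_{(i)}\overset{d}{=}Z$. For completeness, both use $\sum_i S_\theta(U_i)=\nabla_\theta\log f_\theta(Z)$, the zero-mean score to drop $\theta$, and differentiation under the integral.

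Your extra dominated-convergence discussion is not needed, because the paper simply takes the interchange as part of the ``standard regularity conditions'' in the statement.
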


With Proposition~\ref{prop:user_level_score_attack_properties}, we bound the expected sum
$\sum_{i=1}^n \mathbb{E}[A_i]$ from above by Lemma~\ref{lem:sparse_upper_bound} and from below by Lemma~\ref{lem:sparse_lower_bound}. Comparing the two gives Theorem~\ref{thm:user_level_sparse_lower_bound}.

\noindent\textbf{Numerical results.}
On synthetic data, U2R attains mean relative coefficient error $0.00631$ at the default setting, comparable to non-private Lasso.
On real-world dataset Kelmarsh, it has the lowest mean test MSE among the private methods
at both $\epsilon=1$ and $4$ for each $m\in\{10,50,100\}$; Appendix~\ref{app:experiments} reports the detailed setup and complete results.

%\textcolor{purple}{We evaluate Algorithm~\ref{alg:pipeline} against user-level and sample-level private baselines on synthetic and real data, varying $\epsilon$, $n$, $m$ and $d$. Appendix~\ref{app:experiments} gives the setup and the results, including the sample-level non-interactive estimator of Section~\ref{sec:non SDP}.}

\section{Conclusion}
We studied sparse linear regression under shuffled DP. In the non-interactive setting, we
showed a lower bound on the $\ell_2$-norm estimation error of any $(\epsilon,\delta)$-SDP
protocol and gave an algorithm with a matching form, both bounds carrying a polynomial
dependence on $d$. In the interactive user-level
setting, we gave a two-round algorithm that first applies a heavy hitter step and then estimates the parameter, whose error does not depend polynomially on $d$. By extending the score attack to user-level data, we proved a lower bound for any user-level $(\epsilon,\delta)$-DP estimator, which shows that this algorithm is near-optimal. Experiments on synthetic and real-world datasets support these results.

\subsection*{AI use statement}
In this work, we used generative AI tools as coding assistants to implement the experimental algorithms. We have not used generative AI tools for drafting sections of the paper, generating synthetic datasets, or proving mathematical claims. Additionally, we used generative AI tools to revising wording and for organizing the bibliography. We have reviewed all AI-assisted work: the authors independently examined and verified all experimental scripts and mathematical derivations. We take full responsibility for the final content of this work, including text or artifacts produced with the aid of generative AI.

\subsection*{Ethics statement}
Our experiments use the publicly available Kelmarsh wind-farm dataset, released by Cubico Sustainable Investments Ltd under the Creative Commons Attribution 4.0 International (CC BY 4.0) license. We use the dataset in accordance with its license terms and provide appropriate attribution. The experiments analyze wind-turbine operational measurements and involve neither human participants nor personally identifiable information.

\subsection*{Reproducibility statement}
We have made every effort to ensure that our results can be reproduced from the paper and its appendix. For the theoretical results, the problem setting and assumptions are stated explicitly in Section~\ref{subsec:setup} (Assumptions~\ref{asm:model} and \ref{asm:cov}). The additional conditions used in each regime are given in Assumption~\ref{asm:curv_bounded} (Section~\ref{sec:non SDP}) and Assumption~\ref{asm:re} (Section~\ref{sec:user-level SDP}). Complete proofs of all claims are provided in the appendix. Theorems~\ref{thm:shuffled_dp_lower} and \ref{thm:non_interactive_upper} are proved in Appendix~\ref{app:proofs_non_interactive}. Proposition~\ref{prop:support_consistency}, Theorem~\ref{thm:interactive_user_level_sdp}, Proposition~\ref{prop:user_level_score_attack_properties} and Theorem~\ref{thm:user_level_sparse_lower_bound} are proved in Appendix~\ref{app:proofs_user_level}. The supplementary results on non-interactive user-level SDP and dense regression, together with their proofs, are in Appendix~\ref{app:additional}. For the algorithms, Appendix~\ref{app:algorithms} gives full pseudocode for the non-interactive protocol (Algorithm~\ref{alg:shuffled_sparse_regression}) and for both stages of U2R (Algorithms~\ref{alg:heavyhitter} and \ref{alg:sparse_estimation_1}). Tables~\ref{tab:param_non_interactive}--\ref{tab:param_stage2} list every parameter required by the theorems, including clipping radii, rounding precisions, binomial noise parameters, moduli, numbers of shares, and thresholds. For the experiments, Appendix~\ref{app:exp_setup} describes the synthetic data-generating process, the default values and sweep ranges, the number of repetitions, and the evaluation metrics. It also gives the hyperparameters of U2R and of all baselines, and how they were selected. The real-data experiments use the publicly available Kelmarsh wind farm dataset (CC BY 4.0). The same appendix details the full preprocessing pipeline, including feature construction, the date-based public/training/test split, imputation, standardization, and the assignment of rows to simulated users. We do not release source code with this submission. We believe these descriptions are sufficient to reimplement the algorithms and reproduce the reported results.

\bibliography{references}
\bibliographystyle{iclr2027_conference}

\appendix
\input{appendix}

\end{document}

%% file: math_commands.tex
\def\eqref#1{equation~\ref{#1}}
\def\1{\bm{1}}

\DeclareMathAlphabet{\mathsfit}{\encodingdefault}{\sfdefault}{m}{sl}
\SetMathAlphabet{\mathsfit}{bold}{\encodingdefault}{\sfdefault}{bx}{n}

%% file: appendix.tex
\appendix

\section{Detailed Comparison and Related Work}
\label{app:detailed_comparison}

Table \ref{tab:dp_regression_comparison} provides a comprehensive comparison of our theoretical results with prior literature on $(\epsilon, \delta)$-DP linear regression. This extends the summary provided in the main text by including bounds for specific edge cases (such as 1-sparse models) and detailing previous interactive local methods.

\begin{table}[htbp]
\centering
\caption{Detailed comparison of related studies on $(\epsilon, \delta)$-DP linear regression. Data assumptions and parameter definitions are identical to Table \ref{tab:dp_regression_summary}.}
\label{tab:dp_regression_comparison}
\footnotesize
\setlength{\tabcolsep}{3pt}
\renewcommand{\arraystretch}{1.8}
\begin{tabularx}{\linewidth}{@{}l l l >{\raggedright\arraybackslash\hsize=0.85\hsize}X >{\raggedright\arraybackslash\hsize=1.35\hsize}X >{\raggedright\arraybackslash\hsize=0.8\hsize}X@{}}
\hline
\textbf{Type} & \textbf{Unit} & \textbf{Setting} & \textbf{Upper Bound} & \textbf{Lower Bound} & \textbf{Source} \\ 
\hline\hline
\multicolumn{6}{@{}l}{\textbf{Central DP}} \\ \hline
\multirow{2}{*}{-} & sample & general & \mbox{$\tilde{\mathcal{O}}(\sqrt{\frac{d}{n}}+\frac{d}{n\epsilon})$} & \mbox{$\Omega(\sqrt{\frac{d}{n}}+\frac{d}{n\epsilon})$} & \citet{cai2021cost} \\ \cline{2-6} 
 & sample & k-sparse & \mbox{$\tilde{\mathcal{O}}(\sqrt{\frac{k}{n}}+\frac{k}{n\epsilon})$} & \mbox{$\Omega(\sqrt{\frac{k\log d}{n}}+\frac{k\log d}{n\epsilon})$} & \citet{cai2021cost} \\ 
\hline\hline
\multicolumn{6}{@{}l}{\textbf{Local DP}} \\ \hline
\multirow{2}{*}{Non-interactive} & sample & 1-sparse & - & \mbox{$\Omega(\sqrt{\frac{d \log d}{n\epsilon^2}})$} & \citet{wang2020sparse} \\ \cline{2-6} 
 & sample & k-sparse & \mbox{$\tilde{\mathcal{O}}( \frac{d \sqrt{k}}{\sqrt{n}\epsilon})$} & \mbox{$\Omega(\sqrt{\frac{dk \log d}{n\epsilon^2}})$} & \citet{zhu2023improved} \\ \cline{1-6} 
\multirow{4}{*}{Interactive} & sample & general & \mbox{$\tilde{\mathcal{O}}(\frac{{d^{\frac{3}{2}}}}{\sqrt{n}\epsilon})$} & - & \citet{smith_is_2017} \\ \cline{2-6} 
 & sample & 1-sparse & - & \mbox{$\Omega(\sqrt{\frac{d}{n\epsilon^2}})$} & \citet{wang2020sparse} \\ \cline{2-6} 
 & sample & k-sparse & \mbox{$\tilde{\mathcal{O}}(\frac{k\sqrt{d}}{\sqrt{n}\epsilon})$} & \mbox{$\Omega(\sqrt{\frac{dk}{n\epsilon^2}})$} & \citet{zhu2023improved} \\ \cline{2-6} 
 & user-level & k-sparse & \mbox{$\tilde{\mathcal{O}}( \sqrt{\frac{k}{nm}} + \frac{k}{\sqrt{nm} \epsilon} )$} & \mbox{$\Omega( \frac{k}{\sqrt{n m}\epsilon} )$} & \citet{ma2024better} \\ 
\hline\hline
\multicolumn{6}{@{}l}{\textbf{Shuffle DP}} \\ \hline
- & user-level & general & - & \mbox{$\Omega (\sqrt{\frac{d}{n m}} + \frac{d}{ n \sqrt{m} \epsilon} )$} & \multirow{4}{=}{\centering\textbf{Our Work}} \\ \cline{1-5} 
\multirow{2}{*}{Non-interactive} & user-level & k-sparse & - & \mbox{$\Omega ( \sqrt{\frac{k \log(d/k)}{nm}} + \frac{\sqrt{dk}}{n \sqrt{m} \epsilon} )$} &  \\ \cline{2-5} 
 & sample & k-sparse & \mbox{$\tilde{\mathcal{O}}( \sqrt{\frac{k}{n}} + \frac{d \sqrt{k}}{n \epsilon} )$} & \mbox{$\Omega( \frac{\sqrt{dk}}{n\epsilon} )$} &  \\ \cline{1-5} 
Interactive & user-level & k-sparse & \mbox{$\tilde{\mathcal{O}}( \sqrt{\frac{k}{n m}} + \frac{k}{n \sqrt{m} \epsilon} )$} & \mbox{$\Omega( \sqrt{\frac{k \log(d/k)}{n m}} + \frac{k \log(d/k)}{n\sqrt{m}\epsilon} )$} &  \\ 
\hline
\end{tabularx}
\end{table}

Private high-dimensional regression has been studied extensively in the central model of differential privacy. Early work by \citet{chaudhuri2011differentially} and \citet{kifer_private_2012} showed that private empirical risk minimization remains feasible even in high dimension, and \citet{talwar_nearly_2015} designed a nearly optimal private LASSO algorithm under an $\ell_1$ constraint. The central-DP optimization literature has since developed a much sharper understanding of stochastic convex optimization and private linear regression, including optimal-rate results in different geometries and heavy-tailed regimes \citep{bassily2019private, feldman2020private, asi2021private, wang2020differentially, kamath2022improved, hu2022high, liu2023near, varshney2022nearly,ding2026differentially,ding2024revisiting}. On the statistical side, \citet{cai2021cost} and \citet{cai2023score} established minimax rates and lower-bound techniques for private parameter estimation, showing that sample-level central DP only incurs logarithmic dependence on the ambient dimension in sparse problems. The score attack of \citet{cai2023score} is stated for datasets in which each record is one sample; the user-level score attack of Section~\ref{subsec:user_lower} extends it to the regime where a record is the whole local dataset of a user, which is what Theorem~\ref{thm:user_level_sparse_lower_bound} needs. These results provide the benchmark that one would ideally like to approach in decentralized settings.

The local model is considerably more restrictive. Foundational results on local privacy and statistical minimax rates are due to \citet{duchi2013local} and \citet{duchi_privacy_2014}, while later work studied the role of interaction in distributed private learning and local estimation \citep{smith_is_2017, zheng_collect_2017, daniely2019locally, hu2022high, acharya2022role, wang2021estimating,zhu2024truthful}. For sparse linear regression in particular, \citet{wang2020sparse} initiated the study of the local model, proving lower bounds in the $1$-sparse case and proposing an interactive private iterative hard-thresholding procedure under strong distributional assumptions. The later improved analysis of \citet{zhu2023improved} substantially sharpened this picture in two directions: it extended the lower bounds to general $k$-sparse regression, and it provided both the first explicit non-interactive LDP estimator for sparse regression and a corrected utility analysis for the interactive hard-thresholding line. Their results show that, even after these improvements, sample-level LDP sparse regression still exhibits a polynomial dependence on the ambient dimension $d$, with non-interactive and interactive upper bounds scaling on the order of $\widetilde O(d\sqrt{k}/(\sqrt n \epsilon))$ and $\widetilde O(k\sqrt d/(\sqrt n \epsilon))$, respectively. These phenomena are closely related to the non-private high-dimensional estimation literature, where canonical sparsity-aware procedures include the Dantzig selector \citep{candes2007dantzig}, elementary estimators \citep{yang2014elementary}, and iterative hard thresholding \citep{jain2014iterative}; minimax rates over sparse parameter classes are summarized by \citet{raskutti2011minimax}.

Several papers have therefore looked for stronger primitives or better structural assumptions. On the robust statistics side, clipped or Huberized estimators are now standard tools for controlling heavy-tailed noise and stabilizing private regression \citep{sun2020adaptive, kamath2020private}; these ideas also appear explicitly in the improved non-interactive LDP construction of \citet{zhu2023improved}, where clipped sufficient statistics are privatized locally before sparse recovery is performed at the server. Both of our protocols clip in the same way, and Algorithm~\ref{alg:shuffled_sparse_regression} privatizes the same clipped sufficient statistics, the difference being that the noise is added to the aggregate rather than to each record. On the decentralized side, \citet{ma2024better} moved to the user-level multi-sample regime and proposed the two-stage 2-SLR framework, where each user first locally screens candidate coordinates before a second-stage low-dimensional estimator is applied. Their result is especially relevant to our work because it shows that once multiple samples are grouped per user, the severe dimension dependence of standard sample-level LDP can be substantially alleviated. Algorithm~\ref{alg:pipeline} follows the same two-stage structure, and the local screening step of Algorithm~\ref{alg:heavyhitter} is a Lasso together with a threshold, which asks only for the restricted eigenvalue condition of Assumption~\ref{asm:re} rather than for uncorrelated features.

Shuffled differential privacy offers a middle ground between central DP and LDP. The basic model and its privacy amplification effect were formalized by \citet{cheu2019distributed} and \citet{erlingsson2019amplification}, and the amplification view was later refined through the privacy-blanket perspective \citep{balle2019privacy}. A separate line studies multi-message protocols that compute a sum in the shuffled model with the accuracy of the central model, both for real summation and for aggregation in federated learning \citep{balle2020private, ghazi2020private, girgis2024multi}. The summation step of Algorithm~\ref{alg:shuffled_sparse_regression} and of Stage 2 of Algorithm~\ref{alg:pipeline} is of this second type: the aggregator sees a noisy sum rather than $n$ separately randomized messages, so the noise can be calibrated to the $\ell_2$ sensitivity of the query. On the negative side, separations and limitations of shuffle privacy have been established for statistical tasks such as histogram estimation, together with the relation between shuffled DP and pan privacy \citep{balcer2019separating, cheu2021limits}, which is the route our non-interactive lower bound follows. Heavy-hitter primitives under local noise provide a practical mechanism for support identification in decentralized settings \citep{bun2019heavy}; Stage 1 of Algorithm~\ref{alg:pipeline} replaces the prefix tree of these protocols by one round of randomized response over $[d]$. However, despite this rapid progress, sparse linear regression in the shuffled model has remained largely open: the existing literature either analyzes general shuffled estimation primitives or studies local/user-level baselines rather than giving a sharp treatment of sparse regression itself. Our work fills this gap by characterizing both the non-interactive sample-level shuffled regime and the interactive user-level shuffled regime for sparse linear models.

\section{Detailed Algorithms and Parameter Settings}
\label{app:algorithms}

This section gives the pseudocode of the algorithms of Sections~\ref{sec:non SDP} and
\ref{sec:user-level SDP}, together with the parameter settings used by
Theorem~\ref{thm:non_interactive_upper}, Proposition~\ref{prop:support_consistency} and
Theorem~\ref{thm:interactive_user_level_sdp}. Throughout, $D = d(d+1)/2 + d = \Theta(d^2)$,
$v(\cdot)$ vectorizes the upper triangle of a symmetric matrix including the diagonal,
$\Pi_{B(0,R)}$ is the Euclidean projection onto the ball of radius $R$, and
$c_0, c_1, c_A, c_A', c_\lambda, c_p, B_\theta, C_\beta, C_\gamma$ are absolute constants.

Both protocols privatize a sum of integer vectors in the same way: each user rounds its
statistic to the integer grid, adds a share of a centered binomial noise, and uploads
additive shares of the result through parallel shufflers, one per coordinate. The noise
is calibrated to the $\ell_2$ sensitivity of the integer query rather than to its $\ell_1$
sensitivity, which is what keeps the privacy term at $d$ in
Theorem~\ref{thm:non_interactive_upper} and removes $d$ altogether in
Theorem~\ref{thm:interactive_user_level_sdp}.

Algorithm~\ref{alg:shuffled_sparse_regression} runs on all $n$ users at the budget
$(\epsilon, \delta)$. In Algorithm~\ref{alg:pipeline} the three groups of users, namely the
$n_1$ users of Stage 1, the $n_3$ users of Stage 2a and the $n_4$ users of Stage 2b, are
disjoint, so a neighboring pair of datasets changes the view of one group only and each group
runs at the full budget $(\epsilon, \delta)$ by parallel composition. The parameter settings
below are stated accordingly. The summation step requires at least $19$ participants per
group.

\subsection{Non-interactive Algorithm}
\label{app:alg_non_interactive}

\begin{algorithm}[ht]
    \caption{Sample-Level Non-Interactive Shuffled DP Sparse Linear Regression}
    \label{alg:shuffled_sparse_regression}
\begin{algorithmic}[1]
    \STATE {\bfseries Input:} Data $\{(x_i, y_i)\}_{i=1}^n$, sparsity $k$, privacy parameters
    $(\epsilon, \delta)$, parameters of Table~\ref{tab:param_non_interactive}.
    \STATE {\bfseries Local procedure} \quad (user $i \in [n]$)
    \STATE Compute $H_i = x_i x_i^\top$, $g_i = x_i y_i$, and set
    $F_i = (v(H_i); g_i) \in \mathbb{R}^{D}$.
    \STATE Clip: $\bar{F}_i = F_i \cdot \min\{1, C_F / \|F_i\|_2\}$.
    \STATE For each $j \in [D]$, round:
    $\bar{U}_{i,j} = \lfloor p \bar{F}_{i,j} \rfloor
    + \mathrm{Ber}(p \bar{F}_{i,j} - \lfloor p \bar{F}_{i,j} \rfloor)$.
    \STATE For each $j \in [D]$, draw
    $\zeta_{i,j} \sim \mathrm{Bin}(N_0, 1/2) - N_0/2$ and set
    $W_{i,j} = (\bar{U}_{i,j} + \zeta_{i,j}) \bmod q$.
    \STATE For each $j \in [D]$, split $W_{i,j}$ into $r$ additive shares in $\mathbb{Z}_q$ and
    send them, together with one unshuffled share, through the $j$-th shuffler.
    \STATE {\bfseries Shuffler:} permute the messages of each of the $D$ streams independently.
    \STATE {\bfseries Aggregator}
    \STATE For each $j \in [D]$, sum the received shares modulo $q$ to get $S_j$, and set
    $S_j \leftarrow S_j - q$ if $S_j > q/2$.
    \STATE Rescale $\hat{F}_j = S_j / (n p)$, fill $\hat{H} \in \mathbb{R}^{d \times d}$ from
    the first $d(d+1)/2$ entries of $\hat{F}$ and $\hat{g} \in \mathbb{R}^d$ from the last $d$.
    \STATE Solve $\tilde{\theta} = \hat{H}^{-1} \hat{g}$, soft-threshold
    $\hat{\theta} = \mathcal{S}_{\lambda_n}(\tilde{\theta})$, and project $\hat{\theta}$ onto
    $\{\theta : \|\theta\|_1 \le 1\}$.
    \STATE {\bfseries Output:} the $k$-sparse estimate $\hat{\theta}$.
\end{algorithmic}
\end{algorithm}

The parameter $N_0$ is even, so the noise of a user is integer valued, and $N_0 \ge 2$, so
every user holds at least one Bernoulli trial per coordinate and the shares of the $n$ users
add up to a centered $\mathrm{Bin}(n N_0, 1/2)$ noise on each coordinate. The choice of $p$
keeps the rounding error of the same order as the binomial noise, so that both are absorbed
into $\lambda_n$. The modulus $q$ is large enough that the sum of the uploaded values does not
wrap around. Under Assumption~\ref{asm:curv_bounded} we have $\|F_i\|_2 \le c_x^2 d + c_x c_y \sqrt{d}$, so the clipping step does not change $F_i$ once
$c_0 \ge c_x^2 + c_x c_y$; the step is kept so that the algorithm also applies to data that are
not bounded, where the factor $\log n$ in $C_F$ covers the tail of $\|F_i\|_2$, which then
exceeds $C_F$ with probability at most $n^{-3}$.

\begin{table}[ht]
\caption{Parameters of Algorithm~\ref{alg:shuffled_sparse_regression}, used in
Theorem~\ref{thm:non_interactive_upper}.}
\label{tab:param_non_interactive}
\centering
\small
\renewcommand{\arraystretch}{1.5}
\begin{tabularx}{\linewidth}{@{}l l >{\raggedright\arraybackslash}X@{}}
\toprule
Symbol & Setting & Role \\
\midrule
$C_F$ & $c_0 \, d \log n$ &
  $\ell_2$ clipping radius \\
$p$ & $\Big\lceil \dfrac{\sqrt{n}\, \epsilon}{d \log n \sqrt{\log(4D/\delta)}} \Big\rceil$ &
  rounding precision \\
$\Delta_2^{(p)}$ & $2 p C_F + \sqrt{D}$ &
  $\ell_2$ sensitivity of the integer query \\
$M$ & $\Big\lceil \dfrac{32 (\Delta_2^{(p)})^2 \log(4D/\delta)}{\epsilon^2} \Big\rceil$ &
  total binomial parameter \\
$N_0$ & $2 \lceil M / (2n) \rceil$ &
  binomial parameter per user and coordinate \\
$q$ & $4 n (p C_F + N_0)$ &
  modulus of the group $\mathbb{Z}_q$ \\[5pt]
$r$ & $\Big\lceil \dfrac{2 \log_2(8 D e^{\epsilon}/\delta) + \log_2 q}
  {\log_2 n - \log_2 e} \Big\rceil + 1$ &
  number of additive shares per coordinate \\[7pt]
$\lambda_n$ & $\dfrac{c_1}{\kappa_\infty} \Big( \sigma \sqrt{\dfrac{\log d}{n}}
  + \dfrac{C_F \sqrt{\log(4D/\delta) \log D}}{n \epsilon} \Big)$ &
  soft-thresholding parameter \\[3pt]
\bottomrule
\end{tabularx}
\end{table}

\subsection{Stage 1: Support Recovery}
\label{app:alg_stage1}

\begin{table}[ht]
\caption{Parameters of Algorithm~\ref{alg:heavyhitter}, used in
Proposition~\ref{prop:support_consistency}.}
\label{tab:param_stage1}
\centering
\small
\renewcommand{\arraystretch}{1.5}
\begin{tabularx}{\linewidth}{@{}l l >{\raggedright\arraybackslash}X@{}}
\toprule
Symbol & Setting & Role \\
\midrule
$\lambda_{\mathrm{loc}}$ & $c_\lambda \, \sigma \sqrt{\dfrac{\log d}{m}}$,
  \; with $\lambda_{\mathrm{loc}} \le \dfrac{\kappa \rho}{16\sqrt{k}}$ &
  regularization parameter of the local Lasso \\
$\rho/2$ & --- &
  threshold of the local selection step \\
$\gamma$ & $\dfrac{C_\gamma \log(1/\delta)}{n_1 \epsilon^2}$ &
  flipping probability of the randomized response \\
$\tau_{HH}$ & $\dfrac{n_1}{4k}$ &
  threshold on the debiased counts \\
\bottomrule
\end{tabularx}
\end{table}

\begin{algorithm}[ht]
    \caption{Heavy Hitter Support Recovery (Stage 1)}
    \label{alg:heavyhitter}
\begin{algorithmic}[1]
    \STATE {\bfseries Input:} Local datasets $\{(X_i, y_i)\}_{i=1}^{n_1}$, sparsity $k$,
    signal strength $\rho$, privacy parameters $(\epsilon, \delta)$, parameters of
    Table~\ref{tab:param_stage1}.
    \STATE {\bfseries Local procedure} \quad (user $i \in [n_1]$)
    \STATE Solve the Lasso
    $\hat{\theta}^{(i)} = \arg\min_{\theta \in \mathbb{R}^d}
    \frac{1}{2m} \|y_i - X_i \theta\|_2^2 + \lambda_{\mathrm{loc}} \|\theta\|_1$.
    \STATE Set $\hat{S}_i = \{\ell \in [d] : |\hat{\theta}^{(i)}_\ell| \ge \rho/2\}$, and
    $\hat{S}_i = [d]$ if this set is empty.
    \STATE Draw $v_i \sim \mathrm{Unif}(\hat{S}_i)$.
    \STATE For each $\ell \in [d]$, send
    $r_{i\ell} = \mathbb{I}[v_i = \ell] \oplus \mathrm{Ber}(\gamma)$ through the $\ell$-th
    shuffler.
    \STATE {\bfseries Shuffler:} permute the messages of each of the $d$ streams independently.
    \STATE {\bfseries Aggregator}
    \STATE For each $\ell \in [d]$, form the count $C_\ell = \sum_{i=1}^{n_1} r_{i\ell}$ and
    debias it: $\hat{W}_\ell = (C_\ell - n_1 \gamma) / (1 - 2\gamma)$.
    \STATE Set $\hat{S} = \{\ell \in [d] : \hat{W}_\ell \ge \tau_{HH}\}$ and
    $s = |\hat{S}|$.
    \STATE {\bfseries Output:} the support estimate $\hat{S}$.
\end{algorithmic}
\end{algorithm}

The window for $\lambda_{\mathrm{loc}}$ is non-empty under the first term of
\eqref{eq:m_condition}: the lower end is what dominates the score
$\|m^{-1} X_i^\top \zeta_i\|_\infty$ of one user with probability at least $1 - 1/(64k)$, and the
upper end is what turns the $\ell_2$ rate of the Lasso into the coordinatewise control needed by
the threshold $\rho/2$.
The value of $\gamma$ is the smallest one at which the counts are private. It satisfies
$\gamma \le 1/4$ as long as $\delta \ge (dn)^{-C k^2 \epsilon^2}$, so that the debiasing factor
$1/(1 - 2\gamma)$ is at most $2$, and under \eqref{eq:n1_condition} the fluctuation of the
debiased counts stays below $n_1/(16k)$, so that $\tau_{HH}$ separates the two groups of
coordinates.

\subsection{Stage 2: Constrained Estimation}
\label{app:alg_stage2}

The $n_2 = n - n_1$ remaining users are split into two groups of sizes
$n_3 = \lfloor n_2 / 2 \rfloor$ and $n_4 = n_2 - n_3$. Both groups upload through the same
summation step as Algorithm~\ref{alg:shuffled_sparse_regression}, with $s$ parallel shufflers
instead of $D$. The quantity $\hat{A}$ broadcast with $\tau$ is a function of the public
parameters and of $\hat{S}$, which has already been released, so broadcasting $\tau$ costs no
privacy budget.

\begin{algorithm}[ht]
    \caption{Clipped Sparse Estimation (Stage 2)}
    \label{alg:sparse_estimation_1}
\begin{algorithmic}[1]
    \STATE {\bfseries Input:} Support $\hat{S}$ with $s = |\hat{S}|$, local datasets of the
    $n_2$ users, privacy parameters $(\epsilon, \delta)$, parameters of
    Table~\ref{tab:param_stage2}.
    \STATE {\bfseries Local procedure} \quad (every user of Stage 2)
    \STATE Compute the local least squares estimate
    $\tilde{\theta}_i = (X_{i,\hat{S}}^\top X_{i,\hat{S}})^{-1} X_{i,\hat{S}}^\top y_i
    \in \mathbb{R}^{s}$.
    \STATE {\bfseries Stage 2a} \quad (the first $n_3$ users)
    \STATE Clip $\bar{\theta}_i = \Pi_{B(0, R_0)}(\tilde{\theta}_i)$.
    \STATE For each $j \in [s]$, round $p_a \bar{\theta}_{i,j}$ to the integer grid, add
    $\mathrm{Bin}(N_0^{(a)}, 1/2) - N_0^{(a)}/2$, reduce modulo $q_a$, and send $r_a$ additive
    shares together with one unshuffled share through the $j$-th shuffler.
    \STATE {\bfseries Aggregator:} decode the shuffled messages, rescale by $n_3 p_a$ to get
    the coarse estimate $c \in \mathbb{R}^{s}$, and broadcast
    $\tau = \tau_\beta + \hat{A}$.
    \STATE {\bfseries Stage 2b} \quad (the remaining $n_4$ users)
    \STATE Clip around the coarse estimate:
    $\Delta_i = \Pi_{B(0, \tau)}(\tilde{\theta}_i - c)$.
    \STATE For each $j \in [s]$, round $p_b \Delta_{i,j}$ to the integer grid, add
    $\mathrm{Bin}(N_0^{(b)}, 1/2) - N_0^{(b)}/2$, reduce modulo $q_b$, and send $r_b$ additive
    shares together with one unshuffled share through the $j$-th shuffler.
    \STATE {\bfseries Aggregator:} decode the shuffled messages, rescale by $n_4 p_b$, and add
    back $c$ to get $\hat{\theta}_{\hat{S}} \in \mathbb{R}^{s}$.
    \STATE {\bfseries Output:} the estimate $\hat{\theta} \in \mathbb{R}^d$ that equals
    $\hat{\theta}_{\hat{S}}$ on $\hat{S}$ and zero elsewhere.
\end{algorithmic}
\end{algorithm}

The two groups of Stage 2 use the same summation step with different sizes and clipping
radii. To state their parameters once, we index them by $g \in \{a, b\}$ and set
\begin{equation*}
    (n_a, R_a) = (n_3, R_0) , \qquad (n_b, R_b) = (n_4, \tau) ,
\end{equation*}
so that $p_g, \Delta_g, M_g, N_0^{(g)}, q_g, r_g$ in Table~\ref{tab:param_stage2} are the
parameters of Stage 2a for $g = a$ and of Stage 2b for $g = b$.

\begin{table}[ht]
\caption{Parameters of Algorithm~\ref{alg:sparse_estimation_1}, used in
Theorem~\ref{thm:interactive_user_level_sdp}. The index $g \in \{a, b\}$ refers to Stage 2a
and Stage 2b, with $(n_a, R_a) = (n_3, R_0)$ and $(n_b, R_b) = (n_4, \tau)$.}
\label{tab:param_stage2}
\centering
\small
\renewcommand{\arraystretch}{1.5}
\begin{tabularx}{\linewidth}{@{}l l >{\raggedright\arraybackslash}X@{}}
\toprule
Symbol & Setting & Role \\
\midrule
$\tau_\beta$ & $C_\beta \, \sigma \sqrt{\dfrac{s \log n}{m}}$ &
  uniform bound on $\|\tilde{\theta}_i - \theta^*_{\hat{S}}\|_2$ \\
$R_0$ & $B_\theta + \tau_\beta$ &
  clipping radius of Stage 2a \\
$\hat{A}$ & $c_A \, \sigma \sqrt{\dfrac{s \log n}{n_3 m}}
  + c_A' \, \dfrac{R_0 \sqrt{s \log n \log(4s/\delta)}}{n_3 \epsilon}$ &
  error bound of the coarse estimate $c$ \\
$\tau$ & $\tau_\beta + \hat{A}$ &
  clipping radius of Stage 2b \\
\midrule
$p_g$ & $\max\Big\{ \dfrac{c_p \sqrt{n_g \log n}\, \epsilon}{R_g \sqrt{\log(1/\delta)}}, \;
  \dfrac{(\sqrt{s} + 2) \max\{1, \epsilon\}}{R_g} \Big\}$ &
  rounding precision \\
$\Delta_g$ & $2 p_g R_g + \max\{1, \epsilon\} \sqrt{s} + 2$ &
  $\ell_2$ sensitivity of the integer query; $\Delta_g \le 3 p_g R_g$ \\
$M_g$ & $\Big\lceil \dfrac{32 \Delta_g^2 \log(4s/\delta)}{\epsilon^2} \Big\rceil$ &
  total binomial parameter \\
$N_0^{(g)}$ & $2 \lceil M_g / (2 n_g) \rceil$ &
  binomial parameter per user and coordinate \\
$q_g$ & $4 n_g (p_g R_g + N_0^{(g)})$ &
  modulus of the group $\mathbb{Z}_{q_g}$ \\
$r_g$ & $\Big\lceil \dfrac{2 \log_2(8 s e^{\epsilon}/\delta) + \log_2 q_g}
  {\log_2 n_g - \log_2 e} \Big\rceil + 1$ &
  number of additive shares per coordinate \\
\bottomrule
\end{tabularx}
\end{table}

The constant $B_\theta$ is any known upper bound on $\|\theta^*\|_2$, which
Assumption~\ref{asm:model} provides; taking $B_\theta = 1$ is enough on the parameter space we
consider, so that $R_0 = \Theta(1)$. The second term in the setting of $p_g$ makes
$\Delta_g \le 3 p_g R_g$ and $\Delta_g \ge \sqrt{s} \max\{1, \epsilon\}$ hold for every value of
$s$, at the cost of a larger $N_0^{(g)}$, that is of more messages per user; it does not change
the noise seen by the aggregator. Indeed the precision $p_g$ cancels against the sensitivity
$\Delta_g$, and that noise is
$\Theta\big( R_0 \sqrt{s \log n \log(4s/\delta)} / (n_3 \epsilon) \big)$ for the coarse
estimate and $\Theta\big( \tau \sqrt{s \log n \log(4s/\delta)} / (n_4 \epsilon) \big)$ for the
refined one. Since $\tau = \Theta(\tau_\beta)$ under the condition \eqref{eq:thm3_condition},
the second scale is what produces the privacy term of
Theorem~\ref{thm:interactive_user_level_sdp}, and it involves $s$ rather than $d$.

\section{Experimental Setup and Results}
\label{app:experiments}

We evaluate U2R on synthetic sparse linear regression and nacelle
temperature prediction using the Kelmarsh wind farm dataset.

\subsection{Experimental Setup}
\label{app:exp_setup}

We compare U2R in Algorithm~\ref{alg:pipeline} with
2-SLR~\citep{ma2024better}, LDP-IHT~\citep{zhu2023improved},
Sample-SDP in Algorithm~\ref{alg:shuffled_sparse_regression}, and
non-private Lasso fitted to all $nm$ observations. U2R and 2-SLR protect
users; LDP-IHT and Sample-SDP protect individual observations. U2R and
Sample-SDP use shuffled $(\epsilon,\delta)$-DP with $\delta=10^{-5}$;
2-SLR and LDP-IHT use pure $\epsilon$-LDP. For the same protected unit,
pure $\epsilon$-DP implies $(\epsilon,10^{-5})$-DP, so we compare utility
at the same numerical budget while retaining the differences in privacy
units and trust models.

For synthetic data, each user holds $m$ independent observations
\[
 x_{ij}\sim\mathcal{N}(0,I_d),\qquad
 y_{ij}=x_{ij}^{\top}\theta^*+\zeta_{ij},\qquad
 \zeta_{ij}\sim\mathcal{N}(0,\sigma^2),
\]
with covariates independent of residuals. We choose $k$ coordinates
uniformly, set their coefficients to $1/k$, and set the others to zero.
The public defaults are $n=15000$, $m=50$, $d=256$, $k=8$,
$\sigma=0.25$, and $\epsilon=4$; each sweep varies one quantity.
The realized support is used only for evaluation.

Each setting has 30 repetitions in Exp1 and Exp2 and ten in Exp3
and Exp4, with shared training data and $4000$ independent test
observations across methods. Matching settings and
privacy budgets reuse the data and randomization, with one Lasso fit per
repetition. We simulate aggregate mechanism outputs and measure relative
coefficient error $\|\hat\theta-\theta^*\|_2/\|\theta^*\|_2$, test MSE,
support recall $|\hat S\cap S^*|/k$, and exact recovery.

U2R uses disjoint user groups of sizes $\lfloor n/2\rfloor$ for selection
and two equal remaining groups for coarse and refined estimation. Its local Lasso penalty is
$0.01\sigma\sqrt{\log(dn)/m}$, with coefficient threshold $\rho/2$ and
count threshold $n_1/(4k)$, where $\rho=1/k$. The response probability is
the larger of $32\log(1/\delta)/(n_1\epsilon^2)$ and the value calibrated
from shuffled binary response or the bound for two bits under pure local
DP. Coordinates are shuffled independently. Stage 2 follows
Algorithm~\ref{alg:sparse_estimation_1} with local OLS, clipping radius
and rounding coefficients set to one, and privacy calibration for finite
samples, including the summation protocol.

For 2-SLR, selection uses the $64$ largest positive marginal correlations,
selection size $2$, $B=1$, and $4$ bins. LDP-IHT uses $T=2$ and gradient
radius $4$; separate development seeds select $\eta=1.5$, $\tau_x=1.5$,
$\tau_y=0.75$, $k'=10$ for $\epsilon<4$, and $\eta=2$, $\tau_x=1$,
$\tau_y=0.5$, $k'=8$ otherwise. Sample-SDP uses $c_0=0.1$ and $c_1=2$.
Lasso uses five fold cross validation over $300$ penalties, no intercept,
tolerance $10^{-4}$, and at most $3000$ iterations. These rules are fixed
before the sweeps.

Kelmarsh contains measurements every ten minutes from six turbines over
2017 to 2022. We predict nacelle temperature from $90$ predictors:
$59$ physical mean signals, four cyclic time features, $22$ missing value
indicators, and five turbine indicators. Whole UTC dates are split into
public, training, and test sets in nominal proportions $20\%$, $70\%$,
and $10\%$, leaving $1{,}295{,}987$ training and $185{,}416$ test rows
after removing missing responses. Separate public dates provide $80{,}000$
rows for preprocessing and $40{,}000$ for validation. Public medians fill
missing values; public means and standard deviations scale features and
responses, with indicators centered.

Each real data repetition samples $nm$ training rows without replacement
and assigns them to $n=10000$ simulated users. We test $m\in\{10,50,100\}$
and $\epsilon\in\{1,4\}$, sharing rows and groups across methods and
budgets. The test set is fixed.
U2R uses $\rho=0.05$, a unit parameter norm bound, and $\sigma=0.25$.
Public validation over all six settings and three separate seeds selects
$k=12$ from $\{8,10,12\}$. After count thresholding, U2R keeps at most
$\min\{k,m-2\}$ coordinates ranked by debiased private counts and calibrates
Stage 2 to the selected dimension.

For real data, 2-SLR uses $16$ bins. LDP-IHT uses $T=5$, $\eta=0.1$,
$\tau_x=1$, $\tau_y=0.5$, and $k'=8$. Sample-SDP uses $c_0=0.03$,
$c_1=1$, and diagonal shift
$\max\{0,-\lambda_{\min}(\widetilde G)\}+4\sqrt d\,s_G$,
where $s_G$ is the noise standard deviation per coordinate.
Lasso cross validation is grouped by UTC date. Other settings remain
unchanged. We report ten repetitions
of MSE in the original units and
its ratio to the paired Lasso fit, reused across budgets.
\subsection{Main Results}
\label{app:exp_main}

\begin{samepage}
\noindent\textbf{Privacy budget and number of users.}
Figure~\ref{fig:exp_epsilon_users} shows relative coefficient error
as $\epsilon$ varies in Exp1 and $n$ varies in Exp2. In Exp1, U2R has
the lowest mean error among the private methods at every tested budget,
decreasing from $0.04260$ at $\epsilon=0.5$ to $0.00544$ at
$\epsilon=32$. At $\epsilon=4$, its error is $0.00631$, comparable to
non-private Lasso. Most of the reduction occurs between $\epsilon=0.5$
and $4$, with smaller gains at larger budgets.\par
\end{samepage}

\begin{samepage}
In Exp2, increasing $n$ from $8000$ to $20000$ at $m=50$ reduces
U2R's mean error from $0.01022$ to $0.00496$. Each private method has
lower mean error at the larger user count. U2R recovers the exact
support in every repetition, so its remaining coefficient error lies
on the active coordinates. Section~\ref{app:exp_support} examines
support recovery as $n$ and $m$ vary.\par
\end{samepage}

\begin{figure}[!htbp]
\centering
\includegraphics[width=\linewidth]{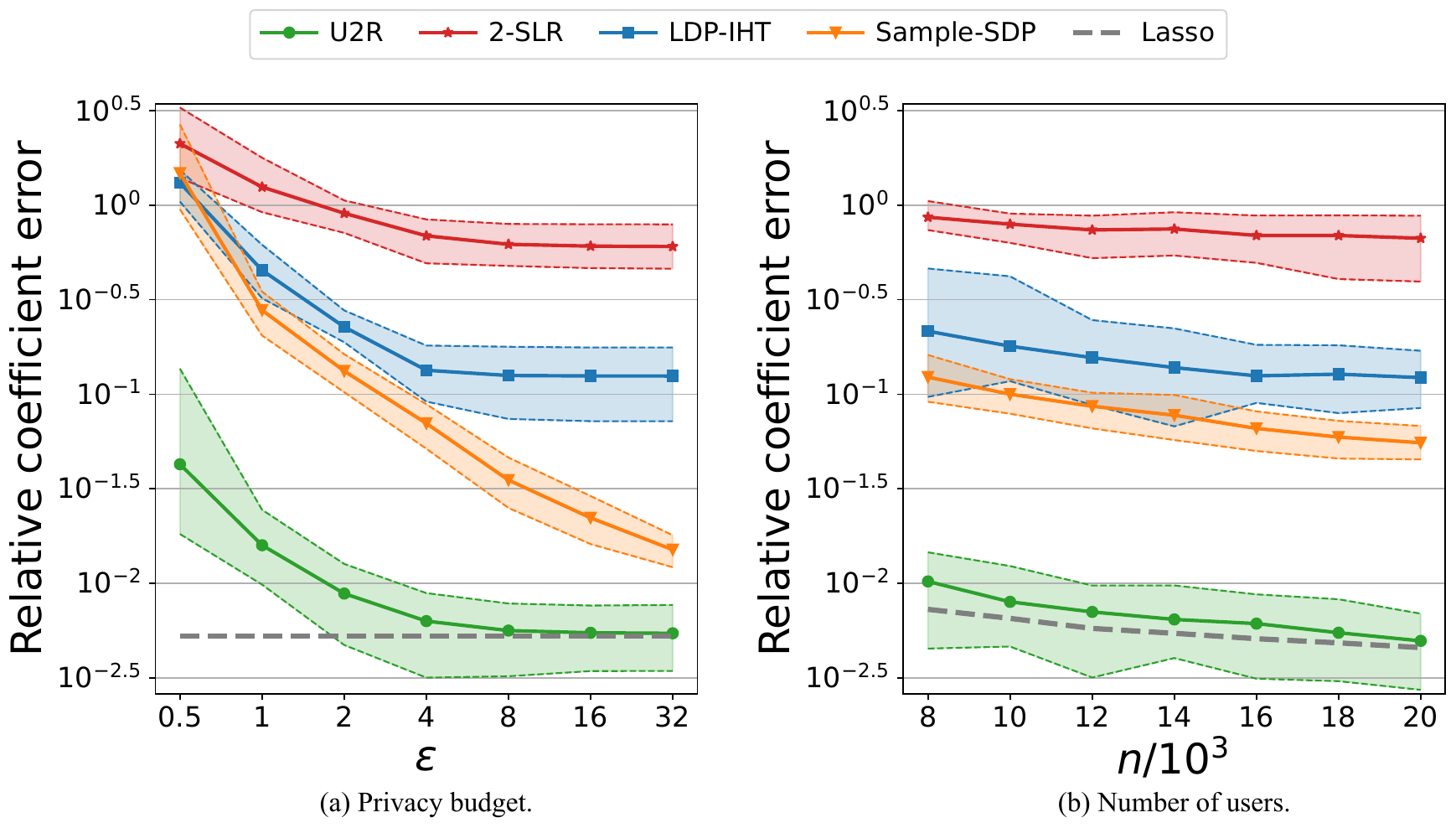}
\caption{Relative coefficient error versus $\epsilon$ in the left,
and $n$ in the right, with default $n=15000$, $m=50$, $d=256$, $k=8$ and $\epsilon=4$. Lines show means over 30 repetitions; shaded bands span the empirical 95\% coverage of 2.5th--97.5th percentiles.}
\label{fig:exp_epsilon_users}
\end{figure}

\begin{samepage}
\noindent\textbf{Dimension $d$.}
In Exp3, U2R's mean error ranges from $0.00553$ to $0.00659$ as $d$
increases from $64$ to $512$, with $k=8$ and $m=50$ fixed, as shown in
Table~\ref{tab:exp_dimension_10seeds}. It remains close to non-private
Lasso across these dimensions. Sample-SDP's error increases from
$0.01849$ to $0.14111$ over the same range.\par
\end{samepage}

\begin{table}[!htbp]
\caption{Exp3: relative coefficient error versus dimension at $n=15000$,
$m=50$, and $\epsilon=4$. Entries are means
over ten repetitions.}
\label{tab:exp_dimension_10seeds}
\centering
\small
\renewcommand{\arraystretch}{1.5}
\setlength{\tabcolsep}{6pt}
\begin{tabularx}{\linewidth}{@{}r *{5}{>{\raggedleft\arraybackslash}X}@{}}
\toprule
$d$ & U2R & 2-SLR & LDP-IHT & Sample-SDP & Lasso \\
\midrule
64 & $0.00643$ & $0.49079$ & $0.13262$ & $0.01849$ & $0.00450$ \\
\addlinespace[3pt]
128 & $0.00659$ & $0.61461$ & $0.12595$ & $0.03571$ & $0.00453$ \\
\addlinespace[3pt]
256 & $0.00553$ & $0.64914$ & $0.13354$ & $0.07027$ & $0.00520$ \\
\addlinespace[3pt]
512 & $0.00598$ & $0.77533$ & $0.18692$ & $0.14111$ & $0.00522$ \\
\bottomrule
\end{tabularx}
\end{table}

\begin{samepage}
\noindent\textbf{Real data prediction.}
Table~\ref{tab:exp_real_ratio} reports test MSE relative to non-private
Lasso in Exp5. U2R has the lowest mean ratio among the private methods
in all six settings. As $m$ increases from $10$ to $100$, its ratio
decreases from $1.48261$ to $1.12978$ at $\epsilon=1$ and from $1.43505$
to $1.13213$ at $\epsilon=4$. More observations per user bring U2R
closer to Lasso. Increasing $\epsilon$ reduces its mean prediction error
at $m=10$ and $50$; at $m=100$, the ratios are similar under both budgets.\par
\end{samepage}

\begin{table}[!htbp]
\caption{Exp5: Kelmarsh test MSE divided by the paired non-private Lasso MSE at $n=10000$ and $d=90$. U2R and Sample-SDP use \mbox{$\delta=10^{-5}$}. Entries are mean $\pm$ sample standard deviation over ten repetitions.}
\label{tab:exp_real_ratio}
\centering
\small
\renewcommand{\arraystretch}{1.5}
\setlength{\tabcolsep}{6pt}
\begin{tabularx}{\linewidth}{@{}l r *{4}{>{\raggedleft\arraybackslash}X}@{}}
\toprule
Budget & $m$ & U2R & 2-SLR & LDP-IHT & Sample-SDP \\
\midrule
 & $10$ & {\renewcommand{\arraystretch}{1}\begin{tabular}{@{}r@{}}$1.48261$ \\ $\pm 0.04759$\end{tabular}} & {\renewcommand{\arraystretch}{1}\begin{tabular}{@{}r@{}}$11.21592$ \\ $\pm 10.06853$\end{tabular}} & {\renewcommand{\arraystretch}{1}\begin{tabular}{@{}r@{}}$3.15212$ \\ $\pm 0.72151$\end{tabular}} & {\renewcommand{\arraystretch}{1}\begin{tabular}{@{}r@{}}$3.14939$ \\ $\pm 0.36992$\end{tabular}} \\
\addlinespace[3pt]
$\epsilon=1$ & $50$ & {\renewcommand{\arraystretch}{1}\begin{tabular}{@{}r@{}}$1.17245$ \\ $\pm 0.02983$\end{tabular}} & {\renewcommand{\arraystretch}{1}\begin{tabular}{@{}r@{}}$3.08118$ \\ $\pm 1.14883$\end{tabular}} & {\renewcommand{\arraystretch}{1}\begin{tabular}{@{}r@{}}$2.36752$ \\ $\pm 0.39188$\end{tabular}} & {\renewcommand{\arraystretch}{1}\begin{tabular}{@{}r@{}}$1.91000$ \\ $\pm 0.31047$\end{tabular}} \\
\addlinespace[3pt]
 & $100$ & {\renewcommand{\arraystretch}{1}\begin{tabular}{@{}r@{}}$1.12978$ \\ $\pm 0.02725$\end{tabular}} & {\renewcommand{\arraystretch}{1}\begin{tabular}{@{}r@{}}$3.23649$ \\ $\pm 1.62979$\end{tabular}} & {\renewcommand{\arraystretch}{1}\begin{tabular}{@{}r@{}}$2.25214$ \\ $\pm 0.45721$\end{tabular}} & {\renewcommand{\arraystretch}{1}\begin{tabular}{@{}r@{}}$1.50741$ \\ $\pm 0.12895$\end{tabular}} \\
\midrule
 & $10$ & {\renewcommand{\arraystretch}{1}\begin{tabular}{@{}r@{}}$1.43505$ \\ $\pm 0.01006$\end{tabular}} & {\renewcommand{\arraystretch}{1}\begin{tabular}{@{}r@{}}$3.07791$ \\ $\pm 1.29223$\end{tabular}} & {\renewcommand{\arraystretch}{1}\begin{tabular}{@{}r@{}}$2.44676$ \\ $\pm 0.59026$\end{tabular}} & {\renewcommand{\arraystretch}{1}\begin{tabular}{@{}r@{}}$2.05930$ \\ $\pm 0.36280$\end{tabular}} \\
\addlinespace[3pt]
$\epsilon=4$ & $50$ & {\renewcommand{\arraystretch}{1}\begin{tabular}{@{}r@{}}$1.15861$ \\ $\pm 0.02794$\end{tabular}} & {\renewcommand{\arraystretch}{1}\begin{tabular}{@{}r@{}}$2.08907$ \\ $\pm 0.16864$\end{tabular}} & {\renewcommand{\arraystretch}{1}\begin{tabular}{@{}r@{}}$2.36333$ \\ $\pm 0.41893$\end{tabular}} & {\renewcommand{\arraystretch}{1}\begin{tabular}{@{}r@{}}$1.48151$ \\ $\pm 0.27440$\end{tabular}} \\
\addlinespace[3pt]
 & $100$ & {\renewcommand{\arraystretch}{1}\begin{tabular}{@{}r@{}}$1.13213$ \\ $\pm 0.00306$\end{tabular}} & {\renewcommand{\arraystretch}{1}\begin{tabular}{@{}r@{}}$2.19689$ \\ $\pm 0.26004$\end{tabular}} & {\renewcommand{\arraystretch}{1}\begin{tabular}{@{}r@{}}$2.32449$ \\ $\pm 0.33028$\end{tabular}} & {\renewcommand{\arraystretch}{1}\begin{tabular}{@{}r@{}}$1.37230$ \\ $\pm 0.12374$\end{tabular}} \\
\bottomrule
\end{tabularx}
\end{table}

\subsection{Additional Results}
\label{app:exp_support}

\begin{table}[htbp]
\caption{Exp4: Stage 1 support recall and exact recovery. Recall entries
are mean $\pm$ sample standard deviation in percentage points; exact recovery
is the number of repetitions with $\hat S=S^*$ out of ten.}
\label{tab:exp_support_10seeds}
\centering
\small
\renewcommand{\arraystretch}{1.5}
\setlength{\tabcolsep}{6pt}
\begin{tabularx}{\linewidth}{@{}r *{4}{>{\raggedleft\arraybackslash}X}@{}}
\toprule
Value & \shortstack[r]{U2R recall\\in \%} & \shortstack[r]{2-SLR recall\\in \%} & U2R exact & 2-SLR exact \\
\midrule
\multicolumn{5}{l}{Exp4a: vary $m$; $n=15000$, $d=256$, $\epsilon=4$} \\
25 & $3.75 \pm 8.44$ & $26.25 \pm 13.76$ & 0/10 & 0/10 \\
\addlinespace[2pt]
50 & $100.00 \pm 0.00$ & $63.75 \pm 12.43$ & 10/10 & 0/10 \\
\addlinespace[2pt]
75 & $100.00 \pm 0.00$ & $72.50 \pm 16.46$ & 10/10 & 1/10 \\
\addlinespace[2pt]
100 & $100.00 \pm 0.00$ & $76.25 \pm 12.43$ & 10/10 & 0/10 \\
\midrule
\multicolumn{5}{l}{Exp4b: vary $n$; $m=50$, $d=256$, $\epsilon=4$} \\
8000 & $100.00 \pm 0.00$ & $35.00 \pm 12.91$ & 10/10 & 0/10 \\
\addlinespace[2pt]
10000 & $100.00 \pm 0.00$ & $46.25 \pm 10.29$ & 10/10 & 0/10 \\
\addlinespace[2pt]
12000 & $100.00 \pm 0.00$ & $50.00 \pm 10.21$ & 10/10 & 0/10 \\
\addlinespace[2pt]
15000 & $100.00 \pm 0.00$ & $63.75 \pm 12.43$ & 10/10 & 0/10 \\
\addlinespace[2pt]
18000 & $100.00 \pm 0.00$ & $53.75 \pm 15.65$ & 10/10 & 0/10 \\
\addlinespace[2pt]
20000 & $100.00 \pm 0.00$ & $61.25 \pm 12.43$ & 10/10 & 0/10 \\
\bottomrule
\end{tabularx}
\end{table}

\noindent\textbf{Support recovery.}
Table~\ref{tab:exp_support_10seeds} compares the Stage 1 selectors in
Exp4. At $n=15000$, U2R's mean recall increases from $3.75\%$ at $m=25$
to $100\%$ at $m=50$ and remains $100\%$ at $m=75$ and $100$.
It recovers the exact support in all ten repetitions at each of these
three larger values of $m$. The 2-SLR recall increases from $26.25\%$
to $76.25\%$ over the same sweep. With $m=50$, U2R recovers the exact
support in every repetition at all tested user counts from $8000$ to
$20000$. The 2-SLR recall ranges from $35.00\%$ to $63.75\%$.\par

\begin{samepage}
\noindent\textbf{Prediction error.}
At the default setting, U2R and Lasso have mean test MSEs of $0.0625570$
and $0.0625521$, respectively, close to the residual variance
$\sigma^2=0.0625$. LDP-IHT and Sample-SDP yield $0.0648534$
and $0.0631971$. At $\epsilon=32$, U2R's MSE is $0.0625546$;
its reduction in coefficient error has little effect on prediction MSE
over this budget range.\par
\end{samepage}

\begin{samepage}
\noindent\textbf{Prediction in original units.}
Table~\ref{tab:exp_real_mse} gives test MSE in squared degrees Celsius
for Exp5. At $\epsilon=1$, increasing $m$ from $10$ to $100$ reduces
U2R's mean MSE from $3.55270$ to $2.70655$ and mean RMSE from
$1.88464$ to $1.64505\,{}^\circ\mathrm{C}$; mean $R^2$ increases from
$0.87900$ to $0.90782$. At $m=100$ and $\epsilon=4$, the mean MSEs
of U2R, Lasso, and Sample-SDP are $2.71218$, $2.39565$, and $3.28753$,
respectively.\par
\end{samepage}

\begin{table}[!htbp]
\caption{Exp5: Kelmarsh test MSE in squared degrees Celsius. U2R and Sample-SDP use \mbox{$\delta=10^{-5}$}. Entries are mean $\pm$ sample standard deviation over ten repetitions; Lasso is reused across budgets.}
\label{tab:exp_real_mse}
\centering
\small
\renewcommand{\arraystretch}{1.5}
\setlength{\tabcolsep}{6pt}
\begin{tabularx}{\linewidth}{@{}l r *{5}{>{\raggedleft\arraybackslash}X}@{}}
\toprule
Budget & $m$ & U2R & 2-SLR & LDP-IHT & Sample-SDP & Lasso \\
\midrule
 & $10$ & {\renewcommand{\arraystretch}{1}\begin{tabular}{@{}r@{}}$3.55270$ \\ $\pm 0.11390$\end{tabular}} & {\renewcommand{\arraystretch}{1}\begin{tabular}{@{}r@{}}$26.89088$ \\ $\pm 24.15804$\end{tabular}} & {\renewcommand{\arraystretch}{1}\begin{tabular}{@{}r@{}}$7.55337$ \\ $\pm 1.72935$\end{tabular}} & {\renewcommand{\arraystretch}{1}\begin{tabular}{@{}r@{}}$7.54559$ \\ $\pm 0.87616$\end{tabular}} & {\renewcommand{\arraystretch}{1}\begin{tabular}{@{}r@{}}$2.39625$ \\ $\pm 0.00376$\end{tabular}} \\
\addlinespace[2pt]
$\epsilon=1$ & $50$ & {\renewcommand{\arraystretch}{1}\begin{tabular}{@{}r@{}}$2.80885$ \\ $\pm 0.07139$\end{tabular}} & {\renewcommand{\arraystretch}{1}\begin{tabular}{@{}r@{}}$7.38104$ \\ $\pm 2.75047$\end{tabular}} & {\renewcommand{\arraystretch}{1}\begin{tabular}{@{}r@{}}$5.67181$ \\ $\pm 0.93834$\end{tabular}} & {\renewcommand{\arraystretch}{1}\begin{tabular}{@{}r@{}}$4.57575$ \\ $\pm 0.74360$\end{tabular}} & {\renewcommand{\arraystretch}{1}\begin{tabular}{@{}r@{}}$2.39572$ \\ $\pm 0.00123$\end{tabular}} \\
\addlinespace[2pt]
 & $100$ & {\renewcommand{\arraystretch}{1}\begin{tabular}{@{}r@{}}$2.70655$ \\ $\pm 0.06479$\end{tabular}} & {\renewcommand{\arraystretch}{1}\begin{tabular}{@{}r@{}}$7.75405$ \\ $\pm 3.90559$\end{tabular}} & {\renewcommand{\arraystretch}{1}\begin{tabular}{@{}r@{}}$5.39543$ \\ $\pm 1.09591$\end{tabular}} & {\renewcommand{\arraystretch}{1}\begin{tabular}{@{}r@{}}$3.61122$ \\ $\pm 0.30899$\end{tabular}} & {\renewcommand{\arraystretch}{1}\begin{tabular}{@{}r@{}}$2.39565$ \\ $\pm 0.00063$\end{tabular}} \\
\midrule
 & $10$ & {\renewcommand{\arraystretch}{1}\begin{tabular}{@{}r@{}}$3.43873$ \\ $\pm 0.02432$\end{tabular}} & {\renewcommand{\arraystretch}{1}\begin{tabular}{@{}r@{}}$7.37791$ \\ $\pm 3.10457$\end{tabular}} & {\renewcommand{\arraystretch}{1}\begin{tabular}{@{}r@{}}$5.86216$ \\ $\pm 1.40887$\end{tabular}} & {\renewcommand{\arraystretch}{1}\begin{tabular}{@{}r@{}}$4.93425$ \\ $\pm 0.86685$\end{tabular}} & {\renewcommand{\arraystretch}{1}\begin{tabular}{@{}r@{}}$2.39625$ \\ $\pm 0.00376$\end{tabular}} \\
\addlinespace[2pt]
$\epsilon=4$ & $50$ & {\renewcommand{\arraystretch}{1}\begin{tabular}{@{}r@{}}$2.77572$ \\ $\pm 0.06713$\end{tabular}} & {\renewcommand{\arraystretch}{1}\begin{tabular}{@{}r@{}}$5.00484$ \\ $\pm 0.40439$\end{tabular}} & {\renewcommand{\arraystretch}{1}\begin{tabular}{@{}r@{}}$5.66193$ \\ $\pm 1.00425$\end{tabular}} & {\renewcommand{\arraystretch}{1}\begin{tabular}{@{}r@{}}$3.54916$ \\ $\pm 0.65645$\end{tabular}} & {\renewcommand{\arraystretch}{1}\begin{tabular}{@{}r@{}}$2.39572$ \\ $\pm 0.00123$\end{tabular}} \\
\addlinespace[2pt]
 & $100$ & {\renewcommand{\arraystretch}{1}\begin{tabular}{@{}r@{}}$2.71218$ \\ $\pm 0.00678$\end{tabular}} & {\renewcommand{\arraystretch}{1}\begin{tabular}{@{}r@{}}$5.26299$ \\ $\pm 0.62298$\end{tabular}} & {\renewcommand{\arraystretch}{1}\begin{tabular}{@{}r@{}}$5.56877$ \\ $\pm 0.79226$\end{tabular}} & {\renewcommand{\arraystretch}{1}\begin{tabular}{@{}r@{}}$3.28753$ \\ $\pm 0.29627$\end{tabular}} & {\renewcommand{\arraystretch}{1}\begin{tabular}{@{}r@{}}$2.39565$ \\ $\pm 0.00063$\end{tabular}} \\
\bottomrule
\end{tabularx}
\end{table}

\FloatBarrier

\section{Omitted Proofs in Section \ref{sec:non SDP}}
\label{app:proofs_non_interactive}

This section provides proofs for the theories in Section 4 of the main text.

\subsection{Proof of Theorem~\ref{thm:shuffled_dp_lower}}
\label{app:non_lower}

We first reduce a robust shuffled DP protocol to a pan-private algorithm, and then prove the
lower bound for pan-private algorithms by a hybrid argument over the stream and Assouad's
method over the coordinates.

Let $\mathcal{X} = \{\pm 1\}^{d+1}$ be the data universe of
Definition~\ref{def:hard_distribution}, let $U$ be the uniform distribution on $\mathcal{X}$,
and let $\chi_i(x, y) = y x_i$ for $i \in [d]$. These functions satisfy
$\mathbb{E}_U[\chi_i] = 0$ and $\mathbb{E}_U[\chi_i \chi_l] = \mathbb{I}[i = l]$. For
$\theta \in \mathbb{R}^d$ let $Q_\theta(x, y) = U(x, y)\big(1 + y \langle x, \theta \rangle\big)$,
which is the distribution of one sample in Definition~\ref{def:hard_distribution} with parameter
$\theta$. For $z \in \{-1, 0, 1\}^d$ we write $P_z = Q_{\gamma z}$ and
$Z_0(z) = \{i \in [d] : z_i = 0\}$. For $i \in Z_0(z)$ and $b \in \{\pm 1\}$, let
$z^{i,b} = z + b e_i$. Then
\begin{equation}
\label{eq:app_pz_diff}
    P_{z^{i,b}} - P_z = b \gamma \, \chi_i \, U .
\end{equation}
For a distribution $P$ on $\mathcal{X}$, $M(P^{n})$ denotes the distribution of the output of
$M$ on $n$ i.i.d. records from $P$.

Following \citet{cheu2021limits}, a protocol $\mathcal{P}$ is $(\epsilon, \delta, 1/3)$-robust
shuffled DP if, for every $\gamma' \ge 1/3$, the shuffled messages of any $\gamma' n$ users that
follow the protocol satisfy $(\epsilon, \delta)$-DP; this is the robustness in
Theorem~\ref{thm:shuffled_dp_lower}. Let $\mathcal{P}$ be such a non-interactive protocol and let
$n' = n/3$. By \citet[Theorem~2.6]{cheu2021limits}, there is an $(\epsilon, \delta)$-pan-private
algorithm $M^{\mathcal{P}}$ that processes a stream of $n'$ records and satisfies, for every
distribution $P$ on $\mathcal{X}$,
\begin{equation*}
    d_{TV}\big(M^{\mathcal{P}}(P^{n'}), \, \mathcal{M}_{\mathcal{P}}(P_{(2/9)}^{n})\big)
    \le e^{-c_1 n},
    \qquad P_{(b)} = b P + (1 - b) U ,
\end{equation*}
where $c_1 > 0$ is an absolute constant. Since $Q_\theta$ is linear in $\theta$, we have
$\frac{2}{9} Q_\theta + \frac{7}{9} U = Q_{2\theta/9}$, and $2\theta/9 \in \widetilde{\Theta}_k$
whenever $\theta \in \widetilde{\Theta}_k$. Let $\Pi_{1}$ be the projection onto the unit
$\ell_1$-ball and define $M = \frac{9}{2} \Pi_1(M^{\mathcal{P}})$. As a post-processing of the
output, $M$ is $(\epsilon, \delta)$-pan-private. The projection does not increase the distance to
any point of the unit $\ell_1$-ball, and $\|M - \theta\|_2 \le \frac{11}{2}$ for
$\theta \in \widetilde{\Theta}_k$, so
\begin{equation}
\label{eq:app_reduction}
    \mathbb{E}\big[\|M(Q_\theta^{n'}) - \theta\|_2\big]
    \le \frac{9}{2} \, \mathbb{E}\big[\|\mathcal{M}_{\mathcal{P}}(Q_{2\theta/9}^{n})
    - \tfrac{2}{9}\theta\|_2\big] + \frac{11}{2} e^{-c_1 n}
    \qquad \text{for all } \theta \in \widetilde{\Theta}_k .
\end{equation}
It therefore suffices to show that every $(\epsilon, \delta)$-pan-private algorithm $M$ on $n'$
records satisfies
\begin{equation}
\label{eq:app_pan_lower}
    \sup_{\theta^* \in \widetilde{\Theta}_k} \mathbb{E}\big[\|M(Q_{\theta^*}^{n'}) - \theta^*\|_2\big]
    \ge c_2 \, \frac{\sqrt{dk}}{n' \epsilon}
\end{equation}
for a constant $c_2 > 0$. Indeed, combining \eqref{eq:app_reduction} and
\eqref{eq:app_pan_lower} gives
$\sup_{\theta^*} \mathbb{E}\|\mathcal{M}_{\mathcal{P}}(Z) - \theta^*\|_2 \ge
\frac{2}{3} c_2 \frac{\sqrt{dk}}{n\epsilon} - \frac{11}{9} e^{-c_1 n}$. Since
$\epsilon \le \mathcal{O}(1)$, we have $\frac{\sqrt{dk}}{n\epsilon} \ge \Omega(\frac{1}{n})$, so for
$n$ larger than an absolute constant the last term is at most
$\frac{1}{3} c_2 \frac{\sqrt{dk}}{n\epsilon}$, which gives the theorem. In the rest of the proof
we fix an $(\epsilon, \delta)$-pan-private algorithm $M$ with internal states $S_1, \dots, S_{n'}$,
and prove \eqref{eq:app_pan_lower}. We take $n$ larger than an absolute constant, so that
$\delta = o(1/n)$ gives $n' \delta \le 1/64$.

For $j \in [n']$, a prefix $x_{<j} = (x_1, \dots, x_{j-1})$ and a record $x \in \mathcal{X}$, let
$q_j(\cdot \mid x_{<j}, x)$ be the distribution of $S_j$ when the first $j$ records are
$(x_{<j}, x)$. Applying Definition~\ref{def:pan_privacy} at time $t = j$ to two streams that
differ in the $j$-th record shows that, for every fixed $x_{<j}$, the map
$x \mapsto q_j(\cdot \mid x_{<j}, x)$ is $(\epsilon, \delta)$-DP. By
\citet[Lemma~3.7]{cheu2021limits}, there is a $(2\epsilon, 0)$-DP map
$x \mapsto q'_j(\cdot \mid x_{<j}, x)$ with
$d_{TV}\big(q_j(\cdot \mid x_{<j}, x), q'_j(\cdot \mid x_{<j}, x)\big) \le \delta$ for all $x$.
Below we drop $x_{<j}$ from the notation when it is fixed, and write sums over states $s$; for
general state spaces the sums are replaced by integrals of densities with respect to a common
dominating measure. Let $\bar{q}'_j(s) = \mathbb{E}_{X \sim U}[q'_j(s \mid X)]$ and
\begin{equation*}
    h_{j,s}(x) = \frac{q'_j(s \mid x)}{\bar{q}'_j(s)} - 1 , \qquad
    \hat{h}_{j,s}(i) = \mathbb{E}_U\big[h_{j,s} \chi_i\big] .
\end{equation*}
Since $q'_j$ is $(2\epsilon, 0)$-DP, $e^{-2\epsilon} \bar{q}'_j(s) \le q'_j(s \mid x) \le
e^{2\epsilon} \bar{q}'_j(s)$, so $|h_{j,s}(x)| \le e^{2\epsilon} - 1$. For a distribution $P$ on
$\mathcal{X}$, write $q_j \circ P$ for the distribution of $S_j$ when the $j$-th record is drawn
from $P$. If $P' - P = b \gamma \chi_i U$, then using $\mathbb{E}_U[\chi_i] = 0$,
\begin{equation}
\label{eq:app_step}
\begin{aligned}
    d_{TV}(q'_j \circ P', q'_j \circ P)
    &= \frac{1}{2} \sum_{s} \Big| \sum_{x} q'_j(s \mid x) \big(P'(x) - P(x)\big) \Big| \\
    &= \frac{\gamma}{2} \sum_{s} \big| \mathbb{E}_U[q'_j(s \mid X) \chi_i(X)] \big|
    = \frac{\gamma}{2} \, \mathbb{E}_{s \sim \bar{q}'_j} \big| \hat{h}_{j,s}(i) \big| .
\end{aligned}
\end{equation}

Fix $z \in \{-1, 0, 1\}^d$, $i \in Z_0(z)$ and $b \in \{\pm 1\}$. For $j = 0, \dots, n'$, let
$H_j$ be the distribution of the output of $M$ when $X_1, \dots, X_j \sim P_z$ and
$X_{j+1}, \dots, X_{n'} \sim P_{z^{i,b}}$, all independent. Then
$H_0 = M(P_{z^{i,b}}^{n'})$ and $H_{n'} = M(P_z^{n'})$. In $H_{j-1}$ and $H_j$ the prefix
$X_{<j}$ has the same distribution $P_z^{j-1}$, the suffix $X_{>j}$ has the same distribution
$P_{z^{i,b}}^{n'-j}$ and is independent of $(X_{<j}, X_j)$, and only $X_j$ differs. Since $M$
processes the stream one record at a time, its output is a post-processing of $(S_j, X_{>j})$.
Therefore
\begin{equation*}
    d_{TV}(H_{j-1}, H_j)
    \le \mathbb{E}_{X_{<j} \sim P_z^{j-1}} \, d_{TV}\big(q_j \circ P_{z^{i,b}}, \, q_j \circ P_z\big)
    \le \mathbb{E}_{X_{<j} \sim P_z^{j-1}} \Big[ \frac{\gamma}{2} \,
    \mathbb{E}_{s \sim \bar{q}'_j} \big| \hat{h}_{j,s}(i) \big| \Big] + 2\delta ,
\end{equation*}
where the second step replaces $q_j$ by $q'_j$ at a cost of $2\delta$ and then applies
\eqref{eq:app_pz_diff} and \eqref{eq:app_step}. By the triangle inequality,
\begin{equation}
\label{eq:app_hybrid}
    d_{TV}\big(M(P_{z^{i,b}}^{n'}), M(P_z^{n'})\big)
    \le \frac{\gamma}{2} \sum_{j=1}^{n'} \mathbb{E}_{X_{<j} \sim P_z^{j-1}}
    \mathbb{E}_{s \sim \bar{q}'_j} \big| \hat{h}_{j,s}(i) \big| + 2 n' \delta .
\end{equation}
The distribution of $X_{<j}$ and the maps $q'_j$ in \eqref{eq:app_hybrid} depend on $z$ but not
on $i$ or $b$.

Since $\chi_1, \dots, \chi_d$ are orthonormal in $L_2(U)$, Bessel's inequality gives
$\sum_{i \in [d]} \hat{h}_{j,s}(i)^2 \le \mathbb{E}_U[h_{j,s}^2] \le (e^{2\epsilon} - 1)^2$, and by
the Cauchy--Schwarz inequality
$\sum_{i \in Z_0(z)} |\hat{h}_{j,s}(i)| \le \sqrt{d} \, (e^{2\epsilon} - 1)$. Summing
\eqref{eq:app_hybrid} over $i \in Z_0(z)$ and $b \in \{\pm 1\}$, and exchanging the sum over $i$
with the expectations, which do not depend on $i$, we obtain
\begin{equation}
\label{eq:app_sum}
    T(z) := \sum_{i \in Z_0(z)} T_i(z) \le \gamma n' \sqrt{d} \, (e^{2\epsilon} - 1) + 4 d n' \delta ,
    \qquad
    T_i(z) := \sum_{b \in \{\pm 1\}} d_{TV}\big(M(P_{z^{i,b}}^{n'}), M(P_z^{n'})\big) .
\end{equation}
Note that \eqref{eq:app_sum} bounds the sum over coordinates only, and this is all that is used
below.

Draw $Z$ as in Definition~\ref{def:hard_distribution}, let $\theta_Z = \gamma Z$, draw the stream
$X \sim P_Z^{n'}$ given $Z$, and let $\hat{\theta} = M(X)$. For $i \in [d]$ let
$G_i = \{\|Z_{-i}\|_0 \le k - 1\}$ and $\psi_i = \mathrm{sign}(\hat{\theta}_i)$, with
$\mathrm{sign}(0) = +1$. Condition on $Z_{-i}$ and on $Z_i \neq 0$, and let $z = (Z_{-i}, 0)$, so
that $z_i = 0$. Then $Z_i$ is uniform on $\{\pm 1\}$ and $X \sim P_{z^{i, Z_i}}^{n'}$, so by Le Cam's
method and the triangle inequality,
\begin{equation*}
    \mathbb{P}\big(\psi_i \neq Z_i \mid Z_{-i}, Z_i \neq 0\big)
    \ge \frac{1}{2}\Big(1 - d_{TV}\big(M(P_{z^{i,+1}}^{n'}), M(P_{z^{i,-1}}^{n'})\big)\Big)
    \ge \frac{1}{2}\big(1 - T_i(z)\big) .
\end{equation*}
Let $N = \sum_{i \in [d]} \mathbb{I}[Z_i \neq 0, \, G_i, \, \psi_i \neq Z_i]$. If $N \ge 1$, some
$i$ has $Z_i \neq 0$ and $\|Z_{-i}\|_0 \le k - 1$, so $\|Z\|_0 \le k$ and $N \le \|Z\|_0 \le k$.
Each counted coordinate has $|\hat{\theta}_i - \gamma Z_i| \ge \gamma$. Hence
\begin{equation}
\label{eq:app_hamming}
    \|\hat{\theta} - \theta_Z\|_2 \, \mathbb{I}[\|Z\|_0 \le k] \ge \gamma \sqrt{N}
    \ge \frac{\gamma N}{\sqrt{k}} .
\end{equation}
Since $Z_i$ is independent of $Z_{-i}$ and $\mathbb{P}(Z_i \neq 0) = \frac{k}{2d}$,
\begin{equation*}
    \mathbb{E}[N]
    \ge \frac{k}{4d} \sum_{i=1}^d \Big( \mathbb{P}(G_i)
    - \mathbb{E}\big[T_i\big((Z_{-i}, 0)\big)\big] \Big) .
\end{equation*}
We bound the two sums in turn. First, $\|Z_{-i}\|_0 \sim \mathrm{Bin}(d-1, \frac{k}{2d})$ has mean
less than $\frac{k}{2}$, so Markov's inequality gives $\mathbb{P}(G_i^c) \le \frac{1}{2}$ and
$\sum_i \mathbb{P}(G_i) \ge \frac{d}{2}$. Second, $(Z_{-i}, 0) = Z$ on the event $\{Z_i = 0\}$,
which is independent of $Z_{-i}$ and has probability $1 - \frac{k}{2d} \ge \frac{1}{2}$, so by
\eqref{eq:app_sum},
\begin{equation*}
    \sum_{i=1}^d \mathbb{E}\big[T_i\big((Z_{-i}, 0)\big)\big]
    = \sum_{i=1}^d \frac{\mathbb{E}\big[T_i(Z) \, \mathbb{I}[Z_i = 0]\big]}{\mathbb{P}(Z_i = 0)}
    \le 2 \, \mathbb{E}\big[T(Z)\big]
    \le 2 \gamma n' \sqrt{d} \, (e^{2\epsilon} - 1) + 8 d n' \delta .
\end{equation*}
Combining the two bounds,
\begin{equation*}
    \mathbb{E}[N] \ge \frac{k}{8} - \frac{k \gamma n' (e^{2\epsilon} - 1)}{2 \sqrt{d}}
    - 2 k n' \delta .
\end{equation*}
We now choose $\gamma = \frac{\sqrt{d}}{8 n' (e^{2\epsilon} - 1)}$, so that the second term equals
$\frac{k}{16}$, and use $n' \delta \le \frac{1}{64}$, so that the third term is at most
$\frac{k}{32}$. This gives $\mathbb{E}[N] \ge \frac{k}{32}$, and by \eqref{eq:app_hamming},
\begin{equation*}
    \sup_{\theta^* \in \widetilde{\Theta}_k} \mathbb{E}\big[\|M(Q_{\theta^*}^{n'}) - \theta^*\|_2\big]
    \ge \mathbb{E}\big[\|\hat{\theta} - \theta_Z\|_2 \, \mathbb{I}[\|Z\|_0 \le k]\big]
    \ge \frac{\gamma \sqrt{k}}{32}
    = \frac{\sqrt{dk}}{256 \, n' (e^{2\epsilon} - 1)} .
\end{equation*}
Finally, for $0 < \epsilon \le c_0$ with an absolute constant $c_0$, we have
$e^{2\epsilon} - 1 \le 2 e^{2c_0} \epsilon$, which gives \eqref{eq:app_pan_lower} with
$c_2 = \frac{1}{512 e^{2c_0}}$ and completes the proof.

\subsection{Proof of Theorem~\ref{thm:non_interactive_upper}}
\label{app:non_upper}

We first give a lemma on the binomial mechanism. It is used in the privacy part of this proof,
and again in the proofs of Proposition~\ref{prop:support_consistency} and
Theorem~\ref{thm:interactive_user_level_sdp}.

\begin{lemma}
\label{lem:binomial_mechanism}
Let $s \ge 1$, $\epsilon > 0$ and $\delta \in (0, 1)$ with $\epsilon \le \log(2s/\delta)$, and let
$\Delta \ge \sqrt{s}\,\epsilon$. Let $B = (B_1, \dots, B_s)$ have independent entries
$B_j \sim \mathrm{Bin}(M, 1/2)$, where
\begin{equation}
\label{eq:binomial_condition}
    M \ge \frac{32 \Delta^2 \log(2s/\delta)}{\epsilon^2} .
\end{equation}
Then for any $v, v' \in \mathbb{Z}^s$ with $\|v - v'\|_2 \le \Delta$ and any
$E \subseteq \mathbb{Z}^s$,
\begin{equation*}
    \mathbb{P}[v + B \in E] \le e^{\epsilon} \, \mathbb{P}[v' + B \in E] + \delta .
\end{equation*}
\end{lemma}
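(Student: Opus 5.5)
The plan is to bound the privacy loss random variable directly and then use the standard fact about when such a bound yields $(\epsilon,\delta)$-DP. Write $t = v' - v \in \mathbb{Z}^s$, so $\|t\|_2 \le \Delta$. Let $p(b) = \binom{M}{b}2^{-M}$ be the pmf of $\mathrm{Bin}(M,1/2)$. For an outcome $w = v + B$, the log-likelihood ratio is
\begin{equation*}
L(B) = \sum_{j=1}^s \log\frac{p(B_j)}{p(B_j - t_j)} .
\end{equation*}
It suffices to exhibit an event $G$ with $\mathbb{P}(B \notin G) \le \delta$ and $L(B) \le \epsilon$ on $G$. Then for any $E$,
\begin{equation*}
\mathbb{P}[v+B\in E] \le \mathbb{P}[v+B \in E, B\in G] + \delta \le e^{\epsilon}\,\mathbb{P}[v'+B\in E] + \delta .
\end{equation*}

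\textbf{Step 1: exact expansion of the log-ratio.} Using $p(k)/p(k-1) = (M-k+1)/k$, I write $\log p(B_j)/p(B_j-t_j)$ as a telescoping sum of $|t_j|$ terms of the form $\log\frac{M-k+1}{k}$. Centering $k = M/2 + u$, each term equals $\log\frac{1-(2u-2)/M}{1+2u/M}$, which is $-4u/M + O(1/M) + O(u^2/M^2)$ when $|u| \le M/4$. Summing over the $|t_j|$ steps, whose values of $u$ lie between $B_j - M/2 - t_j$ and $B_j - M/2$, gives
\begin{equation*}
L(B) \le \sum_j \frac{4 t_j (B_j - M/2)}{M} + C\frac{\|t\|_2^2 + \|t\|_1}{M} + C\sum_j \frac{|t_j|\,(|B_j - M/2| + |t_j|)^2}{M^2} .
\end{equation*}
This holds on the event that every $|B_j - M/2| + |t_j| \le M/4$.

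\textbf{Step 2: good event and concentration.} Let $G_1 = \{\max_j |B_j - M/2| \le \sqrt{M\log(4s/\delta)/2}\}$. Hoeffding and a union bound give $\mathbb{P}(G_1^c)\le\delta/2$.

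The linear term $\sum_j 4t_j(B_j-M/2)/M$ is a sum of independent bounded terms. It is sub-Gaussian with variance proxy $\sum_j 16t_j^2 (M/4)/M^2 = 4\|t\|_2^2/M$. So on an event $G_2$ with $\mathbb{P}(G_2^c) \le \delta/2$ it is at most $\sqrt{8\Delta^2\log(2/\delta)/M} \le \epsilon/2$, by \eqref{eq:binomial_condition}.

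For the remaining terms I use the two side conditions. Since $\epsilon \le \log(2s/\delta)$ and $\Delta \ge \sqrt{s}\,\epsilon$,
\begin{equation*}
M \ge \frac{32\Delta^2}{\epsilon} \ge 32\,\Delta\sqrt{s} .
\end{equation*}
Hence $|t_j| \le \Delta \le M/32$ and $\|t\|_1 \le \sqrt{s}\,\Delta \le M/32$. This also places us in the regime $|u|\le M/4$ of Step 1 on $G_1$, since $\sqrt{M\log(4s/\delta)} \ll M$ follows from the same inequalities.

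The quadratic term is then $C\Delta^2/M \le C\epsilon^2/\log(2s/\delta) \le C\epsilon$. For the $\|t\|_1/M$ term I use $\|t\|_1 \le \sqrt{s}\,\Delta$ and $\Delta \ge \sqrt{s}\,\epsilon$ to get
\begin{equation*}
\frac{\|t\|_1}{M} \le \frac{\sqrt{s}\,\Delta\,\epsilon^2}{32\Delta^2\log(2s/\delta)} \le \frac{\epsilon}{32} .
\end{equation*}
The cubic remainder is bounded on $G_1$ by
\begin{equation*}
\|t\|_1 \cdot \frac{M\log(4s/\delta) + \Delta^2}{M^2} ,
\end{equation*}
which is a small multiple of $\epsilon$ by the same two inequalities. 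With the constant $32$, these sum to at most $\epsilon/2$.

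\textbf{Main obstacle.} The delicate step is Step 1's Taylor control. I must make sure that the constants from the remainder, including the $O(1/M)$ per-step bias that accumulates into $\|t\|_1/M$ rather than $\|t\|_2^2/M$, really fit under $\epsilon/2$ with the stated constant $32$. This is exactly where the conditions $\Delta\ge\sqrt{s}\,\epsilon$ and $\epsilon \le \log(2s/\delta)$ enter. If the constants are tight, I would instead pair the $B_j$ and $B_j - t_j$ terms symmetrically, which cancels the per-step $1/M$ bias to leading order, and split $\delta$ as $\delta/2+\delta/2$ between $G_1$ and $G_2$. Taking $G = G_1\cap G_2$ then completes the proof.
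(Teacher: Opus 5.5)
Your proposal is correct and follows essentially the paper's proof: the same reduction to bounding the privacy loss on a good event, the same Hoeffding event $\max_j|B_j-M/2|\le\sqrt{M\log(4s/\delta)/2}$, a telescoping expansion of the binomial ratio controlled by $\log(1-x)\le -x$ and $-\log(1+y)\le -y+y^2$, Hoeffding for the linear term, and the two side conditions $\epsilon\le\log(2s/\delta)$, $\Delta\ge\sqrt{s}\,\epsilon$ to absorb the quadratic and cubic remainders. The differences are cosmetic. The paper pairs the factors as $\prod_{r}(A-w+r)/(A+w-a+r)$, so the per-step $1/M$ bias cancels exactly and leaves $2\|t\|_2^2/M$ with no $\|t\|_1/M$ term; this is the symmetric pairing you mention as a fallback. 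Also, the linear term in your display should be $-4\langle t,w\rangle/M$, which is harmless because $B_j-M/2$ is symmetric about zero.
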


Only an upper bound on $\|v - v'\|_2$ is used, so the condition $\Delta \ge \sqrt{s}\,\epsilon$ can
always be met by taking $\Delta$ larger. Subtracting a constant from $v + B$, for example
$M/2$ on each coordinate, is a post-processing and keeps the guarantee.

\begin{proof}[Proof of Lemma~\ref{lem:binomial_mechanism}]
Let $a = v' - v$ and $A = M/2$. For $t \in \{0, \dots, M\}^s$ define
\begin{equation*}
    \ell(t) = \sum_{j=1}^s \log \frac{\mathbb{P}[B_j = t_j]}{\mathbb{P}[B_j = t_j - a_j]} ,
\end{equation*}
with $\ell(t) = +\infty$ if $t_j - a_j \notin \{0, \dots, M\}$ for some $j$. For every output
$o$, $\mathbb{P}[v + B = o] = e^{\ell(o - v)} \, \mathbb{P}[v' + B = o]$, so
\begin{equation*}
    \mathbb{P}[v + B \in E] \le e^{\epsilon} \, \mathbb{P}[v' + B \in E]
    + \mathbb{P}[\ell(B) > \epsilon] ,
\end{equation*}
and it suffices to show $\mathbb{P}[\ell(B) > \epsilon] \le \delta$.

Write $w_j = B_j - A$, let $W = \sqrt{M \log(4s/\delta)/2}$, and let
$G = \{\max_{j} |w_j| \le W\}$. Each $w_j$ is a sum of $M$ independent terms in
$[-1/2, 1/2]$, so Hoeffding's inequality gives
$\mathbb{P}[|w_j| > W] \le 2 e^{-2W^2/M} = \delta/(2s)$, and $\mathbb{P}[G^c] \le \delta/2$.
We check that $W + \Delta \le A/2$. From $\Delta \ge \sqrt{s}\,\epsilon$ and
\eqref{eq:binomial_condition}, $M \ge 32 s \log(2s/\delta) \ge 32 \log(2s/\delta)$. Since
$\log(4s/\delta) \le 2 \log(2s/\delta)$, we get $W \le \sqrt{M \log(2s/\delta)} \le M/\sqrt{32}$.
From \eqref{eq:binomial_condition} and $\epsilon \le \log(2s/\delta)$,
\begin{equation*}
    \Delta \le \frac{\epsilon \sqrt{M}}{\sqrt{32 \log(2s/\delta)}}
    \le \sqrt{\frac{M \log(2s/\delta)}{32}} \le \frac{M}{32} .
\end{equation*}
Hence $W + \Delta \le M/4 = A/2$.

Fix a coordinate $j$ with $a_j > 0$ and drop the index $j$. On $G$, $B = A + w$ and
\begin{equation*}
    \frac{\mathbb{P}[B = A + w]}{\mathbb{P}[B = A + w - a]}
    = \frac{\binom{M}{A + w}}{\binom{M}{A + w - a}}
    = \prod_{r=1}^{a} \frac{A - w + r}{A + w - a + r} .
\end{equation*}
Let $x_r = (w - r)/A$ and $y_r = (w - a + r)/A$. Since $|w| + a \le A/2$, we have $x_r < 1$ and
$|y_r| \le 1/2$. Using $\log(1 - x) \le -x$ and $-\log(1 + y) \le -y + y^2$ for
$|y| \le 1/2$, and $|w - a + r| \le |w| + a$ for $1 \le r \le a$,
\begin{equation*}
    \log \frac{A - w + r}{A + w - a + r} = \log(1 - x_r) - \log(1 + y_r)
    \le \frac{a - 2w}{A} + \frac{(|w| + a)^2}{A^2} .
\end{equation*}
Summing over $r$ gives the bound $(a^2 - 2aw)/A + a(|w| + a)^2/A^2$ for this coordinate. If
$a_j < 0$, the same bound applied to $M - B_j$, which has the same distribution, gives
$(a_j^2 - 2 a_j w_j)/A + |a_j|(|w_j| + |a_j|)^2/A^2$; if $a_j = 0$ the term is zero. Summing
over $j$ and using $A = M/2$, on $G$ we obtain
\begin{equation*}
    \ell(B) \le \frac{2\|a\|_2^2}{M} - \frac{4\langle a, w \rangle}{M}
    + \frac{4}{M^2} \sum_{j=1}^s |a_j| \big(|w_j| + |a_j|\big)^2 .
\end{equation*}
We bound the three terms. For the first, by \eqref{eq:binomial_condition} and
$\epsilon \le \log(2s/\delta)$,
\begin{equation*}
    \frac{2\|a\|_2^2}{M} \le \frac{2\Delta^2}{M} \le \frac{\epsilon^2}{16 \log(2s/\delta)}
    \le \frac{\epsilon}{16} .
\end{equation*}
For the third, $(|w_j| + |a_j|)^2 \le 2W^2 + 2\Delta^2$ on $G$, $\sum_j |a_j| \le \sqrt{s}\,\Delta$
and $W^2 \le M \log(2s/\delta)$, so with \eqref{eq:binomial_condition},
$\Delta \ge \sqrt{s}\,\epsilon$ and $\epsilon \le \log(2s/\delta)$,
\begin{align*}
    \frac{4}{M^2} \sum_{j=1}^s |a_j| \big(|w_j| + |a_j|\big)^2
    &\le \frac{8 \sqrt{s}\, \Delta \log(2s/\delta)}{M} + \frac{8 \sqrt{s}\, \Delta^3}{M^2} \\
    &\le \frac{\sqrt{s}\, \epsilon^2}{4 \Delta}
    + \frac{\sqrt{s}\, \epsilon^4}{128 \, \Delta \log^2(2s/\delta)}
    \le \frac{\epsilon}{4} + \frac{\epsilon}{128} .
\end{align*}
For the second, $\langle a, w \rangle$ is a sum of $sM$ independent terms, where the terms of
coordinate $j$ lie in an interval of length $|a_j|$. By Hoeffding's inequality and
\eqref{eq:binomial_condition},
\begin{equation*}
    \mathbb{P}\Big[ -\frac{4 \langle a, w \rangle}{M} > \frac{\epsilon}{2} \Big]
    \le \exp\Big( -\frac{2 (\epsilon M/8)^2}{M \|a\|_2^2} \Big)
    \le \exp\Big( -\frac{\epsilon^2 M}{32 \Delta^2} \Big) \le \frac{\delta}{2s}
    \le \frac{\delta}{2} .
\end{equation*}
On $G$ and outside this event, $\ell(B) \le \epsilon/16 + \epsilon/2 + \epsilon/4 + \epsilon/128
< \epsilon$. Hence $\mathbb{P}[\ell(B) > \epsilon] \le \delta/2 + \delta/2 = \delta$.
\end{proof}

\begin{proof}[Proof of Theorem~\ref{thm:non_interactive_upper}]
The parameters are set as in Table~\ref{tab:param_non_interactive}. We use $n \ge 20$, which the
summation step requires, and $\epsilon \le \log(4D/\delta)$, which holds for $d$ larger than an
absolute constant since $\epsilon \le \mathcal{O}(1)$. By the choice of $p$ and $C_F$,
$p C_F \ge c_0 \sqrt{n}\, \epsilon / \sqrt{\log(4D/\delta)}$. Since $\epsilon \le \mathcal{O}(1)$,
the third term of \eqref{eq:thm2_condition}, with a large enough constant, then gives
\begin{equation}
\label{eq:app_pcf}
    p C_F \ge \sqrt{D} \max\{1, \epsilon\} ,
    \qquad \text{so} \qquad
    \sqrt{D}\, \epsilon \le \Delta_2^{(p)} \le 3 p C_F .
\end{equation}

We first prove privacy. The rounding step of Algorithm~\ref{alg:shuffled_sparse_regression} has
the same distribution as $\bar{U}_{i,j} = \lfloor p \bar{F}_{i,j} + u_{i,j} \rfloor$ with
$u_{i,j}$ independent and uniform on $(0, 1]$, and we use this form. Fix two neighboring datasets
that differ in the sample of user $i_0$, and fix $u = (u_{i,j})$. The rounded vectors of all
users other than $i_0$ are the same for the two datasets. For user $i_0$, since
$|\lfloor a + u \rfloor - \lfloor b + u \rfloor| \le |a - b| + 1$ and
$\|\bar{F}_{i_0} - \bar{F}'_{i_0}\|_2 \le 2C_F$,
\begin{equation*}
    \|\bar{U}_{i_0} - \bar{U}'_{i_0}\|_2 \le p \|\bar{F}_{i_0} - \bar{F}'_{i_0}\|_2 + \sqrt{D}
    \le 2 p C_F + \sqrt{D} = \Delta_2^{(p)} .
\end{equation*}
Let $T_j = \sum_{i=1}^n (\bar{U}_{i,j} + \zeta_{i,j})$ for $j \in [D]$. Then
$\sum_{i=1}^n \zeta_{i,j} = B_j - n N_0/2$ with $B_j \sim \mathrm{Bin}(n N_0, 1/2)$ independent
across $j$, and $n N_0 \ge M \ge 32 (\Delta_2^{(p)})^2 \log(4D/\delta)/\epsilon^2$. By
\eqref{eq:app_pcf} and $\epsilon \le \log(4D/\delta)$, Lemma~\ref{lem:binomial_mechanism} applies
with $s = D$, $\Delta = \Delta_2^{(p)}$ and $\delta/2$ in place of $\delta$. Hence, for fixed
$u$, the vector $T = (T_1, \dots, T_D)$ is $(\epsilon, \delta/2)$-DP as a function of the dataset.
The variables $u$ have the same distribution for both datasets, so $T$ is also
$(\epsilon, \delta/2)$-DP without conditioning on $u$.

We next relate the view of the aggregator to $T$. Since $|\bar{U}_{i,j}| \le p C_F + 1$,
$|\zeta_{i,j}| \le N_0/2$ and $N_0 \ge 2$, we have
$|T_j| \le n (p C_F + 1 + N_0/2) < q/2$. So no wrap-around occurs, and $T_j$ is the value $S_j$
computed by the aggregator. On coordinate $j$, the users run the summation protocol
$\widetilde{V}_{r,n}$ of \citet{balle2020private} on the inputs
$W_{1,j}, \dots, W_{n,j} \in \mathbb{Z}_q$, with $r$ shuffled shares and one more share. Since
$q > 4n$, the setting of $r$ gives $r \ge 3$, and $n \ge 20$. By
\citet[Corollary~6.1]{balle2020private}, the protocol has worst-case statistical security with
parameter $\sigma_{\sec} = \log_2(8 D e^{\epsilon}/\delta)$: for two input vectors with the same
sum in $\mathbb{Z}_q$, the total variation distance between the messages of coordinate $j$ is at
most $2^{-\sigma_{\sec}} = e^{-\epsilon} \delta/(8D)$. Let $\mathcal{V}^{\mathrm{sim}}$ be the
view obtained when, on every coordinate $j$, the inputs are replaced by
$(T_j \bmod q, 0, \dots, 0)$. It is a post-processing of $T$, hence $(\epsilon, \delta/2)$-DP. The
$D$ coordinates use independent shares and separate shufflers, so coupling them one at a time
gives, for every dataset,
\begin{equation*}
    d_{TV}\big(\mathcal{V}_{\mathcal{P}}, \mathcal{V}^{\mathrm{sim}}\big)
    \le D \cdot \frac{e^{-\epsilon} \delta}{8D} = \frac{e^{-\epsilon} \delta}{8} .
\end{equation*}
By \citet[Lemma~F.2]{balle2020private}, $\mathcal{V}_{\mathcal{P}}$ is
$(\epsilon, \delta/2 + (1 + e^{\epsilon}) e^{-\epsilon} \delta/8)$-DP. Since
$(1 + e^{\epsilon}) e^{-\epsilon}/8 \le 1/4$, Algorithm~\ref{alg:shuffled_sparse_regression}
satisfies $(\epsilon, \delta)$-shuffled DP.

We now bound the error. Under Assumption~\ref{asm:curv_bounded},
$\|v(H_i)\|_2 \le \|x_i x_i^\top\|_F = \|x_i\|_2^2$, so
\begin{equation*}
    \|F_i\|_2 \le \|x_i\|_2^2 + \|x_i\|_2 |y_i| \le c_x^2 d + c_x c_y \sqrt{d} \le C_F
\end{equation*}
for $c_0 \ge c_x^2 + c_x c_y$ and $n \ge 3$. Hence $\bar{F}_i = F_i$ for every $i$. The
aggregator computes $\hat{F} = T/(np)$, and
\begin{equation*}
    \hat{F} = \frac{1}{n} \sum_{i=1}^n F_i + E^{\mathrm{rd}} + E^{\mathrm{bin}} ,
    \qquad
    E^{\mathrm{rd}}_j = \frac{1}{np} \sum_{i=1}^n \big(\bar{U}_{i,j} - p F_{i,j}\big) ,
    \qquad
    E^{\mathrm{bin}}_j = \frac{B_j - n N_0/2}{np} .
\end{equation*}
Let
\begin{equation*}
    \eta = \frac{C_F \sqrt{\log(4D/\delta) \log D}}{n \epsilon} .
\end{equation*}
The terms $\bar{U}_{i,j} - p F_{i,j}$ are independent, have mean zero and lie in $(-1, 1]$. By
Hoeffding's inequality and a union bound over $j \in [D]$, with probability at least
$1 - \mathcal{O}(d^{-2})$,
\begin{equation*}
    \|E^{\mathrm{rd}}\|_\infty \le \mathcal{O}\Big( \frac{\sqrt{\log D}}{\sqrt{n}\, p} \Big)
    \le \mathcal{O}(\eta) ,
\end{equation*}
where the second step uses
$\frac{1}{\sqrt{n}\, p} \le \frac{d \log n \sqrt{\log(4D/\delta)}}{n \epsilon}
= \frac{C_F \sqrt{\log(4D/\delta)}}{c_0 n \epsilon}$.
Each $B_j - n N_0/2$ is a sum of $n N_0$ independent terms in $[-1/2, 1/2]$, and
$n N_0 \le M + 2n$. By \eqref{eq:app_pcf},
$\sqrt{M} \le \mathcal{O}\big( p C_F \sqrt{\log(4D/\delta)}/\epsilon \big)$, so with probability at
least $1 - \mathcal{O}(d^{-2})$,
\begin{equation*}
    \|E^{\mathrm{bin}}\|_\infty
    \le \mathcal{O}\Big( \frac{\sqrt{(M + 2n) \log D}}{np} \Big)
    \le \mathcal{O}\Big( \eta + \frac{\sqrt{\log D}}{\sqrt{n}\, p} \Big)
    = \mathcal{O}(\eta) .
\end{equation*}
Let $\bar{H} = \frac{1}{n} \sum_{i=1}^n x_i x_i^\top$ and
$\bar{g} = \frac{1}{n} \sum_{i=1}^n x_i y_i$. Since $\hat{H}$ and $\hat{g}$ are read from
$\hat{F}$, we obtain $\|\hat{H} - \bar{H}\|_{\infty,\infty} \le \mathcal{O}(\eta)$ and
$\|\hat{g} - \bar{g}\|_\infty \le \mathcal{O}(\eta)$.

For the statistical error, the entries of $x_i x_i^\top$ are bounded by $c_x^2$, and
$\bar{g} - \bar{H} \theta^* = \frac{1}{n} \sum_{i=1}^n x_i \zeta_i$, where, given $x_i$, each
entry $x_{i\ell} \zeta_i$ is sub-Gaussian with mean zero and variance at most $c_x^2 \sigma^2$ by
Assumption~\ref{asm:model}. By Hoeffding's inequality and a union bound, with probability at
least $1 - \mathcal{O}(d^{-2})$,
\begin{equation*}
    \|\bar{H} - \Sigma\|_{\infty,\infty} \le \mathcal{O}\Big( c_x^2 \sqrt{\frac{\log d}{n}} \Big) ,
    \qquad
    \|\bar{g} - \bar{H} \theta^*\|_\infty
    \le \mathcal{O}\Big( c_x \sigma \sqrt{\frac{\log d}{n}} \Big) .
\end{equation*}
For any $w \in \mathbb{R}^d$, $\|(\hat{H} - \Sigma) w\|_\infty \le
d \, \|\hat{H} - \Sigma\|_{\infty,\infty} \|w\|_\infty$. The first two terms of
\eqref{eq:thm2_condition}, with large enough constants, give
$d \, c_x^2 \sqrt{\log d / n} \le \kappa_\infty/4$ and $d \eta \le \kappa_\infty/4$, so by
Assumption~\ref{asm:curv_bounded},
\begin{equation*}
    \|\hat{H} w\|_\infty \ge \|\Sigma w\|_\infty - \|(\hat{H} - \Sigma) w\|_\infty
    \ge \frac{\kappa_\infty}{2} \|w\|_\infty .
\end{equation*}
Hence $\hat{H}$ is invertible and $\|\hat{H}^{-1} z\|_\infty \le \frac{2}{\kappa_\infty}
\|z\|_\infty$ for all $z$. Since
\begin{equation*}
    \hat{g} - \hat{H} \theta^* = (\hat{g} - \bar{g}) - (\hat{H} - \bar{H}) \theta^*
    + (\bar{g} - \bar{H} \theta^*)
\end{equation*}
and $\|(\hat{H} - \bar{H}) \theta^*\|_\infty \le \|\hat{H} - \bar{H}\|_{\infty,\infty}
\|\theta^*\|_1 \le \mathcal{O}(\eta)$, we get
\begin{equation*}
    \|\tilde{\theta} - \theta^*\|_\infty
    = \|\hat{H}^{-1} (\hat{g} - \hat{H} \theta^*)\|_\infty
    \le \frac{2}{\kappa_\infty} \, \mathcal{O}\Big( c_x \sigma \sqrt{\frac{\log d}{n}} + \eta \Big)
    \le \lambda_n
\end{equation*}
for a large enough $c_1$. By \citet[Lemma~29]{zhu2023improved},
$\|\mathcal{S}_{\lambda_n}(\tilde{\theta}) - \theta^*\|_2 \le 4 \sqrt{k}\, \lambda_n$. The
projection onto $\{\theta : \|\theta\|_1 \le 1\}$ does not increase the distance to $\theta^*$,
which lies in this set. Hence, with $\kappa_\infty$ a constant,
\begin{equation*}
    \|\hat{\theta} - \theta^*\|_2 \le 4 \sqrt{k}\, \lambda_n
    \le \mathcal{O}\Big( \sigma \sqrt{\frac{k \log d}{n}}
    + \frac{d \sqrt{k} \log n \sqrt{\log(4D/\delta) \log D}}{n \epsilon} \Big) ,
\end{equation*}
which gives the rate of Theorem~\ref{thm:non_interactive_upper}. The events above hold together with probability at least $1 - \mathcal{O}(d^{-2})$.
\end{proof}

\section{Omitted Proofs in Section \ref{sec:user-level SDP}}
\label{app:proofs_user_level}

This section provides the rigorous proofs for the theoretical bounds in Section 5.

\subsection{Proof of Proposition~\ref{prop:support_consistency}}
\label{app:support}

\begin{proof}[Proof of Proposition~\ref{prop:support_consistency}]
Throughout the proof we take $k = |S^*|$. The parameters of Algorithm~\ref{alg:heavyhitter} are
set as in Table~\ref{tab:param_stage1}, in particular
$\gamma = C_\gamma \log(1/\delta)/(n_1 \epsilon^2)$. We use $\epsilon \le \epsilon_0$ for an
absolute constant $\epsilon_0$, and take $\delta \le 1/8$ small enough that
$\epsilon_0/2 \le \log(8/\delta)$. By Assumption~\ref{asm:re} with $v = e_\ell$ for
$\ell \in S^*$, we have $\kappa \le \Sigma_{\ell\ell} \le 1$. Since $k \log(ed/k) \ge \log d$,
the second term of \eqref{eq:m_condition} gives $m \ge \Omega(\log d)$. The proof has three
parts: the local Lasso of one user of Stage 1 selects $S^*$ with probability at least
$1 - 1/(32k)$; Stage 1 is $(\epsilon, \delta)$-shuffled DP; and the debiased counts separate
$S^*$ from its complement. The first part is about one user, so it needs no union bound over the
users, and a user with $\hat{S}_i \neq S^*$ is allowed. The privacy part is not needed for the
statement of the proposition, and is used in the proof of
Theorem~\ref{thm:interactive_user_level_sdp}.

We first show that $\mathbb{P}[G_i^c] \le 1/(32k)$ for every $i \in [n_1]$, where $G_i$ is the
event $\hat{S}_i = S^*$. Fix $i \in [n_1]$ and let $\zeta_i \in \mathbb{R}^m$ be the noise of
user $i$. For $\ell \in [d]$, write $x_\ell = \langle \Sigma^{1/2} e_\ell, \Sigma^{-1/2} x \rangle$.
By Assumption~\ref{asm:cov} and $\Sigma_{\ell\ell} \le 1$, we have
$\|x_\ell\|_{\psi_2} \le \kappa_x$. By Assumption~\ref{asm:model},
$\|\zeta\|_{\psi_2} \le \mathcal{O}(\sigma)$ and
$\mathbb{E}[x_\ell \zeta] = \mathbb{E}[x_\ell \mathbb{E}[\zeta \mid x]] = 0$. So each entry of
$\frac{1}{m} X_i^\top \zeta_i$ is an average of $m$ independent sub-exponential variables with
mean zero and $\psi_1$-norm at most $\mathcal{O}(\kappa_x \sigma)$. Since $k \le d$, we have
$\log(128 dk) \le 2 \log(128 d)$. Bernstein's inequality, $m \ge \Omega(\log d)$ and a union
bound over $\ell \in [d]$ give, with probability at least $1 - 1/(64k)$,
\begin{equation}
\label{eq:app_score}
    \Big\| \frac{1}{m} X_i^\top \zeta_i \Big\|_\infty
    \le \mathcal{O}\Big( \kappa_x \sigma \sqrt{\frac{\log d}{m}} \Big)
    \le \frac{\lambda_{\mathrm{loc}}}{2} ,
\end{equation}
where the last step holds for $c_\lambda$ large enough. Let
$\mathcal{C} = \{v \in \mathbb{R}^d : \|v_{(S^*)^c}\|_1 \le 3 \|v_{S^*}\|_1\}$. The rows of $X_i$
are $x = \Sigma^{1/2} (\Sigma^{-1/2} x)$ with $\Sigma^{-1/2} x$ isotropic and sub-Gaussian. By
Assumption~\ref{asm:re}, $\max_\ell \Sigma_{\ell\ell} \le 1$ and the result of
\citet{rudelson2013reconstruction} for such designs, there are constants such that, when
$m \ge \Omega(k \log(ed/k)/\kappa)$, with probability at least $1 - 2e^{-c m}$,
\begin{equation*}
    \frac{1}{m} \|X_i v\|_2^2 \ge \frac{1}{2} v^\top \Sigma v \ge \frac{\kappa}{2} \|v\|_2^2
    \qquad \text{for all } v \in \mathcal{C} .
\end{equation*}
Since $\kappa \le 1$ and $k \log(ed/k) \ge k$, the second term of \eqref{eq:m_condition} gives
the required $m$ and $2e^{-cm} \le 1/(64k)$. So the two events hold together with probability at
least $1 - 1/(32k)$.

On these two events, let $\Delta = \hat{\theta}^{(i)} - \theta^*$. Since $\hat{\theta}^{(i)}$
minimizes the Lasso objective and $y_i = X_i \theta^* + \zeta_i$,
\begin{equation*}
\begin{aligned}
    \frac{1}{2m} \|X_i \Delta\|_2^2
    &\le \frac{1}{m} \langle X_i^\top \zeta_i, \Delta \rangle
    + \lambda_{\mathrm{loc}} \big( \|\theta^*\|_1 - \|\theta^* + \Delta\|_1 \big) \\
    &\le \frac{3\lambda_{\mathrm{loc}}}{2} \|\Delta_{S^*}\|_1
    - \frac{\lambda_{\mathrm{loc}}}{2} \|\Delta_{(S^*)^c}\|_1 ,
\end{aligned}
\end{equation*}
where we used \eqref{eq:app_score} and
$\|\theta^*\|_1 - \|\theta^* + \Delta\|_1 \le \|\Delta_{S^*}\|_1 - \|\Delta_{(S^*)^c}\|_1$. The
left-hand side is non-negative, so $\Delta \in \mathcal{C}$, and then
\begin{equation*}
    \frac{\kappa}{4} \|\Delta\|_2^2 \le \frac{1}{2m} \|X_i \Delta\|_2^2
    \le \frac{3\lambda_{\mathrm{loc}}}{2} \sqrt{k} \, \|\Delta\|_2 .
\end{equation*}
Hence $\|\Delta\|_\infty \le \|\Delta\|_2 \le 6 \lambda_{\mathrm{loc}} \sqrt{k}/\kappa
\le 3\rho/8$, where the last step uses $\lambda_{\mathrm{loc}} \le \kappa\rho/(16\sqrt{k})$,
which holds under the first term of \eqref{eq:m_condition}. For $\ell \in S^*$,
$|\hat{\theta}^{(i)}_\ell| \ge \rho - 3\rho/8 > \rho/2$, and for $\ell \notin S^*$,
$|\hat{\theta}^{(i)}_\ell| \le 3\rho/8 < \rho/2$. So $\hat{S}_i = S^*$, and therefore
$\mathbb{P}[G_i^c] \le 1/(32k)$.

We next show that Stage 1 satisfies $(\epsilon, \delta)$-shuffled DP. For a pair of
distributions, let $H_{e^\epsilon}(P \,\|\, Q) = \sup_E \, (P(E) - e^{\epsilon} Q(E))$; a mechanism
is $(\epsilon, \delta)$-DP if $H_{e^\epsilon}$ between its outputs on any two neighboring datasets,
in both orders, is at most $\delta$. We use that $H_{e^\epsilon}$ is jointly convex, and that for
product distributions $H_{e^{\epsilon_1 + \epsilon_2}}(P_1 \times P_2 \,\|\, Q_1 \times Q_2)
\le \delta_1 + \delta_2$ whenever $H_{e^{\epsilon_j}}(P_j \,\|\, Q_j) \le \delta_j$ for
$j = 1, 2$ \citep{dwork2014algorithmic}. Fix two datasets that differ in the local data of user
$i$. We proceed in five steps.
\begin{enumerate}
    \item The $\ell$-th shuffler outputs a multiset of bits, which is determined by the count
    $C_\ell = \sum_{j=1}^{n_1} r_{j\ell}$. So the view is $(C_1, \dots, C_d)$. Given
    $v = (v_1, \dots, v_{n_1})$, the bits $r_{j\ell}$ are independent, and so are
    $C_1, \dots, C_d$.
    \item The indices $v_j$ with $j \neq i$ have the same distribution under the two datasets,
    while $v_i \sim \mathrm{Unif}(\hat{S}_i)$ and $v_i' \sim \mathrm{Unif}(\hat{S}_i')$ may
    differ. By joint convexity, it suffices to bound $H_{e^\epsilon}$ between the views given
    $(v_{-i}, v_i = a)$ and $(v_{-i}, v_i = b)$, for any fixed $v_{-i}$ and $a, b \in [d]$. If
    $a = b$ the two views are equal, so we take $a \neq b$.
    \item Given $(v_{-i}, a)$ and $(v_{-i}, b)$, the counts $C_\ell$ with $\ell \notin \{a, b\}$
    have the same distribution, and all counts are independent. By the composition bound above,
    it suffices to show that $C_a$, and in the same way $C_b$, is
    $(\epsilon/2, \delta/2)$-indistinguishable between the two cases.
    \item Consider $C_a$. Write $r_{ia} = \mathbb{I}[v_i = a] \oplus \beta$ with
    $\beta \sim \mathrm{Ber}(\gamma)$, which has the same distribution in both cases; by joint
    convexity we fix $\beta$. For $j \neq i$, since $\gamma \le 1/2$, the bit
    $r_{ja} = \mathbb{I}[v_j = a] \oplus \mathrm{Ber}(\gamma)$ has the same distribution as a
    uniform bit with probability $2\gamma$ and $\mathbb{I}[v_j = a]$ otherwise. Let
    $J \subseteq [n_1] \setminus \{i\}$ be the users that output a uniform bit, so that
    $K = |J| \sim \mathrm{Bin}(n_1 - 1, 2\gamma)$ does not depend on the data. Given $J$,
    \begin{equation*}
        C_a = \sum_{j \notin J, \, j \neq i} \mathbb{I}[v_j = a] + r_{ia} + \mathrm{Bin}(K, 1/2) ,
    \end{equation*}
    and the integer part before $\mathrm{Bin}(K, 1/2)$ changes by one between the two cases.
    Lemma~\ref{lem:binomial_mechanism} with $s = 1$, $\Delta = \max\{1, \epsilon/2\}$,
    $\epsilon/2$ in place of $\epsilon$ and $\delta/4$ in place of $\delta$ applies, since
    $\epsilon/2 \le \log(8/\delta)$. It shows that, given $J$, $C_a$ is
    $(\epsilon/2, \delta/4)$-indistinguishable as long as
    \begin{equation}
    \label{eq:app_K_condition}
        K \ge \frac{128 \max\{1, \epsilon^2/4\} \log(8/\delta)}{\epsilon^2} .
    \end{equation}
    By joint convexity over $J$, $H_{e^{\epsilon/2}}$ for $C_a$ is at most
    $\mathbb{P}[\text{\eqref{eq:app_K_condition} fails}] + \delta/4$.
    \item Since $n_1 \ge 8$ by \eqref{eq:n1_condition},
    $\mathbb{E}[K] = 2\gamma(n_1 - 1) \ge \frac{7}{4} \gamma n_1$, and a Chernoff bound gives
    $\mathbb{P}[K < \mathbb{E}[K]/2] \le e^{-\mathbb{E}[K]/8} \le e^{-7\gamma n_1/32}$. We take
    $C_\gamma = 256 \max\{1, \epsilon_0^2/4\}$. With $\delta \le 1/8$ we have
    $\log(8/\delta) \le \frac{3}{2}\log(1/\delta)$ and $\log(4/\delta) \le \frac{4}{3}\log(1/\delta)$,
    so $\gamma n_1 = C_\gamma \log(1/\delta)/\epsilon^2$ gives both
    $\mathbb{E}[K]/2 \ge \frac{7}{8}\gamma n_1$, which is at least the right-hand side of
    \eqref{eq:app_K_condition}, and $e^{-7\gamma n_1/32} \le \delta/4$. Hence
    $C_a$ is $(\epsilon/2, \delta/2)$-indistinguishable, and the same holds for $C_b$.
\end{enumerate}
By steps 2 and 3, Stage 1 satisfies $(\epsilon, \delta)$-shuffled DP.

We finally show that $\hat{S} = S^*$. For $\ell \in [d]$, let $q_\ell = \mathbb{P}[v_i = \ell]$
and $w_\ell = |\{i \in [n_1] : v_i = \ell\}|$. The local datasets of the users of Stage 1 are
i.i.d., and each user draws $v_i$ with its own randomness, so $v_1, \dots, v_{n_1}$ are i.i.d.
and $w_\ell \sim \mathrm{Bin}(n_1, q_\ell)$. On $G_i$, $v_i$ is uniform on $S^*$; off $G_i$,
$v_i$ may be any index. So for $\ell \in S^*$ and $\ell' \notin S^*$,
\begin{equation*}
    q_\ell \ge \frac{\mathbb{P}[G_i]}{k} \ge \frac{31}{32k} ,
    \qquad
    q_{\ell'} = \mathbb{P}\big[ v_i = \ell', \, G_i^c \big] \le \mathbb{P}[G_i^c]
    \le \frac{1}{32k} .
\end{equation*}
Let $L = \log(dn)$. For $\ell \in S^*$, $w_\ell$ dominates $\mathrm{Bin}(n_1, 31/(32k))$, and for
$\ell' \notin S^*$, $w_{\ell'}$ is dominated by $\mathrm{Bin}(n_1, 1/(32k))$. Chernoff bounds give
\begin{equation*}
    \mathbb{P}\Big[ w_\ell < \frac{n_1}{2k} \Big] \le e^{-n_1/(9k)} ,
    \qquad
    \mathbb{P}\Big[ w_{\ell'} > \frac{n_1}{8k} \Big] \le e^{-n_1/(18k)} .
\end{equation*}
By a union bound over $[d]$ and $n_1 \ge \Omega(kL)$ from \eqref{eq:n1_condition}, with
probability at least $1 - \mathcal{O}(n^{-2})$,
\begin{equation*}
    w_\ell \ge \frac{n_1}{2k} \quad \text{for all } \ell \in S^* ,
    \qquad
    w_{\ell'} \le \frac{n_1}{8k} \quad \text{for all } \ell' \notin S^* .
\end{equation*}
Given $v$, $\mathbb{E}[C_\ell \mid v] = n_1 \gamma + (1 - 2\gamma) w_\ell$, so
$\hat{W}_\ell - w_\ell = (C_\ell - \mathbb{E}[C_\ell \mid v])/(1 - 2\gamma)$. The count $C_\ell$
is a sum of $n_1$ independent bits, each with variance at most $\gamma$. By Bernstein's
inequality and a union bound over $\ell \in [d]$, with probability at least
$1 - \mathcal{O}(n^{-2})$,
\begin{equation*}
    \big| C_\ell - \mathbb{E}[C_\ell \mid v] \big| \le \sqrt{6 n_1 \gamma L} + 2L
    \qquad \text{for all } \ell \in [d] .
\end{equation*}
We have $\gamma \le 1/4$: this is $n_1 \ge 4 C_\gamma \log(1/\delta)/\epsilon^2$, which follows
from \eqref{eq:n1_condition} when $\delta \ge (dn)^{-C k^2 \epsilon^2}$, as in
Appendix~\ref{app:alg_stage1}. So $1/(1 - 2\gamma) \le 2$, and
\begin{equation*}
    \max_{\ell \in [d]} \big| \hat{W}_\ell - w_\ell \big|
    \le 2 \sqrt{6 n_1 \gamma L} + 4L \le \frac{n_1}{32k} + \frac{n_1}{32k}
    = \frac{n_1}{16k} .
\end{equation*}
For the first term we used $n_1 \gamma = C_\gamma \log(1/\delta)/\epsilon^2$ and
$n_1 \ge \Omega(k \sqrt{L \log(1/\delta)}/\epsilon)$; for the second, $n_1 \ge \Omega(kL)$; both
are in \eqref{eq:n1_condition}. Then for $\ell \in S^*$,
$\hat{W}_\ell \ge n_1/(2k) - n_1/(16k) = 7n_1/(16k) > \tau_{HH}$, and for $\ell' \notin S^*$,
$\hat{W}_{\ell'} \le n_1/(8k) + n_1/(16k) = 3n_1/(16k) < \tau_{HH}$, where
$\tau_{HH} = n_1/(4k)$. Therefore $\hat{S} = S^*$ and $s = |\hat{S}| = k$. Combining the events,
the statement holds with probability at least $1 - \mathcal{O}(n^{-2})$.
\end{proof}

\subsection{Proof of Theorem~\ref{thm:interactive_user_level_sdp}}
\label{app:user_upper}

\begin{proof}[Proof of Theorem~\ref{thm:interactive_user_level_sdp}]
The parameters are set as in Table~\ref{tab:param_stage1} and
Table~\ref{tab:param_stage2}. We use $n_3, n_4 \ge 19$, which the summation step requires, and
$\epsilon \le \log(4s/\delta)$. The constants $\kappa$ of Assumption~\ref{asm:re} and $\rho$ of
Proposition~\ref{prop:support_consistency} are constants of the model and are absorbed into the
constants of the proof. Throughout, $\Pi_{B(0,R)}$ does not move a point of $B(0,R)$, and we use
the following fact: if $v \in \mathbb{R}^{s}$ has independent, mean-zero coordinates with
sub-Gaussian norm at most $\nu$, then
\begin{equation}
\label{eq:app_vector_norm}
    \|v\|_2 \le \mathcal{O}\big( \nu \sqrt{s + \log n} \big)
    \qquad \text{with probability at least } 1 - n^{-3} ,
\end{equation}
which follows from Bernstein's inequality applied to $\sum_{j} v_j^2$. We write $S = \hat{S}$ and
$s = |\hat{S}|$, and set $\tilde{\theta}_i = 0$ when $X_{i,S}^\top X_{i,S}$ is singular, so that
the local step is defined on every dataset.

We first prove privacy, which holds for every value of $\hat{S}$ and every dataset. The three
groups of users, of sizes $n_1$, $n_3$ and $n_4$, are disjoint, and a neighboring pair of
user-level datasets changes the local data of one user, hence the messages of one group only.
The randomizer of Stage 2a depends on the earlier rounds through $\hat{S}$, and the randomizer of
Stage 2b through $(\hat{S}, \tau, c)$, which are computed from the messages of the other groups.
So it is enough to show that each group, conditionally on the released quantities of the earlier
groups, satisfies $(\epsilon, \delta)$-shuffled DP; the view of the data collector is then
$(\epsilon, \delta)$-DP by parallel composition. Stage 1 was treated in the proof of
Proposition~\ref{prop:support_consistency}. We give the argument for Stage 2b; Stage 2a is the
same with $(n_3, p_a, R_0, \Delta_a, M_a, q_a)$ in place of
$(n_4, p_b, \tau, \Delta_b, M_b, q_b)$, and with $\bar{\theta}_i$ in place of $\Delta_i$.

Condition on $(\hat{S}, \tau, c)$ and fix two datasets that differ in the local data of one user
$i_0$ of Stage 2b. As in the proof of Theorem~\ref{thm:non_interactive_upper}, write the rounding
step as $\bar{U}_{i,j} = \lfloor p_b \Delta_{i,j} + u_{i,j} \rfloor$ with $u_{i,j}$ independent
and uniform on $(0, 1]$, and fix $u$. Since $\|\Delta_{i_0}\|_2 \le \tau$ and
$\|\Delta'_{i_0}\|_2 \le \tau$,
\begin{equation*}
    \|\bar{U}_{i_0} - \bar{U}'_{i_0}\|_2
    \le p_b \|\Delta_{i_0} - \Delta'_{i_0}\|_2 + \sqrt{s}
    \le 2 p_b \tau + \sqrt{s} + 2 = \Delta_b .
\end{equation*}
The setting of $p_b$ gives $p_b \tau \ge (\sqrt{s} + 2) \max\{1, \epsilon\}$, so
\begin{equation}
\label{eq:app_stage2_sensitivity}
    \sqrt{s} \max\{1, \epsilon\} \le \Delta_b \le 3 p_b \tau .
\end{equation}
Let $T_j = \sum_{i} (\bar{U}_{i,j} + \zeta_{i,j})$ over the $n_4$ users of Stage 2b, so that
$\sum_i \zeta_{i,j} = B_j - n_4 N_0^{(b)}/2$ with $B_j \sim \mathrm{Bin}(n_4 N_0^{(b)}, 1/2)$
independent across $j$, and $n_4 N_0^{(b)} \ge M_b \ge 32 \Delta_b^2 \log(4s/\delta)/\epsilon^2$.
By \eqref{eq:app_stage2_sensitivity} and $\epsilon \le \log(4s/\delta)$,
Lemma~\ref{lem:binomial_mechanism} applies with this $\Delta_b$ and with $\delta/2$ in place of
$\delta$, so $T = (T_1, \dots, T_s)$ is $(\epsilon, \delta/2)$-DP for fixed $u$, and therefore
also without conditioning on $u$.

Since $|\bar{U}_{i,j}| \le p_b \tau + 1$, $|\zeta_{i,j}| \le N_0^{(b)}/2$ and
$N_0^{(b)} \ge 2$, we have $|T_j| \le n_4 (p_b \tau + 1 + N_0^{(b)}/2) < q_b/2$, so no
wrap-around occurs and $T_j$ is the value decoded by the aggregator. On coordinate $j$ the users
run the summation protocol $\widetilde{V}_{r_b, n_4}$ of \citet{balle2020private} over
$\mathbb{Z}_{q_b}$; the setting of $r_b$ gives $r_b \ge 3$, and $n_4 \ge 19$. By
\citet[Corollary~6.1]{balle2020private} the protocol has worst-case statistical security with
parameter $\sigma_{\sec} = \log_2(8 s e^{\epsilon}/\delta)$. Let $\mathcal{V}^{\mathrm{sim}}$ be
the view obtained when, on every coordinate, the inputs are replaced by
$(T_j \bmod q_b, 0, \dots, 0)$; it is a post-processing of $T$, hence $(\epsilon, \delta/2)$-DP.
Coupling the $s$ coordinates one at a time gives
$d_{TV}(\mathcal{V}_{\mathcal{P}}, \mathcal{V}^{\mathrm{sim}}) \le e^{-\epsilon} \delta/8$ for
every dataset, so by \citet[Lemma~F.2]{balle2020private} the view of Stage 2b is
$(\epsilon, \delta/2 + (1 + e^{\epsilon}) e^{-\epsilon} \delta/8)$-DP, hence
$(\epsilon, \delta)$-DP. Therefore Algorithm~\ref{alg:pipeline} satisfies user-level
$(\epsilon, \delta)$-shuffled DP.

We now bound the error. By Proposition~\ref{prop:support_consistency}, with probability at least
$1 - \mathcal{O}(n^{-2})$ we have $\hat{S} = S^*$ and $s = k$, and we work on this event. In
particular $\theta^*_{\hat{S}} = \theta^*_{S^*}$ carries all of $\theta^*$, so
$\|\hat{\theta} - \theta^*\|_2 = \|\hat{\theta}_{\hat{S}} - \theta^*_{\hat{S}}\|_2$.

Consider the local least squares estimates. The rows of $X_{i,S^*}$ are sub-Gaussian with
covariance $\Sigma_{S^*}$, and $\lambda_{\min}(\Sigma_{S^*}) \ge \kappa$ by
Assumption~\ref{asm:re}. By the concentration of the smallest singular value of a sub-Gaussian
design, for a fixed user
$\lambda_{\min}(X_{i,S^*}^\top X_{i,S^*}) \ge \kappa m/2$ with probability at least
$1 - 2e^{-c m}$ once $m \ge \Omega(k)$, so by the third term of \eqref{eq:m_condition} and a
union bound this holds for all users of Stage 2 with probability at least
$1 - \mathcal{O}(n^{-2})$. On this event, for every user of Stage 2,
\begin{equation*}
    \tilde{\theta}_i - \theta^*_{S^*}
    = (X_{i,S^*}^\top X_{i,S^*})^{-1} X_{i,S^*}^\top \zeta_i ,
\end{equation*}
which, conditionally on $X_i$, has mean zero and is sub-Gaussian with covariance proxy
$\sigma^2 (X_{i,S^*}^\top X_{i,S^*})^{-1} \preceq \frac{2\sigma^2}{\kappa m} I_k$ by
Assumption~\ref{asm:model}. By \eqref{eq:app_vector_norm} and a union bound over the users of
Stage 2, with probability at least $1 - \mathcal{O}(n^{-2})$,
\begin{equation}
\label{eq:app_taubeta}
    \big\| \tilde{\theta}_i - \theta^*_{S^*} \big\|_2
    \le \mathcal{O}\Big( \sigma \sqrt{\frac{k + \log n}{m}} \Big) \le \tau_\beta
    \qquad \text{for every user } i \text{ of Stage 2} ,
\end{equation}
where we used $k + \log n \le 2 k \log n$ and took $C_\beta$ large enough. We work on this event
as well.

We check that the two clipping steps do not change the local statistics. For Stage 2a,
$\|\tilde{\theta}_i\|_2 \le \|\theta^*\|_2 + \tau_\beta = R_0$ by \eqref{eq:app_taubeta}, so
$\bar{\theta}_i = \tilde{\theta}_i$. The aggregator of Stage 2a decodes
\begin{equation*}
    c = \frac{1}{n_3} \sum_{i} \tilde{\theta}_i + E^{\mathrm{rd}}_a + E^{\mathrm{bin}}_a ,
\end{equation*}
where the sum is over the $n_3$ users of Stage 2a, the coordinates of $E^{\mathrm{rd}}_a$ are
averages of $n_3$ independent terms in $(-1, 1]$ divided by $p_a$, and the coordinates of
$E^{\mathrm{bin}}_a$ are centered $\mathrm{Bin}(n_3 N_0^{(a)}, 1/2)$ variables divided by
$n_3 p_a$. Applying \eqref{eq:app_vector_norm} to each of the three terms, together with
$\sqrt{n_3 N_0^{(a)}} \le \mathcal{O}(\sqrt{M_a} + \sqrt{n_3})$,
$\sqrt{M_a} \le \mathcal{O}(p_a R_0 \sqrt{\log(4s/\delta)}/\epsilon)$ from
\eqref{eq:app_stage2_sensitivity} and
$\frac{1}{\sqrt{n_3}\, p_a} \le \frac{R_0 \sqrt{\log(1/\delta)}}{c_p n_3 \sqrt{\log n}\, \epsilon}$
from the setting of $p_a$, we get, with probability at least $1 - \mathcal{O}(n^{-2})$,
\begin{equation*}
    \big\| c - \theta^*_{S^*} \big\|_2
    \le \mathcal{O}\Big( \sigma \sqrt{\frac{k + \log n}{n_3 m}}
    + \frac{R_0 \sqrt{(k + \log n) \log(4k/\delta)}}{n_3 \epsilon} \Big)
    \le \hat{A} ,
\end{equation*}
using $k + \log n \le 2 k \log n$ and taking $c_A, c_A'$ large enough. The first term of
$\hat{A}$ is at most $\tau_\beta/2$ once $C_\beta \ge 2 c_A$. Since $n_3 = \Theta(n)$,
$R_0 = \Theta(1)$ and $\log(4k/\delta) = \Theta(\log(k/\delta))$, the second term of $\hat{A}$
is at most $\tau_\beta/2$ under \eqref{eq:thm3_condition}. So $\hat{A} \le \tau_\beta$, hence
\begin{equation}
\label{eq:app_tau_size}
    \tau = \tau_\beta + \hat{A} \le 2 \tau_\beta .
\end{equation}
For Stage 2b, $\|\tilde{\theta}_i - c\|_2 \le \|\tilde{\theta}_i - \theta^*_{S^*}\|_2
+ \|\theta^*_{S^*} - c\|_2 \le \tau_\beta + \hat{A} = \tau$, so
$\Delta_i = \tilde{\theta}_i - c$ and the clipping does not change it either.

The output of Stage 2b is therefore
\begin{equation*}
    \hat{\theta}_{\hat{S}} - \theta^*_{S^*}
    = \frac{1}{n_4} \sum_{i} \big( \tilde{\theta}_i - \theta^*_{S^*} \big)
    + E^{\mathrm{rd}}_b + E^{\mathrm{bin}}_b ,
\end{equation*}
where the sum is over the $n_4$ users of Stage 2b. Conditionally on the designs, the summands are
independent, have mean zero and are sub-Gaussian with proxy $2\sigma^2/(\kappa m)$, so their
average has proxy $2\sigma^2/(\kappa n_4 m)$, and \eqref{eq:app_vector_norm} gives
\begin{equation*}
    \Big\| \frac{1}{n_4} \sum_{i} \big( \tilde{\theta}_i - \theta^*_{S^*} \big) \Big\|_2
    \le \mathcal{O}\Big( \sigma \sqrt{\frac{k + \log n}{n_4 m}} \Big)
\end{equation*}
with probability at least $1 - \mathcal{O}(n^{-2})$. The two error terms are bounded as in
Stage 2a, with $\tau$ in place of $R_0$ and $n_4$ in place of $n_3$:
\begin{equation*}
    \big\| E^{\mathrm{bin}}_b \big\|_2
    \le \mathcal{O}\Big( \frac{\tau \sqrt{(k + \log n) \log(4k/\delta)}}{n_4 \epsilon} \Big) ,
    \qquad
    \big\| E^{\mathrm{rd}}_b \big\|_2
    \le \mathcal{O}\Big( \frac{\sqrt{k + \log n}}{\sqrt{n_4}\, p_b} \Big)
    \le \frac{1}{c_p} \big\| E^{\mathrm{bin}}_b \big\|_2 ,
\end{equation*}
where the last step uses the setting of $p_b$ and $\log(1/\delta) \le \log(4k/\delta)$. By
\eqref{eq:app_tau_size} and $\tau_\beta = C_\beta \sigma \sqrt{k \log n / m}$,
\begin{equation*}
    \big\| E^{\mathrm{bin}}_b \big\|_2
    \le \mathcal{O}\Big( \frac{\sigma \sqrt{k \log n}\, \sqrt{(k + \log n) \log(4k/\delta)}}
    {n_4 \sqrt{m}\, \epsilon} \Big)
    \le \mathcal{O}\Big( \frac{\sigma k \log n \sqrt{\log(4k/\delta)}}{n_4 \sqrt{m}\, \epsilon} \Big) ,
\end{equation*}
using $\sqrt{k}\sqrt{k + \log n} \le k + \sqrt{k \log n} \le 2 k \sqrt{\log n}$. Since
$n_1 \le n/2$, we have $n_3, n_4 = \Theta(n)$. Collecting the three terms,
\begin{equation*}
    \big\| \hat{\theta} - \theta^* \big\|_2
    \le \mathcal{O}\Big( \sigma \sqrt{\frac{k \log n}{n m}}
    + \frac{\sigma k \log n \sqrt{\log(4k/\delta)}}{n \sqrt{m}\, \epsilon} \Big)
    \le \widetilde{\mathcal{O}}\Big( \sqrt{\frac{k}{n m}} + \frac{k}{n \sqrt{m}\, \epsilon} \Big) ,
\end{equation*}
since $\log(4k/\delta) \le \mathcal{O}(\log n + \log(1/\delta))$ for $k \le n$. All the events
above hold together with probability at least $1 - \mathcal{O}(n^{-2})$.
\end{proof}

\subsection{Proof of Proposition~\ref{prop:user_level_score_attack_properties}}
\label{app:score_attack_properties}

\begin{proof}[Proof of Proposition~\ref{prop:user_level_score_attack_properties}]
Throughout, the expectation is over the data and the randomness of $\mathcal{A}$, which is
independent of the data. For one sample $z = (x, y)$, the density is
$f_\theta(y \mid x) f_x(x)$, and $f_x$ does not depend on $\theta$, so
\begin{equation*}
    S_\theta(z) = \nabla_\theta \log f_\theta(y \mid x)
    = \frac{\big( y - \mu(x^\top \theta) \big) x}{c(\sigma)} ,
    \qquad \mu = \psi' .
\end{equation*}
We use two standard properties of the score. First,
\begin{equation}
\label{eq:score_zero_mean}
    \mathbb{E}[S_\theta(z)]
    = \int \frac{\nabla_\theta f_\theta(z)}{f_\theta(z)} f_\theta(z) \, dz
    = \nabla_\theta \int f_\theta(z) \, dz = 0 .
\end{equation}
Second, $\mathrm{Var}(y \mid x) = c(\sigma) \psi''(x^\top \theta)$ for the exponential family,
so
\begin{equation*}
    \mathrm{Cov}\big(S_\theta(z)\big)
    = \frac{\mathbb{E}\big[ \mathrm{Var}(y \mid x) \, x x^\top \big]}{c(\sigma)^2}
    = \frac{1}{c(\sigma)} \, \mathbb{E}\big[ \psi''(x^\top \theta) \, x x^\top \big] .
\end{equation*}
Write $S_\theta(U_i) = \sum_{j=1}^m S_\theta(z_{i,j})$ for the score of the local dataset of
user $i$. Since the $m$ samples of $U_i$ are i.i.d., $\mathbb{E}[S_\theta(U_i)] = 0$ and
\begin{equation}
\label{eq:user_fisher}
    \mathrm{Cov}\big(S_\theta(U_i)\big)
    = \frac{m}{c(\sigma)} \, \mathbb{E}\big[ \psi''(x^\top \theta) \, x x^\top \big] .
\end{equation}

We first prove soundness. The dataset $Z'_{(i)}$ does not contain $U_i$, so
$\mathcal{A}(Z'_{(i)})$ is independent of $U_i$, and by \eqref{eq:score_zero_mean},
\begin{equation*}
    \mathbb{E}[A_i'] = \big\langle \mathbb{E}[\mathcal{A}(Z'_{(i)})] - \theta, \,
    \mathbb{E}[S_\theta(U_i)] \big\rangle = 0 .
\end{equation*}
For the first absolute moment, we condition on $\mathcal{A}(Z'_{(i)})$ and use
\eqref{eq:user_fisher}:
\begin{align*}
    \mathbb{E}|A_i'| \le \sqrt{\mathbb{E}[(A_i')^2]}
    &= \sqrt{\mathbb{E}\Big[ \big(\mathcal{A}(Z'_{(i)}) - \theta\big)^\top
    \mathrm{Cov}\big(S_\theta(U_i)\big) \big(\mathcal{A}(Z'_{(i)}) - \theta\big) \Big]} \\
    &\le \sqrt{\mathbb{E}\|\mathcal{A}(Z'_{(i)}) - \theta\|_2^2} \cdot
    \sqrt{\frac{m}{c(\sigma)} \lambda_{\max}\big( \mathbb{E}[\psi''(x^\top \theta) x x^\top] \big)} .
\end{align*}
Since $U_i$ and $\widetilde{U}_i$ are i.i.d., $Z'_{(i)}$ and $Z$ have the same distribution,
so $\mathbb{E}\|\mathcal{A}(Z'_{(i)}) - \theta\|_2^2 = \mathbb{E}\|\mathcal{A}(Z) - \theta\|_2^2$,
which gives the bound in (i).

We next prove completeness. The samples of all users are i.i.d., so
$\sum_{i=1}^n S_\theta(U_i) = \nabla_\theta \log f_\theta(Z)$, where $f_\theta(Z)$ is the joint
density of the $nm$ samples. By \eqref{eq:score_zero_mean} applied to $Z$, the term $\theta$
does not contribute:
\begin{equation*}
    \sum_{i=1}^n \mathbb{E}[A_i]
    = \mathbb{E}\big\langle \mathcal{A}(Z) - \theta, \, \nabla_\theta \log f_\theta(Z) \big\rangle
    = \mathbb{E}\big\langle \mathcal{A}(Z), \, \nabla_\theta \log f_\theta(Z) \big\rangle .
\end{equation*}
Writing the inner product over coordinates and exchanging differentiation and integration under
the regularity conditions,
\begin{align*}
    \sum_{j=1}^d \mathbb{E}\Big[ \mathcal{A}(Z)_j \,
    \frac{\partial}{\partial \theta_j} \log f_\theta(Z) \Big]
    &= \sum_{j=1}^d \int \mathbb{E}\big[\mathcal{A}(z)_j\big] \,
    \frac{\partial}{\partial \theta_j} f_\theta(z) \, dz \\
    &= \sum_{j=1}^d \frac{\partial}{\partial \theta_j} \mathbb{E}\big[\mathcal{A}(Z)_j\big] ,
\end{align*}
where the inner expectation is over the randomness of $\mathcal{A}$. This proves (ii).
\end{proof}

The proof of Theorem~\ref{thm:user_level_sparse_lower_bound} uses a sparse version of the
user-level score attack, in which the score is kept only on the support of $\theta$. For
$T = \mathrm{supp}(\theta)$, define
\begin{equation}
\label{eq:sparse_score_attack}
    A_i^T = \big\langle \big(\mathcal{A}(Z) - \theta\big)_T, \, S_\theta(U_i)_T \big\rangle ,
    \qquad
    A_i^{T\prime} = \big\langle \big(\mathcal{A}(Z'_{(i)}) - \theta\big)_T, \,
    S_\theta(U_i)_T \big\rangle .
\end{equation}
The proof above applies to $A_i^T$ with every vector restricted to the coordinates in $T$. For
soundness, $\mathbb{E}[A_i^{T\prime}] = 0$ and
\begin{equation*}
    \mathbb{E}|A_i^{T\prime}|
    \le \sqrt{\mathbb{E}\|(\mathcal{A}(Z) - \theta)_T\|_2^2} \cdot
    \sqrt{\frac{m}{c(\sigma)} \lambda_{\max}\big( \mathbb{E}[\psi''(x^\top \theta) x_T x_T^\top] \big)} .
\end{equation*}
For completeness, viewing $f_\theta$ as a function of $\theta_T$ with $\theta_{T^c} = 0$ held
fixed,
\begin{equation*}
    \sum_{i=1}^n \mathbb{E}[A_i^T]
    = \sum_{j \in T} \frac{\partial}{\partial \theta_j} \mathbb{E}\big[\mathcal{A}(Z)_j\big] .
\end{equation*}

\subsection{Proof of Theorem~\ref{thm:user_level_sparse_lower_bound}}
\label{app:user_lower}

Throughout this subsection we work on the parameter space
\begin{equation*}
    \Theta = \big\{ \theta \in \mathbb{R}^d : \|\theta\|_0 \le k, \;
    \|\theta\|_\infty \le 1 \big\} \subseteq \Theta_k ,
\end{equation*}
we write $T = \mathrm{supp}(\theta)$, and $A_i^T$, $A_i^{T\prime}$ are the sparse user-level
score attack of \eqref{eq:sparse_score_attack}. The first lemma turns the differential privacy
of the estimator into an upper bound on the score attack of one user; it is the user-level form
of Proposition~2.1 of \citet{cai2023score}.

\begin{lemma}
\label{lem:dp_score_upper_bound}
Let $\mathcal{A}$ satisfy user-level $(\epsilon, \delta)$-DP with $0 < \epsilon < 1$, and let
$\theta \in \Theta$. Then for every $t_0 > 0$ and every $i \in [n]$,
\begin{equation*}
    \mathbb{E}[A_i^T] \le 2 \epsilon \, \mathbb{E}\big|A_i^{T\prime}\big| + 2 \delta t_0
    + \int_{t_0}^{\infty} \mathbb{P}\big[ |A_i^T| > t \big] \, dt .
\end{equation*}
\end{lemma}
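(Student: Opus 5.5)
The plan is to follow the proof of Proposition~2.1 of \citet{cai2023score}, with the sample-level neighbor replaced by the user-level neighbor $Z'_{(i)}$. The basic coupling is simple: $Z$ and $Z'_{(i)}$ differ only in the whole local dataset of user $i$, so they are neighbors in the user-level sense of Definition~\ref{def:dp}. Fix $U_i$ and hence $S_\theta(U_i)_T$. Then $A_i^T$ and $A_i^{T\prime}$ are the same function of the outputs $\mathcal{A}(Z)$ and $\mathcal{A}(Z'_{(i)})$. As a post-processing, that function inherits the $(\epsilon,\delta)$-indistinguishability of $\mathcal{A}$, conditionally on all the data. Everything afterwards is a tail-integral computation.

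First I split $A_i^T$ into its positive and negative parts, $A_i^T = A^+ - A^-$, where $A^\pm = \max\{\pm A_i^T, 0\}$, and similarly $A_i^{T\prime} = A'^+ - A'^-$. For the positive part, I apply DP to the event $\{A^+ > t\}$ with $t \le t_0$:
\begin{equation*}
\mathbb{P}[A^+ > t] \le e^{\epsilon}\, \mathbb{P}[A'^+ > t] + \delta .
\end{equation*}
Integrating over $[0,t_0]$ gives
\begin{equation*}
\mathbb{E}[A^+] \le e^{\epsilon}\, \mathbb{E}[A'^+] + \delta t_0 + \int_{t_0}^\infty \mathbb{P}[|A_i^T|>t]\,dt .
\end{equation*}
For the negative part I use the reverse DP inequality on $\{A'^- > t\}$, which gives $\mathbb{P}[A^- > t] \ge e^{-\epsilon}(\mathbb{P}[A'^- > t] - \delta)$. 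Integrating over $[0,t_0]$ only yields
\begin{equation*}
\mathbb{E}[A^-] \ge e^{-\epsilon}\Big(\int_0^{t_0}\mathbb{P}[A'^->t]\,dt - \delta t_0\Big).
\end{equation*}
This bound carries the truncated integral $\int_0^{t_0}\mathbb{P}[A'^->t]\,dt$ rather than the full $\mathbb{E}[A'^-]$, which matters in the next step.

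Next I combine the two bounds. The soundness part of Proposition~\ref{prop:user_level_score_attack_properties}, in its restricted form for $A_i^{T\prime}$, gives $\mathbb{E}[A'^+] = \mathbb{E}[A'^-]$. Writing $\mathbb{E}[A'^-] = \int_0^{t_0}\mathbb{P}[A'^->t]\,dt + R$ with $R := \int_{t_0}^\infty \mathbb{P}[A'^->t]\,dt$, the two bounds combine to
\begin{equation*}
\mathbb{E}[A_i^T] \le (e^\epsilon - e^{-\epsilon})\,\mathbb{E}[A'^+] + e^{-\epsilon}R + (1+e^{-\epsilon})\,\delta t_0 + \int_{t_0}^\infty \mathbb{P}[|A_i^T|>t]\,dt .
\end{equation*}
Several terms now simplify. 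Since $\mathbb{E}[A'^+] = \tfrac12\mathbb{E}|A_i^{T\prime}|$, the first coefficient becomes $(e^\epsilon - e^{-\epsilon})\,\mathbb{E}|A_i^{T\prime}|/2$. For $0<\epsilon<1$, $e^\epsilon - e^{-\epsilon} = 2\sinh\epsilon \le 4\epsilon$, so this term is at most $2\epsilon\,\mathbb{E}|A_i^{T\prime}|$. The $\delta$ term is at most $2\delta t_0$.

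The remaining task, and the main obstacle, is the leftover tail $R$ of $A_i^{T\prime}$. The stated bound contains only the tail of $A_i^T$, so $R$ must be absorbed somehow. My first attempt is to note that $(\mathcal{A}(Z'_{(i)}), U_i)$ and $(\mathcal{A}(Z), \widetilde{U}_i)$ have the same joint law. This does not make $A_i^{T\prime}$ equal in law to $A_i^T$, because in $A_i^T$ the estimate depends on $U_i$. The fix I would try is to avoid $R$ altogether: in the positive/negative parts, replace $A^\pm$ by their truncations $\min\{A^\pm, t_0\}$. Applying DP in the form $\mathbb{E}[\min\{A^+,t_0\}] \le e^\epsilon\mathbb{E}[\min\{A'^+,t_0\}] + \delta t_0$ only involves $[0,t_0]$. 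The tail $\int_{t_0}^\infty \mathbb{P}[A^+>t]\,dt$ is then added back for $A_i^T$ alone, and for $A^-$ I use $\mathbb{E}[A^-] \ge \mathbb{E}[\min\{A^-,t_0\}]$. The delicate part is the DP step on the negative side. There I need $\mathbb{E}[\min\{A'^-,t_0\}]$ to compare with $\mathbb{E}[\min\{A'^+,t_0\}]$ up to terms controlled by $\epsilon\,\mathbb{E}|A_i^{T\prime}|$. Because truncation breaks the exact mean-zero identity, I expect some care, and possibly a symmetric choice of thresholds, to be needed to close the argument with the constants $2\epsilon$ and $2\delta t_0$.
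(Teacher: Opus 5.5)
Your first half is exactly the paper's argument: positive and negative parts, the DP inequality on level sets, the bounds $e^{\pm\epsilon}$, and $\mathbb{E}[A_i^{T\prime}]=0$. Your diagnosis is also right. Integrating $\mathbb{P}[A^->t]\ge e^{-\epsilon}(\mathbb{P}[A'^->t]-\delta)$ over $[0,t_0]$ leaves the tail $R=\int_{t_0}^\infty\mathbb{P}[A'^->t]\,dt$ of $A_i^{T\prime}$ uncovered. The paper gets past this point only by writing the negative-part bound as $\int_0^\infty\mathbb{P}[A^->t]\,dt\ge(1-2\epsilon)\int_0^\infty\mathbb{P}[A'^->t]\,dt-\delta t_0$. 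That tacitly needs $\int_{t_0}^\infty\mathbb{P}[A^->t]\,dt\ge(1-2\epsilon)R$, which DP does not give, because on $[t_0,\infty)$ the additive $\delta$ would be paid over an infinite range. So the step you could not prove is precisely the step the paper skips.

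Still, your proposal does not close, and the repair you sketch cannot work. Clipping at $t_0$ makes the DP step clean, but $\mathbb{E}[\min\{A'^+,t_0\}]-\mathbb{E}[\min\{A'^-,t_0\}]=R-\int_{t_0}^\infty\mathbb{P}[A'^+>t]\,dt$, so the same residual returns. With separate thresholds $t_1,t_2$, the negative side costs $\delta t_2$ and leaves $\int_{t_2}^\infty\mathbb{P}[A'^->t]\,dt$. Converting a tail of $A_i^{T\prime}$ into a tail of $A_i^T$ through DP costs $\delta$ per unit length of $t$.

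In fact the stated inequality does not follow from the DP inequalities, $\mathbb{E}[A_i^{T\prime}]=0$, and the independence of $\mathcal{A}(Z'_{(i)})$ from $U_i$ alone. Here is a counterexample.
\begin{itemize}
\item Let user $i$'s data determine an index $J$ uniform on $[N]$, with score (on $T=[N]$) equal to $e_J-\mathbf{1}/N$.
\item Let $\mathcal{A}(Z)=\theta-Le_K$ with $K$ uniform on $[N]\setminus\{J\}$.
\item Changing user $i$'s data moves the law of $K$ by at most $1/(N-1)$ in total variation, so $\mathcal{A}$ is $(\epsilon,1/(N-1))$-DP for every $\epsilon$.
\item Here $A_i^T\equiv L/N$, whereas $A_i^{T\prime}$ equals $L/N$ with probability $1-1/N$ and $-L(1-1/N)$ with probability $1/N$.
\item With $t_0=L/N$, the claimed right-hand side is at most $\frac{L}{N}\bigl(4\epsilon+\frac{2}{N-1}\bigr)$. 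This is below $\mathbb{E}[A_i^T]=L/N$ once $\epsilon<1/8$ and $N\ge 6$.
\item Taking $L=1$ and $\theta$ equal to $1/2$ on $T$ even keeps $\mathcal{A}$ in $\Theta$.
\end{itemize}
This is not a GLM score, so it does not refute the lemma in the paper's model. It does show that a model-specific ingredient is needed.

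There are two ways to finish.

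(i) In the Gaussian linear model, flipping the signs of user $i$'s noise preserves the law of the data and leaves $\mathcal{A}(Z'_{(i)})$ unchanged. Hence $A_i^{T\prime}$ is symmetric, and $\int_0^{t_0}\mathbb{P}[A'^+>t]\,dt=\int_0^{t_0}\mathbb{P}[A'^->t]\,dt=:a\le\frac12\mathbb{E}|A_i^{T\prime}|$. Keep the positive side truncated as well rather than enlarging it to $\mathbb{E}[A'^+]$. Your two bounds then give $\mathbb{E}[A_i^T]\le(e^{\epsilon}-e^{-\epsilon})a+2\delta t_0+\int_{t_0}^\infty\mathbb{P}[|A_i^T|>t]\,dt$. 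This is the lemma, since $(e^{\epsilon}-e^{-\epsilon})a\le 2\epsilon\,\mathbb{E}|A_i^{T\prime}|$.

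(ii) In general, your computation does prove the lemma with the extra term $\int_{t_0}^\infty\mathbb{P}[|A_i^{T\prime}|>t]\,dt$ on the right. This costs only a constant in Lemma~\ref{lem:sparse_upper_bound}. There $\mathcal{A}(Z'_{(i)})\in\Theta$ gives $|A_i^{T\prime}|\le\frac{2k}{c(\sigma)}\max_{\ell\in T}|U_{i,\ell}|$, the same bound used for $|A_i^T|$.
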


\begin{proof}[Proof of Lemma~\ref{lem:dp_score_upper_bound}]
Write $A = A_i^T$ and $A' = A_i^{T\prime}$, and let $Z^+ = \max(Z, 0)$ and
$Z^- = -\min(Z, 0)$. The datasets $Z$ and $Z'_{(i)}$ are user-level neighbors, and $A$ and $A'$
are the same function of $(\mathcal{A}(Z), U_i)$ and of $(\mathcal{A}(Z'_{(i)}), U_i)$, so
conditionally on $(U_i, \widetilde{U}_i)$ the privacy of $\mathcal{A}$ gives
$\mathbb{P}[A^+ > t] \le e^{\epsilon} \mathbb{P}[A'^+ > t] + \delta$ for every $t$, and the same
with the two sides exchanged. Using $\mathbb{E}[A] = \int_0^\infty \mathbb{P}[A^+ > t] \, dt
- \int_0^\infty \mathbb{P}[A^- > t] \, dt$ and $e^{\epsilon} \le 1 + 2\epsilon$ for
$0 < \epsilon < 1$,
\begin{align*}
    \int_0^\infty \mathbb{P}[A^+ > t] \, dt
    &\le \int_0^{t_0} \big( e^{\epsilon} \mathbb{P}[A'^+ > t] + \delta \big) \, dt
    + \int_{t_0}^\infty \mathbb{P}[A^+ > t] \, dt \\
    &\le \int_0^\infty \mathbb{P}[A'^+ > t] \, dt
    + 2\epsilon \int_0^\infty \mathbb{P}[A'^+ > t] \, dt + \delta t_0
    + \int_{t_0}^\infty \mathbb{P}[|A| > t] \, dt .
\end{align*}
In the same way, from $\mathbb{P}[A^- > t] \ge e^{-\epsilon}(\mathbb{P}[A'^- > t] - \delta)$ and
$e^{-\epsilon} \ge 1 - 2\epsilon$,
\begin{equation*}
    \int_0^\infty \mathbb{P}[A^- > t] \, dt
    \ge \int_0^\infty \mathbb{P}[A'^- > t] \, dt
    - 2\epsilon \int_0^\infty \mathbb{P}[A'^- > t] \, dt - \delta t_0 .
\end{equation*}
Subtracting the second display from the first gives
$\mathbb{E}[A] \le \mathbb{E}[A'] + 2\epsilon \, \mathbb{E}|A'| + 2\delta t_0
+ \int_{t_0}^\infty \mathbb{P}[|A| > t] \, dt$, and $\mathbb{E}[A'] = 0$ by the soundness part
of Proposition~\ref{prop:user_level_score_attack_properties}.
\end{proof}

The next lemma sums Lemma~\ref{lem:dp_score_upper_bound} over the $n$ users. The tail term is
where the user-level regime differs from the sample-level one: the score of a user is a sum of
$m$ terms, so its size is governed by the empirical second moments of that user's $m$ design
vectors.

\begin{lemma}
\label{lem:sparse_upper_bound}
Let Assumptions~\ref{asm:cov} and \ref{asm:re} hold, let $\|\psi''\|_\infty \le c_2$, and let
$\theta \in \Theta$. Let $\mathcal{A}$ satisfy user-level $(\epsilon, \delta)$-DP with
$0 < \epsilon < 1$ and take values in $\Theta$, and let $L = \log(2k/\delta) \le m$. Then
\begin{equation*}
    \sum_{i=1}^n \mathbb{E}\big[A_i^T\big]
    \le 2 n \epsilon \sqrt{\mathbb{E}\|\mathcal{A}(Z) - \theta\|_2^2}
    \sqrt{\frac{m C c_2}{c(\sigma)}}
    + C_1 \, n \delta k \kappa_x \sqrt{\frac{c_2}{c(\sigma)}} \sqrt{m L} ,
\end{equation*}
where $C_1$ is an absolute constant.
\end{lemma}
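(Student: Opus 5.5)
We sum Lemma~\ref{lem:dp_score_upper_bound} over $i\in[n]$ and bound its three terms separately. For the first term we use the sparse soundness bound stated after Proposition~\ref{prop:user_level_score_attack_properties}. Since $\|\psi''\|_\infty\le c_2$,
\[
\lambda_{\max}\big(\mathbb{E}[\psi''(x^\top\theta)x_Tx_T^\top]\big)\le c_2\lambda_{\max}(\Sigma_{T,T})\le c_2 C,
\]
where the last step is Assumption~\ref{asm:re} with $|T|\le k$. Also $\|(\mathcal{A}(Z)-\theta)_T\|_2\le\|\mathcal{A}(Z)-\theta\|_2$. So $2\epsilon\,\mathbb{E}|A_i^{T\prime}|$ is at most $2\epsilon\sqrt{\mathbb{E}\|\mathcal{A}(Z)-\theta\|_2^2}\sqrt{mCc_2/c(\sigma)}$, and summing over $i$ gives the first term of the claim exactly.

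The work is in choosing $t_0$ so that $2\delta t_0+\int_{t_0}^\infty\mathbb{P}[|A_i^T|>t]\,dt\lesssim \delta k\kappa_x\sqrt{c_2/c(\sigma)}\sqrt{mL}$. Since $\mathcal{A}$ and $\theta$ both lie in $\Theta$, we have $\|(\mathcal{A}(Z)-\theta)_T\|_\infty\le 2$. Hence
\[
|A_i^T|\le 2\|S_\theta(U_i)_T\|_1\le 2\sum_{\ell\in T}|S_\theta(U_i)_\ell|.
\]
This reduces the problem to the tails of $k$ scalar sums.

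Each coordinate $S_\theta(U_i)_\ell=c(\sigma)^{-1}\sum_{j=1}^m(y_{i,j}-\mu(x_{i,j}^\top\theta))x_{i,j,\ell}$ is a sum of $m$ i.i.d. mean-zero terms. I would control it conditionally on the design. For the exponential family with $\|\psi''\|_\infty\le c_2$, the residual $y-\mu(x^\top\theta)$ given $x$ has cumulant generating function bounded by $c_2c(\sigma)\lambda^2/2$, so it is sub-Gaussian with proxy $c(\sigma)c_2$. Conditionally on $X_i$, the coordinate is therefore sub-Gaussian with proxy $c_2\sum_j x_{i,j,\ell}^2/c(\sigma)$. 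By Assumption~\ref{asm:cov} and $\Sigma_{\ell\ell}\le1$, $x_\ell$ has $\psi_2$-norm at most $\kappa_x$, so $\sum_j x_{i,j,\ell}^2\le C'\kappa_x^2 m$ with probability at least $1-e^{-m}$ by Bernstein's inequality. On that event the coordinate is at most $\kappa_x\sqrt{c_2/c(\sigma)}\sqrt{m}\,u$ except with probability $2e^{-u^2/C''}$. A union bound over $\ell\in T$ then gives
\[
\mathbb{P}\Big[|A_i^T|>C_3 k\kappa_x\sqrt{c_2/c(\sigma)}\sqrt{m}\,u\Big]\le 2ke^{-u^2/C''}+ke^{-m}.
\]

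Set $t_0=C_3k\kappa_x\sqrt{c_2/c(\sigma)}\sqrt{mL}$ with $L=\log(2k/\delta)$. Then $2\delta t_0$ already has the claimed form. Substituting $t=t_0u/\sqrt{L}$, the Gaussian part of the tail integral is at most $t_0L^{-1/2}\int_{\sqrt L}^\infty 2ke^{-u^2/C''}\,du\lesssim t_0\,k e^{-L}/L\lesssim\delta t_0$, using $e^{-L}=\delta/(2k)$ (with constants adjusted so $C''\le1$).

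The main obstacle is the bad-design event of probability $ke^{-m}$, which does not decay in $t$. I would handle it by working at level $u\ge\sqrt{m}$, where the Bernstein bound on $\sum_j x_{i,j,\ell}^2$ can be taken at deviation level $u^2$ instead of $m$. This makes the joint tail decay like $e^{-cu}$ or better. Combined with $L\le m$, the contribution beyond $t_0$ is again $O(\delta t_0)$. Summing over the $n$ users gives the second term $C_1n\delta k\kappa_x\sqrt{c_2/c(\sigma)}\sqrt{mL}$.
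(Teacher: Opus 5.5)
Your structure is the paper's. You sum Lemma~\ref{lem:dp_score_upper_bound} over users, bound $\mathbb{E}|A_i^{T\prime}|$ by sparse soundness with $\mathbb{E}[\psi''(x^\top\theta)x_Tx_T^\top]\preceq c_2\Sigma_{T,T}$, use $\|(\mathcal{A}(Z)-\theta)_T\|_\infty\le 2$ with a union bound over $T$, and take $t_0\asymp k\kappa_x\sqrt{c_2/c(\sigma)}\sqrt{mL}$. You depart from the paper only in the tail step, the one you flag yourself, and there the argument fails as written.

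\textbf{Why your patch fails.} Conditioning on the design leaves the non-decaying term $ke^{-m}$, and your patch does not remove it. Suppose $V_\ell=\sum_j x_{i,j,\ell}^2$ is controlled at deviation level $u^2$. On $\{V_\ell\le C\kappa_x^2u^2\}$, the conditional sub-Gaussian bound at level $\kappa_x\sqrt{c_2/c(\sigma)}\sqrt m\,u$ is then $2e^{-m/(2C)}$, which is again constant in $u$.

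\textbf{Why $e^{-cu}$ decay would not suffice.} Write $b=C_3k\kappa_x\sqrt{c_2/c(\sigma)}$ and $t=b\sqrt m\,u$. The range $u\ge\sqrt m$ would then contribute about $b\sqrt m\,ke^{-c\sqrt m}$. The target is $\delta t_0\asymp bke^{-L}\sqrt{mL}$, and $e^{-c\sqrt m}\lesssim e^{-L}$ fails when $L$ is close to $m$, which $L\le m$ allows. (On $\sqrt L\le u\le\sqrt m$ your $ke^{-m}$ term is harmless: it contributes at most $bkme^{-m}\lesssim\delta t_0$.)

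\textbf{The fix.} On $u\ge\sqrt m$ you need decay $e^{-c\sqrt m\,u}$, that is, exponential in $t/b$. You get it by controlling $V_\ell$ at the geometric-mean level $C\kappa_x^2\sqrt m\,u$. Then both $\mathbb{P}[V_\ell>C\kappa_x^2\sqrt m\,u]$ and the conditional Gaussian tail are $e^{-c\sqrt m\,u}$. With the constant absorbed into $t_0$, this range contributes $O(bke^{-m})=O(\delta t_0)$.

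\textbf{How the paper avoids this.} The corrected bound is just the sub-exponential Bernstein tail, which the paper uses directly without conditioning.
\begin{itemize}
\item $w_{i,j}=y_{i,j}-\psi'(x_{i,j}^\top\theta)$ is conditionally sub-Gaussian with proxy $c(\sigma)c_2$, and $\|x_\ell\|_{\psi_2}\le\kappa_x$. So each product $w_{i,j}x_{i,j,\ell}$ is sub-exponential with norm $K=O(\kappa_x\sqrt{c(\sigma)c_2})$.
\item Bernstein plus a union bound give $\mathbb{P}[\max_{\ell\in T}|U_{i,\ell}|>u]\le 2k\exp(-c\min\{u^2/(mK^2),u/K\})$.
\item With $a=2kK/c(\sigma)$ and $t_0\asymp a(\sqrt{mL}+L)$, this tail integrates to $O(a\sqrt{m/L}+a)\,\delta$. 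Using $L\le m$, the whole tail contribution is $O(a\delta\sqrt{mL})$.
\end{itemize}
Your good-design/bad-design split therefore works once the deviation level is corrected, but it only reproduces this bound with more bookkeeping.
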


\begin{proof}[Proof of Lemma~\ref{lem:sparse_upper_bound}]
By the soundness part of Proposition~\ref{prop:user_level_score_attack_properties} applied to
$A_i^{T\prime}$, together with $\|\psi''\|_\infty \le c_2$ and
$\lambda_{\max}(\Sigma_{T,T}) \le C$ from Assumption~\ref{asm:re},
\begin{equation}
\label{eq:app_sparse_soundness}
    \mathbb{E}\big|A_i^{T\prime}\big|
    \le \sqrt{\mathbb{E}\|\mathcal{A}(Z) - \theta\|_2^2} \sqrt{\frac{m C c_2}{c(\sigma)}} .
\end{equation}
It remains to choose $t_0$ in Lemma~\ref{lem:dp_score_upper_bound} and to bound the tail. Write
$w_{i,j} = y_{i,j} - \psi'(x_{i,j}^\top \theta)$ and
$U_{i,\ell} = \sum_{j=1}^m w_{i,j} x_{i,j,\ell}$ for $\ell \in T$, so that
\begin{equation*}
    A_i^T = \frac{1}{c(\sigma)}
    \big\langle \big(\mathcal{A}(Z) - \theta\big)_T, \, U_{i,T} \big\rangle ,
    \qquad
    \big| A_i^T \big| \le \frac{\big\| (\mathcal{A}(Z) - \theta)_T \big\|_1}{c(\sigma)}
    \max_{\ell \in T} \big| U_{i,\ell} \big|
    \le \frac{2k}{c(\sigma)} \max_{\ell \in T} \big| U_{i,\ell} \big| ,
\end{equation*}
where the last step uses $|T| \le k$ and $\|\mathcal{A}(Z)\|_\infty, \|\theta\|_\infty \le 1$.
For the exponential family, $\mathbb{E}[e^{\lambda w / c(\sigma)} \mid x]
\le e^{c_2 \lambda^2 / (2 c(\sigma))}$, so conditionally on $x$ the variable $w$ has mean zero
and is sub-Gaussian with variance proxy $c(\sigma) c_2$. By Assumption~\ref{asm:cov} and
$\Sigma_{\ell\ell} \le 1$ we have $\|x_\ell\|_{\psi_2} \le \kappa_x$, so each product
$w_{i,j} x_{i,j,\ell}$ has mean zero and sub-exponential norm at most
$K = \mathcal{O}(\kappa_x \sqrt{c(\sigma) c_2})$. Bernstein's inequality for the sum of the $m$
independent products, followed by a union bound over $\ell \in T$, gives
\begin{equation*}
    \mathbb{P}\Big[ \max_{\ell \in T} |U_{i,\ell}| > u \Big]
    \le 2 k \exp\Big( - c \min\Big\{ \frac{u^2}{m K^2}, \, \frac{u}{K} \Big\} \Big) .
\end{equation*}
Set $a = 2 k K / c(\sigma)$ and $t_0 = a (\sqrt{m L} + L)/\sqrt{c}$, so that
$\mathbb{P}[|A_i^T| > t_0] \le 2k e^{-L} = \delta$ and, integrating the same bound,
\begin{equation*}
    \int_{t_0}^{\infty} \mathbb{P}\big[|A_i^T| > t\big] \, dt
    \le \mathcal{O}\big( a \sqrt{m/L} + a \big) \delta \le \mathcal{O}\big( a \sqrt{m L} \big) \delta .
\end{equation*}
Since $L \le m$, we have $\sqrt{mL} + L \le 2 \sqrt{mL}$, so
$2 \delta t_0 + \int_{t_0}^\infty \mathbb{P}[|A_i^T| > t] dt
\le \mathcal{O}( a \delta \sqrt{mL} )$. Substituting $a = 2kK/c(\sigma)$ and
$K = \mathcal{O}(\kappa_x \sqrt{c(\sigma) c_2})$ bounds this by
$\mathcal{O}\big( \delta k \kappa_x \sqrt{c_2/c(\sigma)} \sqrt{mL} \big)$. Combining with
\eqref{eq:app_sparse_soundness} in Lemma~\ref{lem:dp_score_upper_bound} and summing over the
$n$ users gives the claim.
\end{proof}

The last lemma bounds the same sum from below, on average over a prior supported on sparse
vectors. The prior is the one of \citet{cai2023score}: let
$\widetilde{\theta}_1, \dots, \widetilde{\theta}_d$ be i.i.d. from the $N(0, \gamma^2)$
distribution truncated to $[-1, 1]$ with
\begin{equation}
\label{eq:app_prior_scale}
    \gamma^2 = \frac{1}{4 \log(d/4k)} \asymp \frac{1}{\log(d/k)} ,
\end{equation}
let $S$ be the set of indices of the $k$ largest $|\widetilde{\theta}_j|$, and set
$\theta_j = \widetilde{\theta}_j \mathbb{I}[j \in S]$. We write $\pi$ for the law of $\theta$,
and $p_{S_l}$ for its density conditioned on $S = S_l$, where $S_1, \dots, S_{\binom{d}{k}}$
enumerate the subsets of $[d]$ of size $k$.

\begin{lemma}
\label{lem:sparse_lower_bound}
Let $k \le \sqrt{d}$ and $k \ge C_0 \log^2(d/k)$ for a large enough constant $C_0$. Let
$\mathcal{A}$ take values in $\Theta$ and satisfy
$\mathbb{E}_{Z \mid \theta} \|\mathcal{A}(Z) - \theta\|_2^2 \le R$ for every
$\theta \in \Theta$, where
\begin{equation}
\label{eq:app_risk_threshold}
    R \le \frac{c_1 \sqrt{k}}{\log(d/k)}
\end{equation}
for a small enough constant $c_1$. Then
\begin{equation*}
    \sum_{i=1}^n \mathbb{E}_{\pi} \mathbb{E}_{Z \mid \theta} \big[ A_i^T \big]
    \ge c \, k \log(d/k)
\end{equation*}
for an absolute constant $c > 0$.
\end{lemma}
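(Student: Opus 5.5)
The plan follows the lower-bound argument of \citet{cai2023score}, with user-level scores in place of sample-level ones. The starting point is the completeness part of Proposition~\ref{prop:user_level_score_attack_properties}, in its sparse form \eqref{eq:sparse_score_attack}. Write $g_j(\theta) = \mathbb{E}_{Z\mid\theta}[\mathcal{A}(Z)_j]$. Conditioning on $S = S_l$ gives
\begin{equation*}
\sum_{i=1}^n \mathbb{E}_\pi \mathbb{E}_{Z\mid\theta}[A_i^T] = \sum_{l} \mathbb{P}(S=S_l)\, \mathbb{E}_{\theta\sim p_{S_l}} \Big[\sum_{j\in S_l} \partial_j g_j(\theta)\Big].
\end{equation*}
We then integrate by parts in each coordinate $j\in S_l$ against the conditional density $p_{S_l}$, using Stein's identity for the truncated Gaussian part. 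This turns the display into
\begin{equation*}
\mathbb{E}\Big[\sum_{j\in S}\big(-g_j(\theta)\,\partial_j \log p_{S}(\theta)\big)\Big] + \text{boundary terms}.
\end{equation*}
On the region where $\theta_j$ is selected, the log-density derivative is $-\theta_j/\gamma^2$. The boundary terms come from the selection constraint, where $|\theta_j|$ equals the $(k+1)$-st largest $|\widetilde\theta|$, and from the truncation at $\pm1$.

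Next, split $g_j = \theta_j + (g_j-\theta_j)$. The main term is $\gamma^{-2}\mathbb{E}\sum_{j\in S}\theta_j^2$. By order statistics of $d$ Gaussians with $k\le\sqrt d$, the $k$ largest values satisfy $\theta_j^2 \gtrsim \gamma^2\log(d/k)$. So $\mathbb{E}\sum_{j\in S}\theta_j^2 \asymp k\gamma^2\log(d/k)$, and after dividing by $\gamma^2$ the main term is $\asymp k\log(d/k)$. This matches the claim once constants are tracked with \eqref{eq:app_prior_scale}. I would double-check this normalization, since the target $k\log(d/k)$ must come out of $\gamma^{-2}\cdot k\gamma^2\log(d/k)$ up to constants.

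The cross term is bounded by Cauchy--Schwarz:
\begin{equation*}
\gamma^{-2}\,\mathbb{E}\sum_j |\theta_j|\,|g_j-\theta_j| \le \gamma^{-2}\sqrt{\mathbb{E}\|\theta\|_2^2}\,\sqrt{R}.
\end{equation*}
Here $\mathbb{E}\|\theta\|_2^2\lesssim k\gamma^2\log(d/k)\lesssim k$, so the right-hand side is $\lesssim \log(d/k)\sqrt{kR}$. Under \eqref{eq:app_risk_threshold} this is at most $\sqrt{c_1}\, k^{3/4}\log^{1/2}(d/k)$, which is a small fraction of $k\log(d/k)$. Pinning down exactly which power the threshold must give is part of the routine verification.

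The main obstacle is the set of boundary terms from the data-dependent support. Moving the order-statistic threshold changes which coordinates are selected, so integration by parts within a fixed $S_l$ produces surface terms. These are weighted by $g_j$, which lies in $[-1,1]$ because $\mathcal{A}$ takes values in $\Theta$. I would follow \citet{cai2023score}. One option is to rewrite the prior as conditionally product on the region $\{|\widetilde\theta_j| > t\}$ given the threshold $t$, so that the surface terms pair up and cancel between neighbouring coordinates. The other is to bound them by the density of the gap between the $k$-th and $(k+1)$-st order statistics, which is of order $k/\sqrt{\log(d/k)}$ relative to the main term. The assumption $k\ge C_0\log^2(d/k)$ is what makes these terms, together with the truncation terms (of size $\propto e^{-1/(2\gamma^2)}\le (k/d)^2$), negligible compared with $k\log(d/k)$. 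If the cancellation route fails, I would fall back on the crude bound $|g_j|\le1$ times the boundary mass, and absorb it using $k\ge C_0\log^2(d/k)$.
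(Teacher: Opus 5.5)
Your main term and cross term are fine, but the step you flag as the main obstacle is where the argument fails. None of your three remedies can work, because the selection surface terms are of the same order as the main term and, for any accurate estimator, of the opposite sign. The quickest check is $g(\theta)=\theta$. Then $\sum_{j\in T}\partial_j g_j=k$ identically. Your main term is $\gamma^{-2}\mathbb{E}\sum_{j\in S}\theta_j^2\asymp k\log(d/k)$, the cross term vanishes and the truncation terms are negligible. So the selection terms must equal $k-\gamma^{-2}\mathbb{E}\sum_{j\in S}\theta_j^2$, which is of order $-k\log(d/k)$.

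You can also see this directly. Integrate $\partial_j g_j$ against the $N(0,\gamma^2)$ density $\phi_\gamma$ over $\{|\widetilde\theta_j|>M\}$, where $M$ is the largest unselected $|\widetilde\theta_i|$. This leaves the surface term $-\phi_\gamma(M)\big(g_j|_{\theta_j=M}-g_j|_{\theta_j=-M}\big)$. Summed over configurations, it is weighted by the density at $0$ of the gap between the $k$-th and $(k+1)$-st largest $|\widetilde\theta_i|$. That density is about $k t_k/\gamma^2\asymp k\log(d/k)$, where $t_k\approx\gamma\sqrt{2\log(d/k)}$ is the selection level; it is not of order $k/\sqrt{\log(d/k)}$. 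When $g\approx\theta$ the bracket is about $2t_k>0$. Hence the two sides of a swap $S\leftrightarrow S\setminus\{j\}\cup\{i\}$ contribute with the same sign and do not cancel. The crude bound $|g_j|\le 1$ leaves an error of the same order as the main term. The condition $k\ge C_0\log^2(d/k)$ does not help, since both quantities are linear in $k$.

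No sharper estimate can fix this, because the inequality fails as stated. Take the Gaussian linear model with $x\sim N(0,I_d)$, and let $\mathcal{A}$ be the projection onto $\Theta$ (top-$k$ hard thresholding and clipping) of least squares on all $nm$ samples. As $nm\to\infty$ its worst-case risk tends to $0$, so the hypothesis on $R$ holds. On the bulk of $\pi$ the nonzero $|\theta_j|$ sit near $t_k$, away from $0$ and $1$. There the projection keeps exactly $T$ without clipping with probability tending to one, so $\partial_j g_j\to 1$. Thus $\sum_i\mathbb{E}_\pi\mathbb{E}_{Z\mid\theta}[A_i^T]\to k$, which is below $c\,k\log(d/k)$ once $\log(d/k)>1/c$.

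The paper's proof does not get around this. It conditions on $S=S_l$, applies Stein's lemma with $p'_{S_l,j}/p_{S_l,j}$, puts only the truncation boundary at $\pm1$ into $B$, and cites \citet{cai2023score} for a first term of at least $c'k\log(d/k)$. If $p_{S_l,j}$ is a genuine marginal or conditional density of $\theta_j$ on $[-1,1]$, integration by parts gives $\mathbb{E}[-\theta_j p'_{S_l,j}(\theta_j)/p_{S_l,j}(\theta_j)]=1-p_{S_l,j}(1)-p_{S_l,j}(-1)\le 1$, so that term is at most $k$. If it is instead the Gaussian factor, the selection terms you found are missing from $B$. With this prior, no score-attack argument can certify more than order $k$ for all low-risk estimators. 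That would only give a privacy term $k/(n\sqrt{m}\,\epsilon)$; the $\log(d/k)$ factor needs a different argument.
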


\begin{proof}[Proof of Lemma~\ref{lem:sparse_lower_bound}]
Let $g(\theta) = \mathbb{E}_{Z \mid \theta}[\mathcal{A}(Z)]$. By the completeness part of
\eqref{eq:sparse_score_attack}, for every $\theta$ with $\mathrm{supp}(\theta) = T$,
$\sum_{i=1}^n \mathbb{E}_{Z\mid\theta}[A_i^T]
= \sum_{j \in T} \partial g_j(\theta) / \partial \theta_j$. The prior $\pi$ is not absolutely
continuous on $\mathbb{R}^d$, so we condition on the support:
$\pi = \binom{d}{k}^{-1} \sum_{l} p_{S_l}$, and
\begin{equation*}
    \sum_{i=1}^n \mathbb{E}_{\pi} \mathbb{E}_{Z \mid \theta} \big[ A_i^T \big]
    = \frac{1}{\binom{d}{k}} \sum_{l} \mathbb{E}_{p_{S_l}}
    \sum_{j \in S_l} \frac{\partial g_j(\theta)}{\partial \theta_j} .
\end{equation*}
Within a fixed $S_l$ the law of $\theta$ has a density, so Stein's lemma applies coordinatewise
and gives
\begin{equation*}
    \mathbb{E}_{p_{S_l}} \sum_{j \in S_l} \frac{\partial g_j(\theta)}{\partial \theta_j}
    \ge \mathbb{E}_{p_{S_l}} \Big[ \sum_{j \in S_l}
    - \theta_j \frac{p'_{S_l, j}(\theta_j)}{p_{S_l, j}(\theta_j)} \Big]
    - \mathbb{E}_{p_{S_l}} \Big[ \sum_{j \in S_l} \big| g_j(\theta) - \theta_j \big|
    \Big| \frac{p'_{S_l, j}(\theta_j)}{p_{S_l, j}(\theta_j)} \Big| \Big]
    + B ,
\end{equation*}
where $B$ collects the boundary terms at $\pm 1$. Each $\theta_j$ is an order statistic of $d$
truncated normals, so its density vanishes at $-1$, and at $+1$ it is bounded by
$d \phi(1) = \mathcal{O}(k^2 \sqrt{\log(d/k)}/d)$, which is
$\mathcal{O}(\sqrt{\log(d/k)})$ since $k \le \sqrt{d}$. As $\mathcal{A}$ takes values in
$\Theta$, $|g_j(\theta) - \theta_j| \le 2$, so
$|B| = \mathcal{O}(k \sqrt{\log(d/k)}) \ll k \log(d/k)$.

The two remaining terms depend on the prior only. By Lemma~C.1 of \citet{cai2023score}, the
first term is at least $c' k \log(d/k)$ for an absolute constant $c'$, using
\eqref{eq:app_prior_scale} and $k \le \sqrt{d}$. For the second term, the Cauchy--Schwarz
inequality and the bound on the score of the order statistics in the proof of Lemma~C.2 of
\citet{cai2023score} give
\begin{equation*}
    \mathbb{E}_{p_{S_l}} \Big[ \sum_{j \in S_l} \big| g_j(\theta) - \theta_j \big|
    \Big| \frac{p'_{S_l, j}(\theta_j)}{p_{S_l, j}(\theta_j)} \Big| \Big]
    \le \sqrt{R} \, \sqrt{ \mathbb{E}_{p_{S_l}} \sum_{j \in S_l}
    \Big( \frac{p'_{S_l, j}(\theta_j)}{p_{S_l, j}(\theta_j)} \Big)^2 }
    \le \sqrt{R} \; \mathcal{O}\big( k^{3/4} \log^{3/2}(d/k) \big) ,
\end{equation*}
where we used $\mathbb{E}_{Z\mid\theta}\|\mathcal{A}(Z) - \theta\|_2^2 \le R$ and
$\sum_{j \le k} \sqrt{j} \log^3(d/k) = \mathcal{O}(k^{3/2} \log^3(d/k))$. By
\eqref{eq:app_risk_threshold} this is $\mathcal{O}(\sqrt{c_1}\, k \log(d/k))$, which is at most
$c' k \log(d/k)/4$ for $c_1$ small enough. The same bounds hold for every $S_l$, so averaging
over $l$ gives the claim with $c = c'/2$.
\end{proof}

\begin{proof}[Proof of Theorem~\ref{thm:user_level_sparse_lower_bound}]
It is enough to bound
$\inf_{\mathcal{A}} \sup_{\theta \in \Theta} \mathbb{E}\|\mathcal{A}(Z) - \theta\|_2^2$ from
below, since $\Theta \subseteq \Theta_k$. We may also assume that $\mathcal{A}$ takes values in
$\Theta$: replacing $\mathcal{A}(Z)$ by a closest point of $\Theta$ is a post-processing, hence
preserves privacy, and at most doubles the error at every $\theta \in \Theta$. Let
$R = \sup_{\theta \in \Theta} \mathbb{E}\|\mathcal{A}(Z) - \theta\|_2^2$ and let
$L = \log(2k/\delta)$.

We first check that the conditions of the theorem control the tail term of
Lemma~\ref{lem:sparse_upper_bound}. The map $x \mapsto x \sqrt{\log(2k/x)}$ is increasing on
$(0, 2k/e)$, so the condition
$\delta = o\big( \log(d/k) / (n \sqrt{m \log(nm)}) \big)$ together with $k \le nm$ gives
\begin{equation}
\label{eq:app_delta_condition}
    n \delta \sqrt{m L} \le c_0' \log(d/k) \sqrt{\frac{c(\sigma)}{c_2}} \frac{1}{\kappa_x}
\end{equation}
for a constant $c_0'$ as small as we wish, and $m \ge \log(nm)$ gives $L \le m$ once
$\delta \ge 2k/(nm)^{C}$; for smaller $\delta$ the left-hand side of
\eqref{eq:app_delta_condition} only decreases, and $L \le m$ can be assumed by replacing
$\delta$ by the larger value. Hence the tail term of Lemma~\ref{lem:sparse_upper_bound} is at
most $\frac{c}{2} k \log(d/k)$ with $c$ the constant of Lemma~\ref{lem:sparse_lower_bound}.

Suppose first that $R > c_1 \sqrt{k}/\log(d/k)$. The condition
$n \sqrt{m}\, \epsilon \ge \Omega\big( k^{3/4} \log^{3/2}(d/k) \big)$ gives
\begin{equation*}
    \frac{k^2 \log^2(d/k)}{n^2 m \epsilon^2}
    \le \mathcal{O}\Big( \frac{\sqrt{k}}{\log(d/k)} \Big) ,
\end{equation*}
so $R$ is already of the order claimed by the theorem and there is nothing to prove. We may
therefore assume $R \le c_1 \sqrt{k}/\log(d/k)$, so that
Lemma~\ref{lem:sparse_lower_bound} applies. Combining it with
Lemma~\ref{lem:sparse_upper_bound}, averaged over $\pi$ and using
$\mathbb{E}_\pi \sqrt{\mathbb{E}_{Z\mid\theta}\|\mathcal{A}(Z) - \theta\|_2^2} \le \sqrt{R}$,
\begin{equation*}
    c \, k \log(d/k)
    \le 2 n \epsilon \sqrt{R} \sqrt{\frac{m C c_2}{c(\sigma)}}
    + \frac{c}{2} k \log(d/k) ,
\end{equation*}
and therefore
\begin{equation*}
    R \ge \Omega\Big( \frac{c(\sigma)}{C c_2} \cdot
    \frac{k^2 \log^2(d/k)}{n^2 m \epsilon^2} \Big) .
\end{equation*}

The privacy constraint is not the only one: any estimator, private or not, is bounded below by
the non-private minimax risk over $\Theta$, which is
$\Omega\big( c(\sigma) k \log(d/k) / (nm) \big)$ for the $N = nm$ samples; for the linear model
$\psi(u) = u^2/2$ this is the rate of \citet{raskutti2011minimax}, whose design conditions are
supplied by Assumption~\ref{asm:re}. Adding the two bounds and absorbing $C$, $c_2$ and
$c(\sigma)$ into the constants,
\begin{equation*}
    \inf_{\mathcal{A} \in \mathcal{M}_{\epsilon, \delta}} \sup_{\theta^* \in \Theta_k}
    \mathbb{E}\big\| \mathcal{A}(Z) - \theta^* \big\|_2^2
    \ge \Omega\Big( \frac{k \log(d/k)}{nm} + \frac{k^2 \log^2(d/k)}{n^2 m \epsilon^2} \Big) ,
\end{equation*}
and taking square roots gives the statement of
Theorem~\ref{thm:user_level_sparse_lower_bound}.
\end{proof}

\section{Additional Results}
\label{app:additional}

\subsection{Non-interactive User-level Shuffled DP}
\label{app:discussion_user_level_sdp}

Theorem~\ref{thm:interactive_user_level_sdp} removes the dimension from the rate by using two
rounds. We show here that the second ingredient, interaction, is also needed: under a
non-interactive protocol, user-level data alone leaves a polynomial dependence on $d$ in the
privacy term, whatever the local sample size $m$ is.

\begin{theorem}
\label{thm:user_level_sdp_sparse_lower_bound}
Let $\mathcal{M}^{\mathrm{SDP}}_{\epsilon, \delta}$ be the family of all non-interactive
protocols that satisfy robust user-level $(\epsilon, \delta)$-shuffled DP with
$0 < \epsilon \le \mathcal{O}(1)$ and $\delta = o(1/n)$. Let $Z = \{U_i\}_{i=1}^n$ be a dataset
in which each user holds $m$ i.i.d. samples from the sparse linear model of
Assumption~\ref{asm:model} with parameter $\theta^*$. If
\begin{equation}
\label{eq:app_f1_condition}
    n^{3/2} \epsilon^2 \ge \Omega( d k ) ,
\end{equation}
then
\begin{equation*}
    \inf_{\mathcal{A} \in \mathcal{M}^{\mathrm{SDP}}_{\epsilon, \delta}}
    \sup_{\theta^* \in \Theta_k}
    \mathbb{E}\big[ \|\mathcal{A}(Z) - \theta^*\|_2 \big]
    \ge \Omega\left( \sqrt{\frac{k \log(d/k)}{n m}}
    + \frac{\sqrt{dk}}{n \sqrt{m}\, \epsilon} \right) .
\end{equation*}
\end{theorem}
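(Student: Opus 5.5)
The lower bound is the sum of two terms, and I would prove them separately and take the maximum. The statistical term $\sqrt{k\log(d/k)/(nm)}$ needs no privacy. Any estimator from the $N = nm$ samples obeys the non-private minimax rate over $\Theta_k$ of \citet{raskutti2011minimax}. Alternatively, it is the first term of Theorem~\ref{thm:user_level_sparse_lower_bound}, which holds for every estimator, so in particular for every non-interactive protocol.

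For the privacy term $\sqrt{dk}/(n\sqrt{m}\epsilon)$, I would rerun the proof of Theorem~\ref{thm:shuffled_dp_lower} with records equal to whole local datasets $U_i$, i.e.\ on the universe $\mathcal{X}^m$.
\begin{enumerate}
\item \textbf{Reduction to pan-privacy.} Use the hard instance of Definition~\ref{def:hard_distribution}. A user record is $U \sim Q_\theta^{\otimes m}$. The robust user-level protocol reduces, via \citet[Theorem~2.6]{cheu2021limits}, to an $(\epsilon,\delta)$-pan-private algorithm on $n' = n/3$ user records. The uniform mixing $P_{(2/9)}$ must now be applied to the whole user distribution, so I would instead use the mixture $\frac{2}{9}Q_\theta^{\otimes m} + \frac{7}{9}U^{\otimes m}$. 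This is not itself a product $Q_{\theta'}^{\otimes m}$. I would handle it by running the analysis directly with the per-user distributions $P_z^{(m)} = Q_{\gamma z}^{\otimes m}$ and the mixtures, noting that every step below only uses differences of user-level distributions.
\item \textbf{Hybrid argument.} The per-step quantity becomes
\[
d_{TV}\big(q'_j \circ P_{z^{i,b}}^{\otimes m},\, q'_j \circ P_z^{\otimes m}\big).
\]
Expand $Q_{\gamma z^{i,b}}^{\otimes m} - Q_{\gamma z}^{\otimes m}$ in the orthonormal system generated by the characters $\chi_i$ on each of the $m$ coordinates. The first-order term is $b\gamma \sum_{t=1}^m \chi_i(z_t)\,U^{\otimes m}$. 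The normalized function $\Phi_i = m^{-1/2}\sum_t \chi_i(z_t)$ is again orthonormal over $i \in [d]$ in $L_2(U^{\otimes m})$, with $d$ distinct functions. Bessel's inequality then gives
\[
\sum_i |\hat h_{j,s}(i)| \le \sqrt{d}\,(e^{2\epsilon}-1),
\]
so the per-coordinate TV sum picks up a factor $\gamma\sqrt{m}$ instead of $\gamma$.
\item \textbf{Assouad step.} The step and the choice of $\gamma$ are unchanged, except that I would now take $\gamma \asymp \sqrt{d}/(n'\sqrt{m}\,\epsilon)$, which yields the $\sqrt{dk}/(n\sqrt{m}\epsilon)$ term.
\end{enumerate}

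The main obstacle is the higher-order terms. The data distribution is not linear in $\theta$ at user level, so the difference of product measures contains terms of order $\gamma^2$ and higher, built from products of $\chi_i$ on several samples. I would bound their total $L_1(U^{\otimes m})$ mass by roughly $(1+\gamma^2)^{m/2}$-type expansions, namely
\[
\| Q^{\otimes m}_{\gamma z^{i,b}} - Q^{\otimes m}_{\gamma z} - \text{linear term}\|_{L_2(U^{\otimes m})} \lesssim m\gamma^2 .
\]
These contribute at most $n' d\, m\gamma^2(e^{2\epsilon}-1)$ to $T(z)$. That is negligible relative to $k$ exactly when $n' m\gamma^2 d\,\epsilon \ll k$. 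Substituting $\gamma$, this reads $d^2/(n\epsilon) \ll k$, which I expect, after tracking constants, to be what condition \eqref{eq:app_f1_condition} $n^{3/2}\epsilon^2 \gtrsim dk$ is designed to guarantee; if the algebra does not match, I would refine the remainder bound using the $\sqrt{n}$ slack. One also needs $\gamma\sqrt{k} \le 1$ so that $\|\theta_Z\|_1 \le 1$ and the instance stays in the model class; this follows from the same condition.

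Finally, combining the pan-private lower bound with the reduction's TV error $e^{-c_1 n}$, exactly as in the proof of Theorem~\ref{thm:shuffled_dp_lower}, and adding the statistical term, gives the claim.
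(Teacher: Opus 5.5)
\textbf{There is a genuine gap, and it is in your step 1 and your final transfer, not where you place the main obstacle.}

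In the sample-level proof, the transfer \eqref{eq:app_reduction} works only because $\frac{2}{9}Q_\theta+\frac{7}{9}U=Q_{2\theta/9}$ is again a model distribution. At user level, the reduction of \citet{cheu2021limits} relates the pan-private $M$ on $(Q_{\gamma z}^{\otimes m})^{\otimes n'}$ to the protocol on $Q_z^{\otimes n}$, where $Q_z=\frac{2}{9}Q_{\gamma z}^{\otimes m}+\frac{7}{9}U^{\otimes m}$. This $Q_z$ is not the law of $m$ i.i.d.\ model samples at any parameter. Running the hybrid and Assouad ``with the mixtures'', and then ``combining with the reduction exactly as before'', therefore only controls $\mathcal{M}_{\mathcal{P}}$ on mixture data. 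It gives no lower bound on $\sup_{\theta^*\in\Theta_k}\mathbb{E}\|\mathcal{A}(Z)-\theta^*\|_2$, which is an expectation under model data.

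The missing idea is a linearization bridge. Take $\gamma'=2\gamma/9$. Then $Q_z$ and $Q_{\gamma' z}^{\otimes m}$ have the same first-order term $\frac{2}{9}\gamma\sum_t\chi_z(w_t)$. When $mk\gamma^2\le 1$, their difference has $L_2(U^{\otimes m})$ norm $O(mk\gamma^2)$. Since $Q_z\ge 7/9$ pointwise, this bounds $\chi^2$, and hence the squared Hellinger distance per user, by $O((mk\gamma^2)^2)$. Tensorizing Hellinger over the $n$ users gives $d_{TV}\big(Q_z^{\otimes n},(Q_{\gamma' z}^{\otimes m})^{\otimes n}\big)=O(\sqrt{n}\,mk\gamma^2)$. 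With $\gamma\asymp\sqrt{d}/(n\sqrt{m}\,\epsilon)$ this is $O\big(dk/(n^{3/2}\epsilon^2)\big)$. That is exactly where \eqref{eq:app_f1_condition}, including its $n^{3/2}$, comes from. Assouad is then run on $\mathcal{M}_{\mathcal{P}}$ at the parameters $\gamma' z$, paying this distance in every Le Cam comparison.

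\textbf{Your handling of the higher-order terms in the hybrid is too lossy to reach the stated condition.} You split $D_i^b=Q_{\gamma z^{i,b}}^{\otimes m}-Q_{\gamma z}^{\otimes m}$ into a linear part plus a remainder and bound the remainder coordinate by coordinate. That costs a factor $d$ rather than $\sqrt{d}$. Moreover, the remainder has norm of order $m\gamma^2\sqrt{k}$, not $m\gamma^2$, because of the cross terms $\chi_i(w_t)\chi_l(w_{t'})$ with $l\in\mathrm{supp}(z)$. Since $T(z)$ must be small compared with $d$, your route forces roughly $n\epsilon\gtrsim d\sqrt{k}$. The hypothesis \eqref{eq:app_f1_condition} does not imply this: take $k=1$, $\epsilon\asymp 1$, $n\asymp d^{2/3}$.

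No splitting is needed. Every character in the expansion of $D_i^b$ carries a factor $\chi_i$ on at least one sample, and $i\notin\mathrm{supp}(z)$. So the full differences $\{D_i^b\}_{i\in Z_0(z)}$ are mutually orthogonal in $L_2(U^{\otimes m})$. Their norms satisfy $\|D_i^b\|^2=(1+\gamma^2(k'+1))^m-(1+\gamma^2k')^m\le e^2m\gamma^2$, where $k'=\|z\|_0$. Cauchy--Schwarz and Bessel then give $\sum_{i\in Z_0(z)}|\langle h,D_i^b\rangle|\le e\sqrt{dm}\,\gamma\,(e^{2\epsilon}-1)$, covering all orders at once. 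The hybrid step then needs only $mk\gamma^2\le 1$, i.e.\ $n\epsilon\gtrsim\sqrt{dk}$, which \eqref{eq:app_f1_condition} implies.

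Your statistical term and your choice of $\gamma$ are fine.
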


Comparing with Theorem~\ref{thm:interactive_user_level_sdp}, the privacy term of the
non-interactive protocol carries $\sqrt{d}$, while the interactive one carries $\sqrt{k}$ in
its place. The first term is the non-private minimax rate for $N = nm$ samples, which is
$\Omega(\sqrt{k \log(d/k)/(nm)})$ \citep{raskutti2011minimax}; we only prove the privacy term
below.

\begin{proof}[Proof of Theorem~\ref{thm:user_level_sdp_sparse_lower_bound}]
We use the notations of Appendix~\ref{app:non_lower}: $\mathcal{X} = \{\pm 1\}^{d+1}$, $U$ is
uniform on $\mathcal{X}$, $\chi_i(x, y) = y x_i$, and
$Q_\theta = U (1 + y \langle x, \theta \rangle)$. For $z \in \{-1, 0, 1\}^d$ and a scale
$\beta > 0$ we write
\begin{equation*}
    D_z^{(\beta)} = Q_{\beta z}^{\otimes m} ,
\end{equation*}
the law of the $m$ samples of one user at parameter $\beta z$, and $Z_0(z)$, $z^{i,b}$ are as in
Appendix~\ref{app:non_lower}. All densities below are taken with respect to $U^{\otimes m}$, and
$\langle \cdot, \cdot \rangle$ and $\|\cdot\|$ are the inner product and the norm of
$L_2(U^{\otimes m})$. Fix a scale $\gamma$ with
\begin{equation}
\label{eq:app_f1_small_signal}
    m k \gamma^2 \le 1 ,
\end{equation}
and let $\gamma' = 2\gamma/9$. Let $\mathcal{P}$ be a protocol in
$\mathcal{M}^{\mathrm{SDP}}_{\epsilon, \delta}$, which we take to be
$(\epsilon, \delta, 1/3)$-robust, and let $n' = n/3$.

By \citet[Theorem~2.6]{cheu2021limits} applied with the record universe $\mathcal{X}^m$ and the
fixed distribution $U^{\otimes m} = D_z^{(0)}$, there is an $(\epsilon, \delta)$-pan-private
algorithm $M$ that processes $n'$ records and satisfies, for every $z$,
\begin{equation}
\label{eq:app_f1_reduction}
    d_{TV}\Big( M\big( (D_z^{(\gamma)})^{\otimes n'} \big), \,
    \mathcal{M}_{\mathcal{P}}\big( Q_z^{\otimes n} \big) \Big) \le e^{-c_1 n} ,
    \qquad
    Q_z = \frac{2}{9} D_z^{(\gamma)} + \frac{7}{9} U^{\otimes m} .
\end{equation}
The mixture $Q_z$ is not a member of the model, and this is the one point where the user-level
regime differs from Appendix~\ref{app:non_lower}: the law of a record is
$Q_{\beta z}^{\otimes m}$, which is not linear in the parameter. We therefore compare $Q_z$ with
the member of the model that agrees with it to first order, namely $D_z^{(\gamma')}$. Writing
\begin{equation*}
    e_j = \sum_{T \subseteq [m], \, |T| = j} \; \prod_{t \in T} \chi_z(w_t) ,
    \qquad \chi_z = \sum_{l=1}^d z_l \chi_l ,
\end{equation*}
we have $D_z^{(\beta)} = \sum_{j=0}^m \beta^j e_j$ and
$Q_z = 1 + \frac{2}{9} \sum_{j \ge 1} \gamma^j e_j$, so the two first-order terms agree and
\begin{equation*}
    R_z := Q_z - D_z^{(\gamma')}
    = \sum_{j \ge 2} \Big( \frac{2}{9} - \Big(\frac{2}{9}\Big)^{j} \Big) \gamma^j e_j .
\end{equation*}
The characters appearing in $e_j$ for different $j$ are distinct and
$\|e_j\|^2 = \binom{m}{j} \|z\|_0^j \le (mk)^j / j!$, so with $u = \gamma \sqrt{mk} \le 1$ from
\eqref{eq:app_f1_small_signal},
\begin{equation*}
    \|R_z\| \le \sum_{j \ge 2} \frac{u^j}{\sqrt{j!}}
    \le u^2 \sum_{j \ge 0} \frac{1}{\sqrt{j!}} \le 3 m k \gamma^2 .
\end{equation*}
Since $Q_z \ge \frac{7}{9}$ pointwise, the squared Hellinger distance satisfies
\begin{equation*}
    d_H^2\big( D_z^{(\gamma')}, Q_z \big)
    \le \chi^2\big( D_z^{(\gamma')} \,\big\|\, Q_z \big)
    = \int \frac{R_z^2}{Q_z} \, dU^{\otimes m}
    \le \frac{9}{7} \|R_z\|^2 \le 12 \big( m k \gamma^2 \big)^2 ,
\end{equation*}
and by the subadditivity of $d_H^2$ over product measures and
$d_{TV} \le \sqrt{2} \, d_H$,
\begin{equation}
\label{eq:app_f1_linearization}
    d_{TV}\Big( Q_z^{\otimes n}, \, \big( D_z^{(\gamma')} \big)^{\otimes n} \Big)
    \le 6 \sqrt{n} \, m k \gamma^2 .
\end{equation}
Combining \eqref{eq:app_f1_reduction} and \eqref{eq:app_f1_linearization}, and writing
$\sigma_n = e^{-c_1 n} + 6 \sqrt{n}\, m k \gamma^2$, we get for every $z$
\begin{equation}
\label{eq:app_f1_transfer}
    d_{TV}\Big( M\big( (D_z^{(\gamma)})^{\otimes n'} \big), \,
    \mathcal{M}_{\mathcal{P}}\big( (D_z^{(\gamma')})^{\otimes n} \big) \Big) \le \sigma_n .
\end{equation}
So it is enough to bound the total variation distances between the outputs of the pan-private
algorithm $M$, and to pay $\sigma_n$ per comparison.

We bound these distances as in Appendix~\ref{app:non_lower}. Fix $z$ with
$\|z\|_0 \le k - 1$ and let $i \in Z_0(z)$, $b \in \{\pm 1\}$. For $j \in [n']$, the state $S_j$
of $M$ is, for every fixed prefix, an $(\epsilon, \delta)$-DP function of the $j$-th record, so
by \citet[Lemma~3.7]{cheu2021limits} there is a $(2\epsilon, 0)$-DP kernel $q'_j$ within total
variation distance $\delta$ of it. Running the hybrid in which the first $j$ records are drawn
from $D_z^{(\gamma)}$ and the rest from $D_{z^{i,b}}^{(\gamma)}$, the output is a
post-processing of $(S_j, X_{>j})$, the prefix has the same law in the two hybrids, and the
suffix is independent of the $j$-th record, so
\begin{equation}
\label{eq:app_f1_step}
    d_{TV}\big( H_{j-1}, H_j \big)
    \le \frac{1}{2} \, \mathbb{E}_{X_{<j}} \mathbb{E}_{s \sim \bar{q}'_j}
    \big| \big\langle h_{j,s}, \, D_i^{b} \big\rangle \big| + 2 \delta ,
    \qquad
    D_i^{b} = D_{z^{i,b}}^{(\gamma)} - D_z^{(\gamma)} ,
\end{equation}
where $\bar{q}'_j(s) = \mathbb{E}_{U^{\otimes m}}[q'_j(s \mid W)]$ and
$h_{j,s} = q'_j(s \mid \cdot)/\bar{q}'_j(s) - 1$ satisfies
$\|h_{j,s}\|_\infty \le e^{2\epsilon} - 1$. Neither the law of the prefix nor the kernel
depends on $i$.

Two facts about $D_i^b$ are used. First, expanding the two products, every character appearing
in $D_i^b$ is of the form $\prod_{t \in T} \chi_i(w_t) \prod_{t \in T'} \chi_{l_t}(w_t)$ with
$T \neq \emptyset$, $T \cap T' = \emptyset$ and $l_t \in \mathrm{supp}(z)$. Since
$i \notin \mathrm{supp}(z)$, the characters of $D_i^b$ and $D_{i'}^{b}$ are distinct for
$i \neq i'$ in $Z_0(z)$. Second, writing $k' = \|z\|_0$,
\begin{equation*}
    \big\| D_i^{b} \big\|^2
    = \big( 1 + \gamma^2 (k'+1) \big)^m - \big( 1 + \gamma^2 k' \big)^m
    \le e^{\gamma^2 m (k'+1)} \, m \gamma^2 \le e^2 m \gamma^2 ,
\end{equation*}
where the last step uses \eqref{eq:app_f1_small_signal}. By the first fact and the Cauchy--Schwarz
inequality, for every $h$ with $\|h\|_\infty \le e^{2\epsilon} - 1$,
\begin{equation*}
    \sum_{i \in Z_0(z)} \big| \langle h, D_i^{b} \rangle \big|
    \le \sqrt{d} \sqrt{ \sum_{i \in Z_0(z)} \langle h, D_i^{b} \rangle^2 }
    \le \sqrt{d} \, \|h\| \max_{i} \big\| D_i^{b} \big\|
    \le e \sqrt{dm} \, \gamma \, \big( e^{2\epsilon} - 1 \big) .
\end{equation*}
Summing \eqref{eq:app_f1_step} over $j \in [n']$, over $b \in \{\pm 1\}$ and over
$i \in Z_0(z)$, and then adding $4 \sigma_n$ per coordinate from \eqref{eq:app_f1_transfer},
\begin{equation}
\label{eq:app_f1_sum}
    \sum_{i \in Z_0(z)} T_i(z)
    \le e \, \gamma \, n' \sqrt{dm} \, \big( e^{2\epsilon} - 1 \big) + 4 d n' \delta
    + 4 d \sigma_n ,
\end{equation}
where $T_i(z) = \sum_{b} d_{TV}\big( \mathcal{M}_{\mathcal{P}}((D^{(\gamma')}_{z^{i,b}})^{\otimes n}),
\mathcal{M}_{\mathcal{P}}((D^{(\gamma')}_z)^{\otimes n}) \big)$ is stated for the protocol.

The rest is Assouad's method, exactly as in Appendix~\ref{app:non_lower}. Draw $Z$ as in
Definition~\ref{def:hard_distribution}, let $\theta_Z = \gamma' Z$, draw the $n$ local datasets
from $D_Z^{(\gamma')}$ given $Z$, and let $\hat{\theta} = \mathcal{M}_{\mathcal{P}}(Z)$. Let
$G_i = \{\|Z_{-i}\|_0 \le k-1\}$, let $\psi_i = \mathrm{sign}(\hat{\theta}_i)$ and let
$N = \sum_i \mathbb{I}[Z_i \neq 0, \, G_i, \, \psi_i \neq Z_i]$. Then $N \le k$ whenever
$N \ge 1$, and $\|\hat{\theta} - \theta_Z\|_2 \ge \gamma' \sqrt{N} \ge \gamma' N / \sqrt{k}$.
Le Cam's method, $\mathbb{P}(Z_i \neq 0) = k/(2d)$, $\sum_i \mathbb{P}(G_i) \ge d/2$ by Markov's
inequality, and \eqref{eq:app_f1_sum} together give
\begin{equation*}
    \mathbb{E}[N] \ge \frac{k}{4d} \Big( \frac{d}{2}
    - 2 e \gamma n' \sqrt{dm} \big( e^{2\epsilon} - 1 \big) - 8 d n' \delta
    - 8 d \sigma_n \Big) .
\end{equation*}
We now fix $\gamma$. Take
$\gamma = \sqrt{d} \big/ \big( C_2 n' \sqrt{m} (e^{2\epsilon} - 1) \big)$ with $C_2$ a large
enough constant, so that the second term is at most $d/16$; the third term is at most $d/16$
because $\delta = o(1/n)$; and the fourth is at most $d/16$ once $\sigma_n \le 1/128$, which
holds for $n$ larger than an absolute constant together with
\begin{equation*}
    \sqrt{n} \, m k \gamma^2
    = \frac{d k}{C_2^2 \, n'^2 \, \sqrt{n} \, (e^{2\epsilon}-1)^2}
    \le \frac{1}{768} ,
\end{equation*}
which is \eqref{eq:app_f1_condition} for $\epsilon \le \mathcal{O}(1)$. For this choice of
$\gamma$, the small signal condition \eqref{eq:app_f1_small_signal} reads
$n \epsilon \ge \Omega(\sqrt{dk})$, which \eqref{eq:app_f1_condition} implies since
$n^2 \epsilon^2 \ge n^{3/2} \epsilon^2$. Hence $\mathbb{E}[N] \ge k/32$, and
\begin{equation*}
    \sup_{\theta^* \in \Theta_k} \mathbb{E}\big[ \|\hat{\theta} - \theta^*\|_2 \big]
    \ge \mathbb{E}\big[ \|\hat{\theta} - \theta_Z\|_2 \big]
    \ge \frac{\gamma' \sqrt{k}}{32}
    = \Omega\left( \frac{\sqrt{dk}}{n \sqrt{m} \, \epsilon} \right) ,
\end{equation*}
using $\gamma' = 2\gamma/9$, $n' = n/3$ and $e^{2\epsilon} - 1 \le \mathcal{O}(\epsilon)$.
\end{proof}

\subsection{User-level DP in Dense Regression}
\label{app:dense_lower}

The user-level score attack is not tied to sparsity. We record here what it gives for the dense
problem, where the parameter is only assumed to lie in the box
$\Theta_{\mathrm{box}} = \{\theta \in \mathbb{R}^d : \|\theta\|_\infty \le 1\}$. The rate is the
one of Theorem~\ref{thm:user_level_sparse_lower_bound} with $k \log(d/k)$ replaced by $d$.

\begin{theorem}
\label{thm:user_level_dense_lower_bound}
Let $Z = \{U_i\}_{i=1}^n$ be a dataset in which each user holds $m$ i.i.d. samples from the
exponential family of Theorem~\ref{thm:user_level_sparse_lower_bound}. Let
Assumption~\ref{asm:cov} hold with $\lambda_{\max}(\Sigma) \le C$, and let
$\|\psi''\|_\infty \le c_2$. Let $\mathcal{M}_{\epsilon, \delta}$ be the family of all
estimators satisfying user-level $(\epsilon, \delta)$-DP with $0 < \epsilon < 1$ and
$\delta = o\big( 1 / (n \sqrt{m \log(nm)}) \big)$. If $m \ge \log(nm)$ and
$n \sqrt{m} \, \epsilon \ge \Omega(\sqrt{d})$, then
\begin{equation*}
    \inf_{\mathcal{A} \in \mathcal{M}_{\epsilon, \delta}}
    \sup_{\theta^* \in \Theta_{\mathrm{box}}}
    \mathbb{E}\big[ \|\mathcal{A}(Z) - \theta^*\|_2 \big]
    \ge \Omega\left( \sqrt{\frac{d}{n m}} + \frac{d}{n \sqrt{m} \, \epsilon} \right) .
\end{equation*}
\end{theorem}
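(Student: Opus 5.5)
The plan is to reuse the dense form of the user-level score attack, following the proof of Theorem~\ref{thm:user_level_sparse_lower_bound} with the sparse prior replaced by a product prior on the box. Let $A_i = \langle \mathcal{A}(Z) - \theta, S_\theta(U_i)\rangle$ be the full attack of Definition~\ref{def:user_score_attack}. We may assume $\mathcal{A}$ takes values in $\Theta_{\mathrm{box}}$, since projecting onto the box is a post-processing and does not increase the error. Let $R = \sup_{\theta}\mathbb{E}\|\mathcal{A}(Z)-\theta\|_2^2$. The argument compares an upper and a lower bound on $\sum_i \mathbb{E}[A_i]$.

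\textbf{Upper bound.} Apply Lemma~\ref{lem:dp_score_upper_bound} without restricting to a support; its proof does not use sparsity. Soundness in Proposition~\ref{prop:user_level_score_attack_properties}, together with $\lambda_{\max}(\Sigma)\le C$ and $\|\psi''\|_\infty\le c_2$, gives
\[
\mathbb{E}|A_i'| \le \sqrt{R}\,\sqrt{\frac{mCc_2}{c(\sigma)}} .
\]
For the tail term, use $|A_i| \le \frac{2}{c(\sigma)}\sum_{\ell}|U_{i,\ell}| \le \frac{2d}{c(\sigma)}\max_\ell|U_{i,\ell}|$, where $\|\mathcal{A}-\theta\|_\infty\le 2$. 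Then apply Bernstein with a union bound over $d$ coordinates, exactly as in Lemma~\ref{lem:sparse_upper_bound}, with $L=\log(2d/\delta)\le m$. This gives a tail contribution of $\mathcal{O}(n\delta d\kappa_x\sqrt{c_2/c(\sigma)}\sqrt{mL})$. Under $\delta = o(1/(n\sqrt{m\log(nm)}))$ and $m\ge\log(nm)$, this is $o(d)$, so it is at most half the lower bound below. The resulting inequality is
\[
\sum_i\mathbb{E}[A_i]\le 2n\epsilon\sqrt{R}\,\sqrt{\frac{mCc_2}{c(\sigma)}}+o(d).
\]

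\textbf{Lower bound.} Take the prior of \citet{cai2023score} for the dense case: the $\theta_j$ are i.i.d. with a smooth density $\pi_0$ on $[-1,1]$ vanishing at $\pm1$, for example $\pi_0(t)=\cos^2(\pi t/2)$. This choice has finite Fisher information $J=\int(\pi_0')^2/\pi_0<\infty$ and no boundary terms. By completeness and Stein's lemma coordinatewise, with $g(\theta)=\mathbb{E}_{Z\mid\theta}\mathcal{A}(Z)$,
\[
\mathbb{E}_\pi\sum_j\partial_jg_j=\mathbb{E}_\pi\sum_j\Big(-\theta_j\frac{\pi_0'}{\pi_0}\Big)-\mathbb{E}_\pi\sum_j(g_j-\theta_j)\frac{\pi_0'}{\pi_0}.
\]
The first term equals $d$ after integration by parts. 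The second is at most $\sqrt{R}\sqrt{dJ}$ by Cauchy--Schwarz. So if $R\le c_1 d$ for a small $c_1$, the sum is at least $d/2$. If instead $R>c_1d$, the claimed rate is already exceeded: $n\sqrt{m}\epsilon\ge\Omega(\sqrt d)$ gives $d^2/(n^2m\epsilon^2)\le \mathcal{O}(d)$, and $d/(nm)\le d$.

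\textbf{Conclusion and main obstacle.} Combining the two bounds in the case $R\le c_1d$ gives $d/4\le 2n\epsilon\sqrt{R}\sqrt{mCc_2/c(\sigma)}$, that is, $R\ge\Omega(d^2/(n^2m\epsilon^2))$. Adding the non-private minimax bound $\Omega(d/(nm))$ for $nm$ samples and taking square roots, using Jensen for the $\ell_2$ versus squared error, completes the argument. One caveat: the minimax statement concerns $\mathbb{E}\|\cdot\|_2$, so to pass from the squared risk to the $\ell_2$ risk I would use the boundedness of the error on the box to convert moments. The main obstacle is the tail term. The crude $\ell_1$–$\ell_\infty$ bound introduces a factor $d$, and it must still be dominated by $d$ with only the stated condition on $\delta$. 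I would check that the condition on $\delta$, with the logarithm $\log(2d/\delta)$, suffices; if not, I would bound $|A_i|$ instead via $\|\mathcal{A}-\theta\|_2\le2\sqrt d$ and a sub-exponential bound on $\|S_\theta(U_i)\|_2$, which costs $\sqrt{d}\cdot\sqrt{dm}$ and gives the same order.
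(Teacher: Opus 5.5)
Your argument is the paper's proof. It uses the same projection onto $\Theta_{\mathrm{box}}$ and the same case split on the risk. It applies Lemma~\ref{lem:dp_score_upper_bound} with $T=[d]$ together with the $\ell_1$--$\ell_\infty$ tail bound and a union bound over $d$ coordinates, which is Lemma~\ref{lem:aggregated_score_upper_bound}. It then uses Stein's lemma under a product prior vanishing at $\pm1$, which is Lemma~\ref{lem:stein_lower_bound}.

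The choice of prior is immaterial. Your $\cos^2(\pi t/2)$ has $J=\pi^2$ and the paper's $\frac{15}{16}(1-t^2)^2$ has Fisher information $10$, and both give first term exactly $d$.

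The tail term you flagged goes through as in the paper. Replace $\delta$ by $\max\{\delta,(nm)^{-2}\}$; this keeps the estimator private and the condition on $\delta$ valid. Since $d\le\mathcal{O}(n^2m)$ follows from $n\sqrt m\,\epsilon\ge\Omega(\sqrt d)$ and $\epsilon<1$, you get $L=\mathcal{O}(\log(nm))=\mathcal{O}(m)$. The tail is then $n\delta d\sqrt{mL}=o(d)$.

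The step that fails as you wrote it is the last one.
\begin{itemize}
\item Jensen gives $\mathbb{E}\|X\|_2\le\sqrt{\mathbb{E}\|X\|_2^2}$, which is the wrong direction for a lower bound.
\item Your boundedness route, $\mathbb{E}\|X\|_2\ge\mathbb{E}\|X\|_2^2/(2\sqrt d)$, only yields $\Omega\big(d^{3/2}/(n^2m\epsilon^2)\big)$. This falls short of $d/(n\sqrt m\,\epsilon)$ by the factor $\sqrt d/(n\sqrt m\,\epsilon)$, which is small in the regime of the theorem.
\end{itemize}
The paper's ``taking square roots'' has the same soft spot.

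The fix is to run the whole comparison with $r=\sup_{\theta\in\Theta_{\mathrm{box}}}\mathbb{E}\|\mathcal{A}(Z)-\theta\|_2$ in place of $\sqrt R$.
\begin{itemize}
\item In soundness, condition on $\mathcal{A}(Z'_{(i)})$, which is independent of $U_i$, before applying Cauchy--Schwarz. This gives $\mathbb{E}|A_i'|\le\mathbb{E}\|\mathcal{A}(Z)-\theta\|_2\sqrt{mCc_2/c(\sigma)}\le r\sqrt{mCc_2/c(\sigma)}$.
\item In the Stein error term, use $\|g(\theta)-\theta\|_2\le\mathbb{E}_{Z\mid\theta}\|\mathcal{A}(Z)-\theta\|_2\le r$ pointwise in $\theta$. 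The term is then at most $r\,\mathbb{E}_\pi\|s(\theta)\|_2\le r\sqrt{dJ}$, where $s(\theta)$ has entries $\pi_0'(\theta_j)/\pi_0(\theta_j)$.
\end{itemize}
Split on $r\le\sqrt d/(2\sqrt J)$ versus its complement. The second case is covered by $n\sqrt m\,\epsilon\ge\Omega(\sqrt d)$. In the first case, $d/2\le 2n\epsilon r\sqrt{mCc_2/c(\sigma)}+d/4$, which gives $r\ge\Omega\big(d/(n\sqrt m\,\epsilon)\big)$ directly. The non-private term $\sqrt{d/(nm)}$ comes from Assouad or Fano, which already bound the first moment.
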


Throughout this subsection $A_i$ is the user-level score attack of
Definition~\ref{def:user_score_attack}, which is \eqref{eq:sparse_score_attack} with
$T = [d]$. The first lemma bounds $\sum_i \mathbb{E}[A_i]$ from below through Stein's lemma,
under a prior that is supported on $\Theta_{\mathrm{box}}$ and vanishes at its boundary.

\begin{lemma}
\label{lem:stein_lower_bound}
Let $\pi$ be the product prior on $\Theta_{\mathrm{box}}$ with marginal densities
$\pi_j(t) = \frac{15}{16}(1 - t^2)^2$ on $[-1, 1]$. Let $\mathcal{A}$ take values in
$\Theta_{\mathrm{box}}$ and satisfy
$\mathbb{E}_{Z \mid \theta}\|\mathcal{A}(Z) - \theta\|_2^2 \le R$ for every
$\theta \in \Theta_{\mathrm{box}}$, with $R \le d/40$. Then
\begin{equation*}
    \sum_{i=1}^n \mathbb{E}_{\pi} \mathbb{E}_{Z \mid \theta} \big[ A_i \big] \ge \frac{d}{2} .
\end{equation*}
\end{lemma}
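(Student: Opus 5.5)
The plan is to combine the completeness part of Proposition~\ref{prop:user_level_score_attack_properties} (with $T=[d]$) with a coordinatewise Stein identity under the product prior $\pi$. Write $g(\theta)=\mathbb{E}_{Z\mid\theta}[\mathcal{A}(Z)]$. Completeness gives, for every $\theta$ in the interior of $\Theta_{\mathrm{box}}$,
\begin{equation*}
\sum_{i=1}^n \mathbb{E}_{Z\mid\theta}[A_i]=\sum_{j=1}^d \frac{\partial g_j(\theta)}{\partial\theta_j}.
\end{equation*}
It therefore suffices to show that $\mathbb{E}_\pi \sum_j \partial_j g_j(\theta)\ge d/2$.

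For each $j$, I will condition on $\theta_{-j}$ and integrate by parts in $\theta_j$ against $\pi_j$. Since $\pi_j(\pm1)=0$, no boundary term appears, and
\begin{equation*}
\mathbb{E}_{\pi_j}[\partial_j g_j]=-\mathbb{E}_{\pi_j}\big[g_j(\theta)\,\pi_j'(\theta_j)/\pi_j(\theta_j)\big].
\end{equation*}
Next I split $g_j=\theta_j+(g_j-\theta_j)$. The first piece gives $-\mathbb{E}[\theta_j\pi_j'/\pi_j]$, and integrating by parts once more, again with vanishing boundary terms, this equals $\int\pi_j=1$. Summed over $j$, this main term is exactly $d$.

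The error term is $-\mathbb{E}_\pi\sum_j (g_j-\theta_j)\pi_j'/\pi_j$. I bound it by Cauchy--Schwarz and Jensen. Jensen gives $\sum_j (g_j-\theta_j)^2\le \mathbb{E}_{Z\mid\theta}\|\mathcal{A}(Z)-\theta\|_2^2\le R$ pointwise. Cauchy--Schwarz then bounds the error by $\sqrt{R}\,\sqrt{\sum_j J(\pi_j)}$, where $J(\pi_j)=\mathbb{E}[(\pi_j'/\pi_j)^2]$ is the Fisher information of $\pi_j$.

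The concrete computation is as follows. Here $\pi_j'/\pi_j=-4t/(1-t^2)$, so
\begin{equation*}
J(\pi_j)=\frac{15}{16}\int_{-1}^1 16t^2\,dt=10.
\end{equation*}
The error is thus at most $\sqrt{10dR}\le\sqrt{10d\cdot d/40}=d/2$. This gives a total of at least $d-d/2=d/2$.

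The main obstacle is justifying the regularity steps. The completeness identity requires differentiation under the integral, which is fine because $\mathcal{A}$ is bounded (it takes values in $\Theta_{\mathrm{box}}$) and the exponential family density is smooth in $\theta$. I will also need $g_j$ to be absolutely continuous in $\theta_j$ for the integration by parts. It is in fact smooth, for the same reason: $|g_j|\le1$ and the likelihood is analytic in $\theta$. The boundary terms vanish because $\pi_j$ and $t\pi_j(t)$ vanish at $\pm1$. Finally, the score $\pi_j'/\pi_j$ blows up near $\pm1$ but remains square integrable, which is all Cauchy--Schwarz needs.
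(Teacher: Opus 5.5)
Your proposal is correct and follows the paper's proof essentially step for step. Completeness reduces the sum to $\mathbb{E}_\pi\sum_j \partial_j g_j$, and a coordinatewise Stein identity with no boundary term (since $\pi_j(\pm1)=0$) splits this into a prior term equal to $d$ and an error term. Cauchy--Schwarz bounds the error by $\sqrt{10Rd}\le d/2$. The differences are cosmetic: you get $\mathbb{E}_{\pi_j}[-t\pi_j'/\pi_j]=1$ by a second integration by parts instead of integrating directly, and you state explicitly the Jensen step $\|g(\theta)-\theta\|_2^2\le R$, which the paper leaves implicit in its citation of Cai et al.
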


\begin{proof}[Proof of Lemma~\ref{lem:stein_lower_bound}]
Let $g(\theta) = \mathbb{E}_{Z \mid \theta}[\mathcal{A}(Z)]$. By the completeness part of
Proposition~\ref{prop:user_level_score_attack_properties},
$\sum_{i=1}^n \mathbb{E}_{Z\mid\theta}[A_i] = \sum_{j=1}^d \partial g_j(\theta)/\partial\theta_j$.
Each $\pi_j$ is continuously differentiable on $(-1, 1)$ and vanishes at $\pm 1$, so Stein's
lemma applies coordinatewise with no boundary term, and, as in Proposition~2.2 of
\citet{cai2023score},
\begin{equation*}
    \mathbb{E}_\pi \sum_{j=1}^d \frac{\partial g_j(\theta)}{\partial \theta_j}
    \ge \mathbb{E}_\pi \Big[ \sum_{j=1}^d - \theta_j \frac{\pi_j'(\theta_j)}{\pi_j(\theta_j)} \Big]
    - \sqrt{ R \; \mathbb{E}_\pi \Big[ \sum_{j=1}^d
    \Big( \frac{\pi_j'(\theta_j)}{\pi_j(\theta_j)} \Big)^2 \Big] } .
\end{equation*}
For this prior, $\pi_j'(t)/\pi_j(t) = -4t/(1 - t^2)$, so
\begin{equation*}
    \mathbb{E}_{\pi_j}\Big[ - t \frac{\pi_j'(t)}{\pi_j(t)} \Big]
    = \frac{15}{4} \int_{-1}^{1} t^2 (1 - t^2) \, dt = 1 ,
    \qquad
    \mathbb{E}_{\pi_j}\Big[ \Big( \frac{\pi_j'(t)}{\pi_j(t)} \Big)^2 \Big]
    = 15 \int_{-1}^{1} t^2 \, dt = 10 .
\end{equation*}
The first term is therefore $d$ and the second is $\sqrt{10 R d} \le d/2$ for $R \le d/40$.
\end{proof}

The second lemma bounds the same sum from above through privacy. It is the dense form of
Lemma~\ref{lem:sparse_upper_bound}, with $\|(\mathcal{A}(Z) - \theta)\|_1 \le 2d$ in place of
$2k$ and the union bound taken over all $d$ coordinates.

\begin{lemma}
\label{lem:aggregated_score_upper_bound}
Under the assumptions of Theorem~\ref{thm:user_level_dense_lower_bound}, let $\mathcal{A}$ take
values in $\Theta_{\mathrm{box}}$ and satisfy user-level $(\epsilon, \delta)$-DP with
$0 < \epsilon < 1$, and let $L = \log(2d/\delta) \le m$. Then
\begin{equation*}
    \sum_{i=1}^n \mathbb{E}\big[ A_i \big]
    \le 2 n \epsilon \sqrt{\mathbb{E}\|\mathcal{A}(Z) - \theta\|_2^2}
    \sqrt{\frac{m C c_2}{c(\sigma)}}
    + C_1 \, n \delta d \kappa_x \sqrt{\frac{c_2}{c(\sigma)}} \sqrt{m L} ,
\end{equation*}
where $C_1$ is an absolute constant.
\end{lemma}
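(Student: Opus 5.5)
The proof follows Lemma~\ref{lem:sparse_upper_bound} line by line, with $T = [d]$ and a union bound over all $d$ coordinates.

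\textbf{Privacy and soundness term.} Apply Lemma~\ref{lem:dp_score_upper_bound} with $T = [d]$. Its proof uses only user-level $(\epsilon,\delta)$-DP, $e^{\pm\epsilon}$ bounds, and the zero mean of $A_i'$ from Proposition~\ref{prop:user_level_score_attack_properties}, so it carries over unchanged and gives, for each $i$ and every $t_0 > 0$,
\[
\mathbb{E}[A_i] \le 2\epsilon\, \mathbb{E}|A_i'| + 2\delta t_0 + \int_{t_0}^\infty \mathbb{P}[|A_i| > t]\, dt .
\]
For the first term, the soundness part of Proposition~\ref{prop:user_level_score_attack_properties} with $\|\psi''\|_\infty \le c_2$ gives $\lambda_{\max}(\mathbb{E}[\psi''(x^\top\theta) x x^\top]) \le c_2 \lambda_{\max}(\Sigma) \le C c_2$. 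Hence $\mathbb{E}|A_i'| \le \sqrt{\mathbb{E}\|\mathcal{A}(Z)-\theta\|_2^2}\,\sqrt{mCc_2/c(\sigma)}$. This is the only place where the full-spectrum bound $\lambda_{\max}(\Sigma) \le C$ replaces the restricted bound used in the sparse case.

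\textbf{Tail term.} Write $U_{i,\ell} = \sum_{j} w_{i,j} x_{i,j,\ell}$ with $w_{i,j} = y_{i,j} - \psi'(x_{i,j}^\top\theta)$. Since $\mathcal{A}(Z)$ and $\theta$ lie in $\Theta_{\mathrm{box}}$, we have $\|\mathcal{A}(Z)-\theta\|_1 \le 2d$, so
\[
|A_i| \le \frac{2d}{c(\sigma)} \max_{\ell\in[d]} |U_{i,\ell}| .
\]
As before, $w$ is conditionally sub-Gaussian with proxy $c(\sigma)c_2$ and $\|x_\ell\|_{\psi_2} \le \kappa_x$. Each product $w x_\ell$ is therefore sub-exponential with norm $K = \mathcal{O}(\kappa_x\sqrt{c(\sigma)c_2})$. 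The $\Sigma_{\ell\ell}$ bound needed here follows from $\lambda_{\max}(\Sigma) \le C$, up to constants absorbed into $C_1$. Bernstein's inequality and a union bound over the $d$ coordinates give
\[
\mathbb{P}\Big[\max_\ell |U_{i,\ell}| > u\Big] \le 2d\, e^{-c\min\{u^2/(mK^2),\, u/K\}} .
\]
Set $a = 2dK/c(\sigma)$ and $t_0 = a(\sqrt{mL} + L)/\sqrt{c}$ with $L = \log(2d/\delta)$. Then $\mathbb{P}[|A_i| > t_0] \le \delta$, and integrating the tail gives $\mathcal{O}(a\delta(\sqrt{m/L} + 1))$. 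Using $L \le m$, the total $2\delta t_0 + \int_{t_0}^\infty$ is $\mathcal{O}(a\delta\sqrt{mL})$, which equals $\mathcal{O}(\delta d\kappa_x\sqrt{c_2/c(\sigma)}\sqrt{mL})$.

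\textbf{Summing and the main obstacle.} Summing over the $n$ users yields the stated bound. The only delicate point is the tail integral. One must check that the sub-exponential branch $u/K$ contributes only $\mathcal{O}(a\delta)$, which is absorbed because $L \le m$. As in the sparse lemma, this reduces to the elementary bound $\int_{t_0}^\infty 2d\, e^{-c\min\{\cdot\}}\, dt \lesssim a\delta(\sqrt{m/L} + 1)$, proved by splitting at $u = mK$.
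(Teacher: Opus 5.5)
Your proposal is correct and is essentially the paper's own proof. The paper likewise reruns Lemma~\ref{lem:sparse_upper_bound} with $T=[d]$: soundness plus $\lambda_{\max}(\Sigma)\le C$ for the privacy term, and $\|\mathcal{A}(Z)-\theta\|_1\le 2d$ with Bernstein and a union bound over all $d$ coordinates for the tail, taking $t_0\asymp d\kappa_x\sqrt{c_2/c(\sigma)}\sqrt{mL}$. One point in your favour: Assumption~\ref{asm:re} is not a hypothesis of Theorem~\ref{thm:user_level_dense_lower_bound}, so the diagonal bound must come from $\Sigma_{\ell\ell}\le\lambda_{\max}(\Sigma)\le C$, as you note; the paper's ``verbatim'' argument skips this, and it only makes $C_1$ depend on $\sqrt{C}$.
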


\begin{proof}[Proof of Lemma~\ref{lem:aggregated_score_upper_bound}]
The proof of Lemma~\ref{lem:sparse_upper_bound} applies verbatim with $T = [d]$. The soundness
part of Proposition~\ref{prop:user_level_score_attack_properties} and
$\lambda_{\max}(\Sigma) \le C$ give
$\mathbb{E}|A_i'| \le \sqrt{\mathbb{E}\|\mathcal{A}(Z) - \theta\|_2^2}
\sqrt{m C c_2 / c(\sigma)}$. For the tail, $\|(\mathcal{A}(Z) - \theta)\|_1 \le 2d$ because both
vectors lie in $\Theta_{\mathrm{box}}$, so
$|A_i| \le 2 d \max_{\ell \in [d]} |U_{i,\ell}| / c(\sigma)$ with
$U_{i,\ell} = \sum_{j=1}^m (y_{i,j} - \psi'(x_{i,j}^\top \theta)) x_{i,j,\ell}$. Bernstein's
inequality and a union bound over $\ell \in [d]$ give the same tail bound as there with $k$
replaced by $d$, and the choice
$t_0 \asymp d \kappa_x \sqrt{c_2/c(\sigma)} \, \sqrt{mL}$ in
Lemma~\ref{lem:dp_score_upper_bound} gives the claim.
\end{proof}

\begin{proof}[Proof of Theorem~\ref{thm:user_level_dense_lower_bound}]
We may assume that $\mathcal{A}$ takes values in $\Theta_{\mathrm{box}}$, since projecting onto
this set is a post-processing and does not increase the error at any
$\theta \in \Theta_{\mathrm{box}}$. Let
$R = \sup_{\theta \in \Theta_{\mathrm{box}}}
\mathbb{E}\|\mathcal{A}(Z) - \theta\|_2^2$ and $L = \log(2d/\delta)$.

Suppose first that $R > d/40$. The condition $n \sqrt{m} \, \epsilon \ge \Omega(\sqrt{d})$ gives
$d^2/(n^2 m \epsilon^2) \le \mathcal{O}(d)$, so $R$ already exceeds the bound claimed by the
theorem and there is nothing to prove. We may therefore assume $R \le d/40$, so that
Lemma~\ref{lem:stein_lower_bound} applies. As in the proof of
Theorem~\ref{thm:user_level_sparse_lower_bound}, the condition on $\delta$ together with
$m \ge \log(nm)$ gives $L \le m$ and
\begin{equation*}
    C_1 \, n \delta d \kappa_x \sqrt{\frac{c_2}{c(\sigma)}} \sqrt{m L} \le \frac{d}{4} .
\end{equation*}
Combining Lemma~\ref{lem:stein_lower_bound} with Lemma~\ref{lem:aggregated_score_upper_bound},
averaged over $\pi$,
\begin{equation*}
    \frac{d}{2} \le 2 n \epsilon \sqrt{R} \sqrt{\frac{m C c_2}{c(\sigma)}} + \frac{d}{4} ,
    \qquad \text{hence} \qquad
    R \ge \Omega\left( \frac{c(\sigma)}{C c_2} \cdot \frac{d^2}{n^2 m \epsilon^2} \right) .
\end{equation*}
Adding the non-private minimax rate $\Omega\big( c(\sigma) d / (nm) \big)$ for $N = nm$
samples, absorbing $C$, $c_2$ and $c(\sigma)$ into the constants and taking square roots gives
the statement.
\end{proof}

%% file: arxiv.bbl
\begin{thebibliography}{53}
\providecommand{\natexlab}[1]{#1}
\providecommand{\url}[1]{\texttt{#1}}
\expandafter\ifx\csname urlstyle\endcsname\relax
  \providecommand{\doi}[1]{doi: #1}\else
  \providecommand{\doi}{doi: \begingroup \urlstyle{rm}\Url}\fi

\bibitem[Acharya et~al.(2022)Acharya, Canonne, Tyagi, and Sun]{acharya2022role}
Jayadev Acharya, Cl{\'e}ment~L. Canonne, Himanshu Tyagi, and Ziteng Sun.
\newblock The role of interactivity in structured estimation.
\newblock In \emph{Proceedings of the 35th Conference on Learning Theory}, pp.\  1328--1355. PMLR, 2022.

\bibitem[Asi et~al.(2021)Asi, Feldman, Koren, and Talwar]{asi2021private}
Hilal Asi, Vitaly Feldman, Tomer Koren, and Kunal Talwar.
\newblock Private stochastic convex optimization: Optimal rates in $\ell_1$ geometry.
\newblock In \emph{Proceedings of the 38th International Conference on Machine Learning}, pp.\  393--403. PMLR, 2021.

\bibitem[Balcer \& Cheu(2019)Balcer and Cheu]{balcer2019separating}
Victor Balcer and Albert Cheu.
\newblock Separating local \& shuffled differential privacy via histograms.
\newblock \emph{arXiv preprint arXiv:1911.06879}, 2019.

\bibitem[Balle et~al.(2019)Balle, Bell, Gasc{\'o}n, and Nissim]{balle2019privacy}
Borja Balle, James Bell, Adri{\`a} Gasc{\'o}n, and Kobbi Nissim.
\newblock The privacy blanket of the shuffle model.
\newblock In \emph{Advances in Cryptology -- CRYPTO 2019}, pp.\  638--667. Springer, 2019.

\bibitem[Balle et~al.(2020)Balle, Bell, Gasc{\'o}n, and Nissim]{balle2020private}
Borja Balle, James Bell, Adri{\`a} Gasc{\'o}n, and Kobbi Nissim.
\newblock Private summation in the multi-message shuffle model.
\newblock In \emph{Proceedings of the 2020 ACM SIGSAC Conference on Computer and Communications Security}, pp.\  659--676, 2020.

\bibitem[Bassily et~al.(2019)Bassily, Feldman, Talwar, and Guha~Thakurta]{bassily2019private}
Raef Bassily, Vitaly Feldman, Kunal Talwar, and Abhradeep Guha~Thakurta.
\newblock Private stochastic convex optimization with optimal rates.
\newblock In \emph{Advances in Neural Information Processing Systems 32}. Curran Associates, Inc., 2019.

\bibitem[Bickel et~al.(2009)Bickel, Ritov, and Tsybakov]{bickel2009simultaneous}
Peter~J. Bickel, Ya'acov Ritov, and Alexandre~B. Tsybakov.
\newblock Simultaneous analysis of lasso and dantzig selector.
\newblock \emph{The Annals of Statistics}, 37\penalty0 (4):\penalty0 1705--1732, 2009.

\bibitem[Bun et~al.(2019)Bun, Nelson, and Stemmer]{bun2019heavy}
Mark Bun, Jelani Nelson, and Uri Stemmer.
\newblock Heavy hitters and the structure of local privacy.
\newblock \emph{ACM Transactions on Algorithms}, 15\penalty0 (4):\penalty0 1--40, 2019.

\bibitem[Cai et~al.(2021)Cai, Wang, and Zhang]{cai2021cost}
T.~Tony Cai, Yichen Wang, and Linjun Zhang.
\newblock The cost of privacy: Optimal rates of convergence for parameter estimation with differential privacy.
\newblock \emph{The Annals of Statistics}, 49\penalty0 (5):\penalty0 2825--2850, 2021.

\bibitem[Cai et~al.(2023)Cai, Wang, and Zhang]{cai2023score}
T.~Tony Cai, Yichen Wang, and Linjun Zhang.
\newblock Score attack: A lower bound technique for optimal differentially private learning.
\newblock \emph{arXiv preprint arXiv:2303.07152}, 2023.

\bibitem[Cand{\`e}s \& Tao(2007)Cand{\`e}s and Tao]{candes2007dantzig}
Emmanuel Cand{\`e}s and Terence Tao.
\newblock The dantzig selector: Statistical estimation when $p$ is much larger than $n$.
\newblock \emph{The Annals of Statistics}, 35\penalty0 (6):\penalty0 2313--2351, 2007.

\bibitem[Chaudhuri et~al.(2011)Chaudhuri, Monteleoni, and Sarwate]{chaudhuri2011differentially}
Kamalika Chaudhuri, Claire Monteleoni, and Anand~D. Sarwate.
\newblock Differentially private empirical risk minimization.
\newblock \emph{Journal of Machine Learning Research}, 12:\penalty0 1069--1109, 2011.

\bibitem[Cheu \& Ullman(2021)Cheu and Ullman]{cheu2021limits}
Albert Cheu and Jonathan Ullman.
\newblock The limits of pan privacy and shuffle privacy for learning and estimation.
\newblock In \emph{Proceedings of the 53rd Annual ACM Symposium on Theory of Computing (STOC)}, 2021.

\bibitem[Cheu et~al.(2019)Cheu, Smith, Ullman, Zeber, and Zhilyaev]{cheu2019distributed}
Albert Cheu, Adam Smith, Jonathan Ullman, Maxim Zeber, and Sergey Zhilyaev.
\newblock Distributed differential privacy via shuffling.
\newblock In \emph{Advances in Cryptology -- EUROCRYPT 2019}, volume 11476 of \emph{Lecture Notes in Computer Science}, pp.\  375--403. Springer, 2019.

\bibitem[Daniely \& Feldman(2019)Daniely and Feldman]{daniely2019locally}
Amit Daniely and Vitaly Feldman.
\newblock Locally private learning without interaction requires separation.
\newblock In \emph{Advances in Neural Information Processing Systems 32}. Curran Associates, Inc., 2019.

\bibitem[Ding et~al.(2024)Ding, Lei, Zhu, Wang, Wang, and Xu]{ding2024revisiting}
Meng Ding, Mingxi Lei, Liyang Zhu, Shaowei Wang, Di~Wang, and Jinhui Xu.
\newblock Revisiting differentially private relu regression.
\newblock \emph{Advances in Neural Information Processing Systems}, 37:\penalty0 55470--55506, 2024.

\bibitem[Ding et~al.(2026)Ding, Lei, Zhang, Liu, and Wang]{ding2026differentially}
Meng Ding, Mingxi Lei, Jie Zhang, Jinyan Liu, and Di~Wang.
\newblock Differentially private sparse reward estimation with preference feedback.
\newblock \emph{arXiv preprint arXiv:2610.30099}, 2026.

\bibitem[Duchi et~al.(2013)Duchi, Jordan, and Wainwright]{duchi2013local}
John Duchi, Michael Jordan, and Martin Wainwright.
\newblock Local privacy and statistical minimax rates.
\newblock In \emph{Proceedings of the 54th Annual IEEE Symposium on Foundations of Computer Science (FOCS)}, pp.\  429--438, Berkeley, CA, USA, 2013. IEEE.
\newblock \doi{10.1109/FOCS.2013.53}.
\newblock URL \url{https://arxiv.org/abs/1302.3203}.

\bibitem[Duchi et~al.(2014)Duchi, Jordan, and Wainwright]{duchi_privacy_2014}
John~C. Duchi, Michael~I. Jordan, and Martin~J. Wainwright.
\newblock Privacy {Aware} {Learning}.
\newblock \emph{Journal of the ACM}, 61\penalty0 (6):\penalty0 1--57, 2014.

\bibitem[Dwork \& Roth(2014)Dwork and Roth]{dwork2014algorithmic}
Cynthia Dwork and Aaron Roth.
\newblock \emph{The Algorithmic Foundations of Differential Privacy}.
\newblock Now Publishers Inc., 2014.

\bibitem[Dwork et~al.(2006)Dwork, McSherry, Nissim, and Smith]{dwork2006calibrating}
Cynthia Dwork, Frank McSherry, Kobbi Nissim, and Adam Smith.
\newblock Calibrating noise to sensitivity in private data analysis.
\newblock In \emph{Theory of Cryptography Conference}, pp.\  265--284. Springer, 2006.

\bibitem[Erlingsson et~al.(2019)Erlingsson, Feldman, Mironov, Raghunathan, Talwar, and Thakurta]{erlingsson2019amplification}
{\'U}lfar Erlingsson, Vitaly Feldman, Ilya Mironov, Ananth Raghunathan, Kunal Talwar, and Abhradeep Thakurta.
\newblock Amplification by shuffling: From local to central differential privacy via anonymity.
\newblock In \emph{Proceedings of the Thirtieth Annual ACM-SIAM Symposium on Discrete Algorithms}, pp.\  2468--2479. SIAM, 2019.

\bibitem[Feldman et~al.(2020)Feldman, Koren, and Talwar]{feldman2020private}
Vitaly Feldman, Tomer Koren, and Kunal Talwar.
\newblock Private stochastic convex optimization: Optimal rates in linear time.
\newblock In \emph{Proceedings of the 52nd Annual ACM SIGACT Symposium on Theory of Computing}, pp.\  439--449, 2020.

\bibitem[Ghazi et~al.(2020)Ghazi, Manurangsi, Pagh, and Velingker]{ghazi2020private}
Badih Ghazi, Pasin Manurangsi, Rasmus Pagh, and Ameya Velingker.
\newblock Private aggregation from fewer anonymous messages.
\newblock In \emph{Advances in Cryptology -- EUROCRYPT 2020}, pp.\  798--827. Springer, 2020.

\bibitem[Girgis \& Diggavi(2024)Girgis and Diggavi]{girgis2024multi}
Antonious~M. Girgis and Suhas Diggavi.
\newblock Multi-message shuffled privacy in federated learning.
\newblock \emph{IEEE Journal on Selected Areas in Information Theory}, 5:\penalty0 12--27, 2024.

\bibitem[Hu et~al.(2022)Hu, Ni, Xiao, and Wang]{hu2022high}
Lijie Hu, Shuo Ni, Hanshen Xiao, and Di~Wang.
\newblock High dimensional differentially private stochastic optimization with heavy-tailed data.
\newblock In \emph{Proceedings of the 41st ACM SIGMOD-SIGACT-SIGAI Symposium on Principles of Database Systems}, pp.\  227--236, 2022.

\bibitem[Jain et~al.(2014)Jain, Tewari, and Kar]{jain2014iterative}
Prateek Jain, Ambuj Tewari, and Purushottam Kar.
\newblock On iterative hard thresholding methods for high-dimensional m-estimation.
\newblock In \emph{Advances in Neural Information Processing Systems 27}. Curran Associates, Inc., 2014.

\bibitem[Kamath et~al.(2020)Kamath, Singhal, and Ullman]{kamath2020private}
Gautam Kamath, Vikrant Singhal, and Jonathan Ullman.
\newblock Private mean estimation of heavy-tailed distributions.
\newblock In \emph{Proceedings of the 33rd Conference on Learning Theory}, pp.\  2204--2235. PMLR, 2020.

\bibitem[Kamath et~al.(2022)Kamath, Liu, and Zhang]{kamath2022improved}
Gautam Kamath, Xingtu Liu, and Huanyu Zhang.
\newblock Improved rates for differentially private stochastic convex optimization with heavy-tailed data.
\newblock In \emph{Proceedings of the 39th International Conference on Machine Learning}, pp.\  10633--10660. PMLR, 2022.

\bibitem[Kasiviswanathan et~al.(2011)Kasiviswanathan, Lee, Nissim, Raskhodnikova, and Smith]{kasiviswanathan2011what}
Shiva~Prasad Kasiviswanathan, Homin~K. Lee, Kobbi Nissim, Sofya Raskhodnikova, and Adam Smith.
\newblock What can we learn privately?
\newblock \emph{SIAM Journal on Computing}, 40\penalty0 (3):\penalty0 793--826, 2011.

\bibitem[Kifer et~al.(2012)Kifer, Smith, and Thakurta]{kifer_private_2012}
Daniel Kifer, Adam Smith, and Abhradeep Thakurta.
\newblock Private {Convex} {Empirical} {Risk} {Minimization} and {High}-dimensional {Regression}.
\newblock In \emph{Conference on {Learning} {Theory}}, pp.\  25.1--25.40. JMLR Workshop and Conference Proceedings, 2012.

\bibitem[Liu et~al.(2023)Liu, Jain, Kong, Oh, and Suggala]{liu2023near}
Xiyang Liu, Prateek Jain, Weihao Kong, Sewoong Oh, and Arun~Sai Suggala.
\newblock Near optimal private and robust linear regression.
\newblock \emph{arXiv preprint arXiv:2301.13273}, 2023.

\bibitem[Ma et~al.(2024)Ma, Jia, and Yang]{ma2024better}
Yuheng Ma, Ke~Jia, and Hanfang Yang.
\newblock Better locally private sparse estimation given multiple samples per user.
\newblock In \emph{Proceedings of the 41st International Conference on Machine Learning (ICML)}. PMLR, 2024.

\bibitem[Negahban et~al.(2012)Negahban, Ravikumar, Wainwright, and Yu]{negahban2012unified}
Sahand~N. Negahban, Pradeep Ravikumar, Martin~J. Wainwright, and Bin Yu.
\newblock A unified framework for high-dimensional analysis of $m$-estimators with decomposable regularizers.
\newblock \emph{Statistical Science}, 27\penalty0 (4):\penalty0 538--557, 2012.

\bibitem[Raskutti et~al.(2010)Raskutti, Wainwright, and Yu]{raskutti2010restricted}
Garvesh Raskutti, Martin~J. Wainwright, and Bin Yu.
\newblock Restricted eigenvalue properties for correlated gaussian designs.
\newblock \emph{Journal of Machine Learning Research}, 11:\penalty0 2241--2259, 2010.

\bibitem[Raskutti et~al.(2011)Raskutti, Wainwright, and Yu]{raskutti2011minimax}
Garvesh Raskutti, Martin~J. Wainwright, and Bin Yu.
\newblock Minimax rates of estimation for high-dimensional linear regression over $\ell_q$ balls.
\newblock \emph{IEEE Transactions on Information Theory}, 57\penalty0 (10):\penalty0 6976--6994, 2011.

\bibitem[Rudelson \& Zhou(2013)Rudelson and Zhou]{rudelson2013reconstruction}
Mark Rudelson and Shuheng Zhou.
\newblock Reconstruction from anisotropic random measurements.
\newblock \emph{IEEE Transactions on Information Theory}, 59\penalty0 (6):\penalty0 3434--3447, 2013.

\bibitem[Smith et~al.(2017)Smith, Thakurta, and Upadhyay]{smith_is_2017}
Adam Smith, Abhradeep Thakurta, and Jalaj Upadhyay.
\newblock Is {Interaction} {Necessary} for {Distributed} {Private} {Learning}?
\newblock In \emph{2017 {IEEE} {Symposium} on {Security} and {Privacy} ({SP})}, pp.\  58--77. IEEE, 2017.

\bibitem[Sun et~al.(2020)Sun, Zhou, and Fan]{sun2020adaptive}
Qiang Sun, Wen-Xin Zhou, and Jianqing Fan.
\newblock Adaptive huber regression.
\newblock \emph{Journal of the American Statistical Association}, 115\penalty0 (529):\penalty0 254--265, 2020.

\bibitem[Talwar et~al.(2015)Talwar, Guha~Thakurta, and Zhang]{talwar_nearly_2015}
Kunal Talwar, Abhradeep Guha~Thakurta, and Li~Zhang.
\newblock Nearly {Optimal} {Private} {LASSO}.
\newblock In \emph{Advances in {Neural} {Information} {Processing} {Systems}}, volume~28. Curran Associates, Inc., 2015.

\bibitem[Tian et~al.(2025)Tian, Ding, Tao, Xiang, and Wang]{tian2025differentially}
Xizhi Tian, Meng Ding, Touming Tao, Zihang Xiang, and Di~Wang.
\newblock Differentially private sparse linear regression with heavy-tailed responses.
\newblock In \emph{Machine Learning and Knowledge Discovery in Databases: Research Track (ECML PKDD)}, pp.\  363--379. Springer, 2025.

\bibitem[van~de Geer \& B{\"u}hlmann(2009)van~de Geer and B{\"u}hlmann]{vandegeer2009conditions}
Sara~A. van~de Geer and Peter B{\"u}hlmann.
\newblock On the conditions used to prove oracle results for the lasso.
\newblock \emph{Electronic Journal of Statistics}, 3:\penalty0 1360--1392, 2009.

\bibitem[Varshney et~al.(2022)Varshney, Thakurta, and Jain]{varshney2022nearly}
Prateek Varshney, Abhradeep Thakurta, and Prateek Jain.
\newblock (nearly) optimal private linear regression via adaptive clipping.
\newblock \emph{arXiv preprint arXiv:2207.04686}, 2022.

\bibitem[Wainwright(2009)]{wainwright2009sharp}
Martin~J. Wainwright.
\newblock Sharp thresholds for high-dimensional and noisy sparsity recovery using $\ell_1$-constrained quadratic programming ({Lasso}).
\newblock \emph{IEEE Transactions on Information Theory}, 55\penalty0 (5):\penalty0 2183--2202, 2009.
\newblock \doi{10.1109/TIT.2009.2016018}.

\bibitem[Wang \& Xu(2020)Wang and Xu]{wang2020sparse}
Di~Wang and Jinhui Xu.
\newblock On sparse linear regression in the local differential privacy model.
\newblock \emph{IEEE Transactions on Information Theory}, 67\penalty0 (2):\penalty0 1182--1200, 2020.

\bibitem[Wang et~al.(2020{\natexlab{a}})Wang, Gaboardi, Smith, and Xu]{wang2020empirical}
Di~Wang, Marco Gaboardi, Adam Smith, and Jinhui Xu.
\newblock Empirical risk minimization in the non-interactive local model of differential privacy.
\newblock \emph{Journal of machine learning research}, 21\penalty0 (200):\penalty0 1--39, 2020{\natexlab{a}}.

\bibitem[Wang et~al.(2020{\natexlab{b}})Wang, Xiao, Devadas, and Xu]{wang2020differentially}
Di~Wang, Hanshen Xiao, Srinivas Devadas, and Jinhui Xu.
\newblock On differentially private stochastic convex optimization with heavy-tailed data.
\newblock In \emph{Proceedings of the 37th International Conference on Machine Learning}, pp.\  10081--10091. PMLR, 2020{\natexlab{b}}.

\bibitem[Wang et~al.(2021)Wang, Zhang, Gaboardi, and Xu]{wang2021estimating}
Di~Wang, Huangyu Zhang, Marco Gaboardi, and Jinhui Xu.
\newblock Estimating smooth glm in non-interactive local differential privacy model with public unlabeled data.
\newblock In \emph{Proceedings of the 32nd International Conference on Algorithmic Learning Theory}, pp.\  1207--1213. PMLR, 2021.

\bibitem[Wang et~al.(2023)Wang, Hu, Zhang, Gaboardi, and Xu]{wang2023generalized}
Di~Wang, Lijie Hu, Huanyu Zhang, Marco Gaboardi, and Jinhui Xu.
\newblock Generalized linear models in non-interactive local differential privacy with public data.
\newblock \emph{Journal of Machine Learning Research}, 24\penalty0 (132):\penalty0 1--57, 2023.

\bibitem[Yang et~al.(2014)Yang, Lozano, and Ravikumar]{yang2014elementary}
Eunho Yang, Aurelie Lozano, and Pradeep Ravikumar.
\newblock Elementary estimators for high-dimensional linear regression.
\newblock In \emph{Proceedings of the 31st International Conference on Machine Learning}, pp.\  388--396. PMLR, 2014.

\bibitem[Zheng et~al.(2017)Zheng, Mou, and Wang]{zheng_collect_2017}
Kai Zheng, Wenlong Mou, and Liwei Wang.
\newblock Collect at {Once}, {Use} {Effectively}: {Making} {Non}-interactive {Locally} {Private} {Learning} {Possible}.
\newblock In \emph{Proceedings of the 34th {International} {Conference} on {Machine} {Learning}}, pp.\  4130--4139. PMLR, 2017.

\bibitem[Zhu et~al.(2024{\natexlab{a}})Zhu, Ding, Aggarwal, Xu, and Wang]{zhu2023improved}
Liyang Zhu, Meng Ding, Vaneet Aggarwal, Jinhui Xu, and Di~Wang.
\newblock Improved analysis of sparse linear regression in local differential privacy model.
\newblock In \emph{The Twelfth International Conference on Learning Representations (ICLR)}, 2024{\natexlab{a}}.

\bibitem[Zhu et~al.(2024{\natexlab{b}})Zhu, Manseur, Ding, Liu, Xu, and Wang]{zhu2024truthful}
Liyang Zhu, Amina Manseur, Meng Ding, Jinyan Liu, Jinhui Xu, and Di~Wang.
\newblock Truthful high dimensional sparse linear regression.
\newblock \emph{Advances in Neural Information Processing Systems}, 37:\penalty0 66995--67033, 2024{\natexlab{b}}.

\end{thebibliography}
